\documentclass[11pt]{article}

\usepackage{amssymb,amsfonts,amsmath,amsthm,amscd,dsfont,mathrsfs}
\usepackage{graphicx,float,psfrag,epsfig}
\usepackage{wrapfig}
\usepackage{pgf,pgfarrows,pgfnodes}

\DeclareMathAlphabet{\mathpzc}{OT1}{pzc}{m}{it}


\voffset 0in
\hoffset 0in
\topmargin -0.4in  
\headsep 0.4in  
\textheight 8.5in
\oddsidemargin 0.in  
\evensidemargin 0.in  
\textwidth 6.5in

\newtheorem{propo}{Proposition}[section]
\newtheorem{lemma}[propo]{Lemma}
\newtheorem{coro}[propo]{Corollary}
\newtheorem{thm}[propo]{Theorem}

\def\hm{{\hat m}}
\def\hv{{\hat v}}
\def\hl{{\hat \ell}}
\def\hr{{\hat r}}

\def\hx{{\hat x}}

\def\tx{{\widetilde x}}
\def\ty{{\widetilde y}}

\def\bp{{\bar p}}
\def\Bp{{\bf p}}
\def\Bx{{\bf x}}

\def\By{{\bf y}}

\def\Bz{{\bf z}}

\def\BA{{\bf A}}
\def\BZ{{\bf Z}}

\def\cN{{\cal N}}

\def\cF{{\cal F}}

\def\hBx{{\hat {\bf  x}}}

\def\bBp{{\bf{\bar p}}}

\def\cT{{\cal T}}
\def\ctT{{\cal \widetilde{T}}}

\def\reals{{\mathbb R}}
\def\prob{{\mathbb P}}
\def\Z{\mathbb Z}
\def\E{\mathbb E}
\def\sign{{{\rm sign}}}

\def\ind{{\mathbb I}}
\def\ones{{\mathds 1}}

\def\Gauss{{\cal N}}

\def\taskdegree{\ell}
\def\workerdegree{r}
\def\ntask{m}
\def\nworker{n}
\def\batch{T}
\def\estimate{\hat t}
\def\task{t}
\def\quality{q}
\def\meanquality{\mu}

\def\conditionalerror{{\cal E}}
\def\crowddist{{\cal F}}
\def\bpdist{F}

\def\tsigma{\tilde{\sigma}}
\def\error{\varepsilon}
\def\bI{{\bf I}}
\def\bG{{\bf G}}

	\definecolor{blue0}{RGB}{30,140,240}
	\definecolor{blue1}{RGB}{24,100,200}
	\definecolor{blue2}{RGB}{0,178,238}
	\definecolor{blue3}{RGB}{16,78,139}
	\definecolor{blue4}{RGB}{0,205,205}

\begin{document}


\title{Budget-Optimal Task Allocation \\ for Reliable Crowdsourcing Systems} 
\author{
{David R. Karger\thanks{Computer Science and Artificial Intelligence Laboratory
 and Department of EECS at Massachusetts Institute of Technology. Email: karger@mit.edu}, 
Sewoong Oh\thanks{Department of Industrial and Enterprise Systems Engineering at University of Illinois at Urbana-Champaign. Email: swoh@illinois.edu}, 
and Devavrat Shah\thanks{Laboratory for Information 
and Decision Systems and Department of EECS at Massachusetts Institute of Technology. Email: 
devavrat@mit.edu. This work was supported in 
parts by NSF EMT project, AFOSR Complex Networks project and Army Research Office under MURI
Award 58153-MA-MUR.} }\\
}

\date{\today}

\maketitle

\begin{abstract}

Crowdsourcing systems, in which numerous tasks are electronically
distributed to numerous ``information piece-workers'', have emerged as
an effective paradigm for human-powered solving of large scale
problems in domains such as image classification, data entry, optical
character recognition, recommendation, and proofreading.  Because
these low-paid workers can be unreliable, nearly all such systems must
devise schemes to increase confidence in their answers, typically by
assigning each task multiple times and combining the answers in an appropriate manner, e.g. majority voting. 

In this paper, we consider a general model of such crowdsourcing tasks
and pose the problem of minimizing the total price (i.e., number of
task assignments) that must be paid to achieve a target overall
reliability.  We give a new algorithm for deciding which tasks to
assign to which workers and for inferring correct answers from the
workers' answers.  We show that our algorithm, 
inspired by belief propagation and low-rank
matrix approximation, significantly outperforms majority voting and, in
fact, is optimal through comparison to an oracle that
knows the reliability of every worker. 
Further, we compare our approach with a more general class of algorithms 
which can dynamically assign tasks. 
By adaptively deciding which questions to ask to the next arriving worker, 
one might hope to reduce uncertainty more efficiently. 
We show that, perhaps surprisingly, the minimum price necessary to  
achieve a target reliability scales in the same manner under both adaptive and non-adaptive scenarios. 
Hence, our non-adaptive approach is order-optimal under both scenarios.
This strongly relies on the fact that workers are fleeting and can not be exploited. 
Therefore, architecturally, our results suggest that building a reliable worker-reputation system 
is essential to fully harnessing the potential of adaptive designs.

\end{abstract}

\newpage

%
%
\section{Introduction}
\label{sec:introduction}

\noindent {\bf Background.} 
Crowdsourcing systems have emerged as an
effective paradigm for human-powered problem solving and are now in
widespread use for large-scale data-processing tasks such as image
classification, video annotation, form data entry, optical character
recognition, translation, recommendation, and proofreading.
Crowdsourcing systems such as Amazon Mechanical
Turk\footnote{http://www.mturk.com}, 
establish a market where a ``taskmaster'' can submit batches of small tasks to be
completed for a small fee by any worker choosing to pick them up.  For
example a worker may be able to earn a few cents by indicating which
images from a set of 30 are suitable for children (one of the benefits
of crowdsourcing is its applicability to such highly subjective questions).

Because these crowdsourced tasks are tedious and the pay is low, errors are common
even among workers who make an effort. 
At the extreme, some workers are ``spammers'', submitting arbitrary answers independent of the
question in order to collect their fee.  Thus, all crowdsourcers need
strategies to ensure the reliability of their answers.  
When the system allows the crowdsourcers to identify and reuse particular workers, 
a common approach is to manage a pool of reliable workers in an explore/exploit fashion. 
However in many crowdsourcing platforms such as Amazon Mechanical Turk, 
the worker crowd is large, anonymous, and transient, and 
it is generally difficult to build up a trust relationship with particular workers.\footnote{For
  certain high-value tasks, crowdsourcers can use entrance exams to
  ``prequalify'' workers and block spammers, but this increases the
  cost of the task and still provides no guarantee that the workers
  will try hard after qualification.}  It is also difficult to
condition payment on correct answers, as the correct answer may never
truly be known and delaying payment can annoy workers and make it
harder to recruit them to your task next time.  Instead, most crowdsourcers
resort to redundancy, giving each task to multiple workers, paying
them all irrespective of their answers, 
and aggregating the results by some method such
as majority voting. 

For such systems there is a natural core optimization problem to be
solved.  Assuming the taskmaster wishes to achieve a certain
reliability in her answers, how can she do so at minimum cost
(which is equivalent to asking how she can do so while asking the
fewest possible questions)?  

Several characteristics of crowdsourcing systems make this problem
interesting.  Workers are neither persistent nor identifiable; each
batch of tasks will be solved by a worker who may be completely new
and who you may never see again.  Thus one cannot identify and reuse
particularly reliable workers.  
Nonetheless, by comparing one worker's answer to others' on the same
question, it is possible to draw conclusions about a worker's
reliability, which can be used to weight their answers to other
questions in their batch. 
However, batches must be of manageable size,
obeying limits on the number of tasks that can be given to a single worker.

Another interesting aspect of this problem is the \emph{choice of task assignments}. 
Unlike many inference tasks which makes inferences 
based on a fixed set of signals, our algorithm can choose which
signals to measure by deciding which questions to include in which batches.
In addition, there are several plausible options: for example, we might
choose to ask a few ``pilot questions'' to each worker (just like
a qualifying exam) to decide on the reliability of the worker. 
Another possibility is to first ask few questions and based on the answers decide
to ask more questions or not.  We would like to understand the role of
all such variations in the overall optimization of budget for reliable task
processing. 

In the remainder of this section, we will define a formal probabilistic model
that captures these aspects of the problem. 
We consider both a \emph{non-adaptive scenario}, in which 
all questions are asked simultaneously and 
all the responses are collected simultaneously, 
and an {\em adaptive scenario}, 
in which one may adaptively choose which tasks to assign to the next arriving worker 
based on all the previous answers collected thus far. 
We provide a non-adaptive task allocation scheme and 
an inference algorithm based on low-rank matrix approximations and belief propagation. 
We will then show that 
our algorithm is order-optimal: 
for a given target error rate, it spends only a constant factor times 
the minimum necessary to achieve that error rate. 
The optimality is established through comparisons to 
the best possible {\em non-adaptive} task allocation scheme and  
an oracle estimator that can make optimal decisions based on extra information provided by an oracle. 
In particular, we derive a parameter $\quality$ that characterizes the `collective' reliability of the crowd, 
and show that to achieve target reliability $\error$, it is both 
necessary and sufficient to replicate each task $\Theta(1/\quality\log(1/\error))$ times.
This leads to the next question of interest: 
by using adaptive task assignment, can we ask fewer questions and still achieve the same error rate? 
We, somewhat surprisingly, show that the optimal costs under this 
adaptive scenario scale in the same manner as the non-adaptive scenario; 
asking questions adaptively does not help! 

\vspace{0.2cm} 
\noindent {\bf Setup.} We consider the following probabilistic model for crowdsourcing. 
There is a set of $\ntask$ binary tasks 
which is associated with unobserved `correct' solutions: $\{\task_i\}_{i\in[\ntask]}\in\{\pm1\}^\ntask$. 
Here and after, we use $[N]$ to denote the set of first $N$ integers. 
In the image categorization example stated earlier, a set of tasks corresponds to 
labeling $\ntask$ images as suitable for children $(+1)$ or not $(-1)$.  
We will be interested in finding the true solutions by querying
noisy workers who arrive one at a time in an on-line fashion. 

An algorithmic solution to crowdsourcing consists of two components: 
a task allocation scheme and an inference algorithm. 
At {\em task allocation phase} queries are made sequentially according to the following rule. 
At $j$-th step, the task assignment scheme chooses a subset $\batch_j\subseteq[m]$ 
of tasks to be assigned to the next arriving noisy worker. 
{ The only constraint on the choice of the batch is that 
the size $|\batch_j|$ must obey some limit on the number of tasks that can be given to a single worker.
Let $\workerdegree$ denote such a limit on the number of tasks that can be assigned to a single worker, 
such that all batches must satisfy $|T_j|\leq\workerdegree$.}
Then, a worker $j$ arrives, whose latent reliability is parametrized by $p_j\in[0,1]$. 
For each assigned task, this worker gives a noisy answer such that
\begin{eqnarray*}
  \label{eq:defA}
  \BA_{ij} &=& \left\{ 
      \begin{array}{rl} \task_i &\text{ with probability }p_j \;,\\
      -\task_i &\text{ otherwise}\;, \end{array}
	      \right.
\end{eqnarray*}
and $\BA_{ij}=0$ if $i\notin \batch_j$. 
(Throughout this paper, we use boldface characters 
to denote random variables and random matrices 
unless it is clear from the context.) 
The next assignment $\batch_{j+1}$ can be chosen adaptively,
taking into account all of the previous assignments and the answers collected thus far. 
This process is repeated until the task assignment scheme decides to stop, 
typically when the total number of queries meet a certain budget constraint. 
Then, in the subsequent {\em inference phase}, an inference algorithm makes a final estimation of the true answers.

We say a task allocation scheme is {\em adaptive} if the choice of $\batch_j$ depends on the 
answers collected on previous steps, and it is {\em non-adaptive} if it does not depend on the answers. 
In practice, one might prefer using a non-adaptive scheme, since 
assigning all the batches simultaneously and having all 
the batches of tasks processed in parallel reduces latency. 
However, by switching to an adaptive task allocation, one might be able to 
reduce uncertainty more efficiently. 
We investigate this possibility in Section~\ref{sec:minimax_adaptive}, and 
show that the gain from adaptation is limited. 

Note here that at the time of assigning tasks $\batch_j$ for a next arriving worker $j$, 
the algorithm is not aware of the latent reliability of the worker. 
This is consistent with how real-world crowdsourcing works, 
since taskmasters typically have no choice over which worker is going to pick up which batch of tasks.
Further, we make the pessimistic assumption that 
workers are neither persistent nor identifiable; each batch of tasks $\batch_j$ will be solved by a
worker who may be completely new and who you may never see again. 
Thus one cannot identify and
reuse particularly reliable workers. This is a different setting from adaptive games \cite{LW89}, 
where you have a sequence of trials and a set of predictions is made at each step by a pool of experts. 
In adaptive games, you can identify reliable experts from their past performance 
using techniques like multiplicative weights, 
whereas in crowdsourcing you cannot hope to exploit any particular worker.

The latent variable $p_j$ captures how some workers are more diligent or have more expertise than others, 
while some other workers might be trying to cheat. 
The random variable $\BA_{ij}$ is independent of any other event given $p_j$.  
The underlying assumption here is that 
the error probability of a worker does not depend on the particular task and 
all the tasks share an equal level of difficulty. 
Hence, each worker's performance is consistent across different tasks. 
We discuss a possible generalization of this model in Section~\ref{sec:discuss}.

We further assume that the reliability of workers 
$\{\Bp_j\}$ are independent 
and identically distributed random variables with a given distribution on $[0,1]$. 
As one example we define the {\em spammer-hammer model}, where  
each worker is either a `hammer' with probability $\quality$ or is a `spammer' with probability $1-\quality$. 
A hammer answers all questions correctly, meaning $\Bp_j=1$,  
and a spammer gives random answers, meaning $\Bp_j=1/2$. 
It should be noted that the meaning of a `spammer' might be different from 
its use in other literature. 
In this model, a spammer is a worker
who gives uniformly random labels independent of the true label. 
In other literature in crowdsourcing, 
the word `spammer' has been used, for instance, 
to refer to a worker who always gives `$+$' labels \cite{RY12}.
Another example is the beta distribution with some parameters $\alpha>0$ and $\beta>0$ 
($f(p)=p^{\alpha-1}(1-p)^{\beta-1}/B(\alpha,\beta)$ for a proper normalization $B(\alpha,\beta)$) \cite{Holmes,RYZ10}.
A distribution of $\Bp_j$ characterizes a crowd, 
and the following parameter plays an important role in
capturing the `collective quality' of this crowd, as will be clear from our main results:
\begin{eqnarray*}
	\quality &\equiv& \E[(2\Bp_j-1)^2] \;.
\end{eqnarray*}
A value of $\quality$ close to one indicates that a large proportion of the workers are diligent, 
whereas $\quality$ close to zero indicates that there are many spammers in the crowd. 
The definition of $\quality$ is consistent with use of $\quality$ in the spammer-hammer model   
and in the case of beta distribution, $\quality = 1-(4\alpha\beta/((\alpha+\beta)(\alpha+\beta+1)))$.
We will see later that our bound on the achievable error rate 
depends on the distribution only through this parameter $\quality$. 

When the crowd population is large enough such that we do not need to distinguish 
whether the workers are `sampled' with or without replacement, then 
it is quite realistic to assume the existence of a prior distribution for $\Bp_j$. 
In particular, it is met if we simply
randomize the order in which we upload our task batches, since this
will have the effect of randomizing which workers perform which
batches, yielding a distribution that meets our requirements. 
The model is therefore quite general. 
On the other hand, it is not realistic to assume that we know what the prior is. 
To execute our inference algorithm for a given number of iterations, 
we do not require any knowledge of the distribution of the reliability. 
However, $\quality$ is necessary in order to determine 
how many times a task should be replicated 
and how many iterations we need to run to achieve a certain target reliability.  
We discuss a simple way to overcome this limitation in Section \ref{sec:theory}.

The only assumption we make about the distribution is that there is a bias towards the right answer, i.e. 
$\E[\Bp_j]>1/2$. Without this assumption, 
we can have a `perfect' crowd with $\quality = 1$, but everyone is adversarial, $\Bp_j = 0$.
Then, there is no way we can correct for this. 
Another way to justify this assumption is to define
the ``ground truth'' of the tasks as what the majority of the crowd agrees on. 
We want to learn this consensus efficiently without having to query everyone in the crowd for every task.
{ If we use this definition of the ground truth, 
then it naturally follows that the workers are on average more likely to be correct.} 

Throughout this paper, we are going to assume that 
there is a fixed cost you need to pay for each response you get regardless of the quality of the response, 
such that the total cost is proportional to the total number of queries. 
When we have a given target accuracy we want to achieve, and 
under the probabilistic crowdsourcing model described in this section, 
we want to design a task allocation scheme 
and an inference algorithm that can 
achieve this target accuracy with minimal cost. 

%
%
\vspace{0.2cm}
\noindent {\bf Possible deviations from our model.} 
Some of the main assumptions we make on how crowdsourcing systems work are 
$(a)$ workers are neither identifiable nor reusable, 
$(b)$ every worker is paid the same amount regardless of their performance,  
and $(c)$ each worker completes only one batch and she completes all the tasks in that batch. 
In this section, we discuss common strategies used 
in real crowdsourcing that might deviate from these assumptions.

First, there has been growing interest recently 
in developing algorithms to efficiently identify good workers assuming that 
worker identities are known and {\em workers are reusable}. 
Imagine a crowdsourcing platform where there are a fixed pool of identifiable workers 
and we can assign the tasks to whichever worker  we choose to. 
In this setting, adaptive schemes can be used to
significantly improve the accuracy while minimizing the total number of queries. It is natural to
expect that by first exploring to find better workers and then exploiting them in the following rounds,
one might be able to improve performance significantly. 
Donmez et al. \cite{DCS09} proposed IEThresh
which simultaneously estimates worker accuracy and actively selects a subset of workers with high
accuracy. Zheng et al. \cite{ZSD10} proposed a two-phase approach to identify good workers in the first
phase and utilize the best subset of workers in the second phase. Ertekin et al. \cite{EHR11} proposed using
a weighted majority voting to better estimate the true labels in CrowdSense, which is then used to
identify good workers. 

The power of such exploration/exploitation approaches were demonstrated on numerical experiments, 
however none of these approaches are tested on real-world crowdsourcing.  
All the experiments are done using {\em pre-collected} datasets. 
Given these datasets they simulate a labor market 
where they can track and reuse any workers they choose to. 
The reason that the experiments are done on such simulated labor markets, 
instead of on popular crowdsourcing platforms such as Amazon Mechanical Turk, is that 
on real-world crowdsourcing platforms it is almost impossible to track workers. 
Many of the popular crowdsourcing platforms are completely open labor markets 
where the worker crowd is large and transient. 
Further, oftentimes it is the workers who choose which tasks they want to work on, 
hence the taskmaster cannot reuse particular workers. 
For these reasons, we assume in this paper that the workers are fleeting 
and provide an algorithmic solution that works even when workers are not reusable. 
We show that any taskmaster who wishes to outperform our algorithm must adopt 
complex worker-tracking techniques. 
Furthermore, no worker-tracking technique has been developed that 
has been proven to be foolproof. 
In particular, it is impossible to prevent a worker from starting over with a new account. 
Many tracking algorithms are susceptible to this attack.

Another important and closely related question that 
has not been formally addressed in crowdsourcing literature is 
how to {\em differentiate the payment} based on the inferred accuracy in order to incentivize good workers.
Regardless of whether the workers are identifiable or not, when all the tasks are completed we get an 
estimate of the quality of the workers. 
It would be desirable to pay the good workers more in order to incentivize them to work for us in the 
future tasks. 
For example, bonuses are built into Amazon Mechanical Turk 
to be granted at the taskmaster's discretion, 
but it has not been studied how to use bonuses optimally.  
This could be an interesting direction for future research.
  
It has been observed that 
increasing the cost on crowdsourcing platforms does not 
directly lead to higher quality of the responses \cite{MW09}. 
Instead, increasing the cost only leads to faster responses. 
Mason and Watts \cite{MW09} attributes this counterintuitive findings to an ``anchoring'' effect. 
When the (expected) payment is higher, workers perceive the value of their work to be greater as well. 
Hence, they are no more motivated than workers who are paid less. 
However, these studies were done in isolated experiments, 
and the long term effect of taskmasters' keeping a good reputation still needs to be understood. 
Workers of Mechanical Turk can manage reputation of the taskmasters using for instance 
Turkopticon\footnote{http://turkopticon.differenceengines.com}, 
a Firefox extension that allows you to rate taskmasters and 
view ratings from other workers.
Another example is Turkernation\footnote{http://turkernation.com}, 
an on-line forum where workers and taskmasters can discuss Mechanical Turk and leave feedback.

Finally, in Mechanical Turk, it is typically the workers who {\em choose 
which tasks they want to work on} and when they want to stop. 
Without any regulations, they might respond to multiple batches of your tasks 
or stop in the middle of a batch. 
It is possible to systematically prevent the same worker from coming back and 
repeating more than one batch of your tasks. 
For example, on Amazon's Mechanical Turk, 
a worker cannot repeat the same task more than once. 
However, it is difficult to guarantee that a worker completes all the tasks in a batch she started on. 
In practice, there are simple ways to ensure this by, for instance, conditioning 
the payment on completing all the tasks in a batch.

A problem with restricting the number of tasks assigned to each worker (as we propose in Section~\ref{sec:algorithm}) 
is that it might take a long time to have all the batches completed. 
Letting the workers choose how many tasks they want to complete 
allows a few eager workers to complete enormous amount of tasks. 
However, if we restrict the number of tasks assigned to each worker, 
we might need to recruit more workers to complete all the tasks. 
This problem of tasks taking long time to finish is not 
just restricted to our model, but is a very common problem 
in open crowdsourcing platforms. 
Ipeirotis \cite{Ipe10} studied the completion time of tasks on Mechanical Turk 
and observed that it follows a heavy tail distribution according to a power law.
Hence, for some tasks it takes significant amount of time to finish. 
A number of strategies have been proposed to complete tasks on time. 
This includes optimizing pricing policy \cite{FHI11}, 
continuously posting tasks to stay on the first page \cite{BJJ10,CHMA10}, 
and having a large amount of tasks available \cite{CHMA10}. 
These strategies are effective in attracting more workers fast. 
However, in our model, we assume there is no restrictions on the latency and 
we can wait until all the batches are completed, and if we have good strategies to reduce 
worker response time, such strategies could be incorporated  into our system design. 

%
%
\vspace{0.2cm}
\noindent {\bf Prior work.} 
Previous crowdsourcing system designs have focused on 
developing inference algorithms assuming that the task assignments are fixed and the workers' responses are already given.
None of the prior work on crowdsourcing provides any systematic treatment of task assignment  
under the crowdsourcing model considered in this paper. 
To the best of our knowledge, we are the first to study both aspects of crowdsourcing together 
and, more importantly, establish optimality. 

A naive approach to solve the inference problem, 
which is widely used in practice, is majority voting. 
Majority voting simply follows what the majority of workers agree on. 
When we have many spammers in the crowd,  majority voting is error-prone 
since it gives the same weight to all the responses, 
regardless of whether they are from a spammer or a diligent workers. 
We will show in Section \ref{sec:minimax_nonadaptive} that 
majority voting is provably sub-optimal 
and can be significantly improved upon.

If we know how reliable each worker is, then 
it is straightforward to find the maximum likelihood estimates: 
compute the weighted sum of the responses weighted by the log-likelihood. 
Although, in reality, we do not have this information,  
it is possible to learn about a worker's reliability 
by comparing one worker's answer to others'. 
This idea was first proposed by Dawid and Skene, 
who introduced an iterative algorithm based on expectation maximization (EM) \cite{DS79}.  
{ They considered the problem of classifying patients based on 
labels obtained from multiple clinicians. 
They introduce a simple probabilistic model describing the clinicians' responses, 
and gave an algorithmic solution based on EM. 
This model, which is described in Section~\ref{sec:discuss}, 
is commonly used in modern crowdsourcing settings to explain how workers make mistakes in classification tasks \cite{SPI08}.} 

This heuristic algorithm iterates the following two steps. 
In the M-step, 
the algorithm estimates the error probabilities of the workers 
that maximizes the likelihood 
using the current estimates of the answers.
In the E-step, the algorithm estimates the likelihood of the answers 
using the current estimates of the error probabilities. 
More recently, a number of algorithms followed this EM approach 
based on a variety of probabilistic models \cite{smyth95,whitehill09,Ray10}. 
The crowdsourcing model we consider in this paper is a special case of these models, 
and we discuss their relationship in Section \ref{sec:discuss}. 
The EM approach has also been widely applied in classification problems,  
where a set of labels from low-cost noisy workers is used to find a good classifier \cite{JG03,Ray10}.
Given a fixed budget, there is a trade-off between 
acquiring a larger training dataset or acquiring a smaller dataset but with more labels per data point. 
Through extensive experiments, Sheng, Provost and Ipeirotis \cite{SPI08} 
show that getting repeated labeling can give considerable advantage.
 
Despite the popularity of the EM algorithms, 
the performance of these approaches are only empirically evaluated  
and there is no analysis that gives performance guarantees. 
In particular, EM algorithms 
are highly sensitive to the initialization used,  
making it difficult to predict the quality of the resulting estimate. 
Further, the role of the task assignment is not at all understood with the EM algorithm 
(or for that matter any other algorithm). 
We want to address both questions of task allocation and inference together, 
and devise an algorithmic solution that can achieve 
minimum error from a fixed budget on the total number of queries. 
When we have a given target accuracy, 
such an algorithm will achieve this target accuracy 
with minimum cost. 
Further, we want to provide a strong performance guarantee for this approach and 
show that it is close to a fundamental limit on what the best algorithm can achieve. 

%
%

\vspace{0.2cm}
\noindent {\bf Contributions.} 
In this work, we provide the first rigorous treatment of 
both aspects of designing a reliable crowdsourcing system: 
task allocation and inference.  
We provide both an order-optimal task allocation scheme (based on random graphs) 
and an order-optimal algorithm for inference (based on low-rank approximation and belief propagation) 
on that task assignment. 
We show that our algorithm, which is non-adaptive, performs as well (for the worst-case worker distribution) 
as the optimal oracle estimator which can use any adaptive task allocation scheme. 

Concretely, given a target probability of error $\error$ and a crowd with collective quality $\quality$,  
we show that spending a budget which scales as $O(\,(1/\quality)\log(1/\error)\,)$ 
is sufficient to achieve probability of error less than $\error$ using our approach. 
We give a task allocation scheme and an inference algorithm with 
runtime which is linear in the total number of queries (up to a logarithmic factor).  
Conversely, we also show that 
using the best adaptive task allocation scheme 
together with the best inference algorithm, 
and under the worst-case worker distribution,  
this scaling of the budget 
in terms of $\quality$ and $\error$ is unavoidable. 
No algorithm can achieve error less than $\error$ with number of queries smaller than 
$(C/\quality)\log(1/\error)$ with some positive universal constant $C$.  
This establishes that our algorithm is worst-case optimal up to a constant factor in the required budget. 

Our main results show that our non-adaptive algorithm is worst-case optimal 
and there is no significant gain in using an adaptive strategy. 
We attribute this limit of adaptation to the fact that, in existing platforms such as Amazon's Mechanical Turk, 
the workers are fleeting and the system does not allow for exploiting good workers. 
Therefore, a positive message of this result 
is that a good ‘rating system’ for workers is essential to truly benefit from crowdsourcing platforms using adaptivity.

Another novel contribution of our work is the analysis technique. 
The iterative inference algorithm we introduce operates on real-valued messages 
whose distribution is a priori difficult to analyze.
To overcome this challenge, we develop a novel technique of establishing 
that these messages are sub-Gaussian 
and compute the parameters recursively in a closed form. 
This allows us to prove the sharp result on the error rate.  
This technique could be of independent interest in 
analyzing a more general class of message-passing algorithms.

%
%
\section{Main result}
\label{sec:result}

To achieve a certain reliability in our answers with minimum number of queries, 
we propose using random regular graphs for task allocation 
and introduce a novel iterative algorithm to infer the correct answers.
While our approach is {\em non-adaptive}, we show that it is sufficient to achieve an order-optimal performance
when compared to the best possible approach using {\em adaptive} task allocations. 
Precisely, we prove an upper bound on the resulting error when using our approach 
and a matching lower bound on the minimax error rate achieved by 
the best possible {adaptive} task allocation together with an optimal inference algorithm. 
This shows that our approach is minimax optimal up to a constant factor:  
it requires only a constant factor times the minimum necessary budget 
to achieve a target error rate under the worst-case worker distribution. 
We then present the intuitions behind our inference algorithm through connections to 
low-rank matrix approximations and belief propagation.

%
%
\subsection{Algorithm}
\label{sec:algorithm}

%
%
\vspace{0.2cm}
\noindent {\bf Task allocation.} 
We use a non-adaptive scheme which makes all the task assignments before any worker arrives. 
This amounts to designing a bipartite graph with 
one type of nodes corresponding to each of the tasks and 
another set of nodes corresponding to each of the batches. 
An edge $(i,j)$ indicates that task $i$ is included in batch $\batch_j$. 
Once all $\batch_j$'s are determined according to the graph, 
these batches are submitted simultaneously to the crowdsourcing platform. 
Each arriving worker will pick up one of the batches and complete all the tasks in that batch. 
We denote by $j$ the worker working on $j$-th batch $\batch_j$. 

To design a bipartite graph, the taskmaster first 
makes a choice of how many workers to assign to each task 
and how many tasks to assign to each worker. 
The task degree $\taskdegree$ is typically determined by how much resources (e.g. money, time, etc.) one can spend on the tasks. 
The worker degree $\workerdegree$ is typically determined by how many tasks are manageable for a worker depending on the application. 
The total number of workers that we need is 
automatically determined as $\nworker=\ntask\taskdegree/\workerdegree$, 
since the total number of edges has to be consistent. 

We will show that with such a regular graph, 
you can achieve probability of error which is quite close to 
a lower bound on what any inference algorithm can achieve with any task assignment. 
In particular, this includes all possible graphs which might have irregular degrees 
or have very large worker degrees (and small number of workers)  
conditioned on the total number of edges being the same. 
This suggests that, among other things, there is no significant gain in using an irregular graph. 

We assume that the total cost that must be paid is proportional to the total number of edges  
and not the number of workers. If we have more budget we can increase $\taskdegree$.  
It is then natural to expect the probability of error to decrease, since we are collecting more responses.
We will show that the error rate decreases exponentially in $\taskdegree$ as $\taskdegree$ grows. 
However, increasing $\workerdegree$ does not incur increase in the cost and 
it is not immediately clear how it affects the performance. 
We will show that 
with larger $\workerdegree$ we can learn more about the workers and 
the error rate decreases as $\workerdegree$ increases.  
However, how much we can gain by increasing the worker degree is limited.

Given the task and worker degrees, 
there are multiple ways to generate a regular bipartite graph. 
We want to choose a graph that will minimize the probability of error. 
Deviating slightly from regular degrees, we propose using a simple random construction 
known as {\em configuration model} in random graph literature \cite{RU08,Bol01}. 
We start with $[\ntask]\times[\taskdegree]$ half-edges for task nodes and 
$[\nworker]\times[\workerdegree]$ half-edges for the worker nodes, 
and pair all the half-edges according to a random permutation of $[\ntask\taskdegree]$. 
The resulting graph might have 
multi-edges where two nodes are connected by more than one edges. 
However, they are very few in thus generated random graph as long as $\taskdegree\ll \nworker$, 
whence we also have $\workerdegree\ll \ntask$. 
Precisely, the number of double-edges 
in the graph converges in distribution to Poisson distribution 
with mean $(\taskdegree-1)(\workerdegree-1)/2$ \cite[Page 59 Exercise 2.12]{Bol01}. 
The only property that we need for the main result to hold is that
the resulting random graph converges locally to a random tree in probability in the large system limit. 
This enables us to analyze the performance of our inference algorithm 
and provide sharp bounds on the probability of error.  


The intuition behind why random graphs are good for our inference problem 
is related to the spectral gap of random matrices. 
In the following, we will use the (approximate) top singular vector of a weighted adjacency matrix of the random graph to 
find the correct labels. 
Since, sparse random graphs are excellent expanders with large spectral gaps,   
this enables us to reliably separate the low-rank structure 
from the data matrix which is perturbed by random noise. 

%
%
\vspace{0.2cm}
\noindent {\bf Inference algorithm.} 
We are given a task allocation graph $G\big([\ntask]\cup[\nworker],E\big)$ 
where we connect an edge $(i,j)$ if a task $i$ is assigned to a worker $j$. 
In the following, we will use indexes $i$ for a $i$-th task node and $j$ for a $j$-th worker node. 
We use $\partial i$ to denote the neighborhood of node $i$.  
Each edge $(i,j)$ on the graph $G$ has a corresponding worker response $A_{ij}$. 

To find the correct labels from the given responses of the workers, 
we introduce a novel iterative algorithm. 
This algorithm is inspired by the celebrated belief propagation algorithm 
and low-rank matrix approximations. 
The connections are explained in detail in Section~\ref{sec:relationlowrank} and \ref{sec:relationbp}, 
along with mathematical justifications. 

The algorithm operates on real-valued task messages 
$\{x_{i\to j}\}_{(i,j)\in E}$ and worker messages $\{y_{j\to i}\}_{(i,j)\in E}$. 
A task message $x_{i\to j}$ represents the log-likelihood of task $i$ being a positive task, 
and a worker message $y_{j\to i}$ represents how reliable worker $j$ is. 
We start with the worker messages initialized as independent Gaussian random variables, 
although the algorithm is not sensitive to a specific initialization 
as long as it has a strictly positive mean. 
We could also initialize all the messages to one, 
but then we need to add extra steps in the analysis to ensure that this is not a degenerate case.
At $k$-th iteration, the messages are updated according to 
\begin{eqnarray}
	\label{eq:messageupdate1}
	x^{(k)}_{i\to j} &=& \sum_{j'\in\partial i \setminus j} A_{ij'}y^{(k-1)}_{j'\to i}\;,\;\; \text{ for all $(i,j)\in E$\;, and}\\ 
	\label{eq:messageupdate2}
	y^{(k)}_{j\to i} &=& \sum_{i'\in\partial j \setminus i} A_{i'j}x^{(k)}_{i'\to j}\;,\;\; \text{ for all $(i,j)\in E$}\;,
\end{eqnarray}
where $\partial i$ is the neighborhood of the task node $i$ and 
$\partial j$ is the neighborhood of the worker node $j$. 
At task update, we are giving more weight to the answers that came from more trustworthy workers. 
At worker update, we increase our confidence in that worker if 
the answers she gave on another task, $A_{i'j}$, has the same sign as what we believe, $x_{i'\to j}$.
Intuitively, a worker message represents our belief on how `reliable' the worker is.   
Hence, our final estimate is a weighted sum 
of the answers weighted by each worker's reliability: 
\begin{eqnarray*}
	\estimate_i^{(k)} \;= \;\sign\Big(\sum_{j\in\partial i}A_{ij}y_{j\to i}^{(k-1)}\Big)\;.
\end{eqnarray*}

\begin{center}
\begin{tabular}{ll}
\hline
\vspace{-.35cm}\\
\multicolumn{2}{l}{ Iterative Algorithm}\\
\hline
\vspace{-.35cm}\\
\multicolumn{2}{l}{{\bf Input:} $E$, $\{A_{ij}\}_{(i,j)\in E}$, $k_{\rm max}$} \\
\multicolumn{2}{l}{{\bf Output:} Estimate $\estimate\in\{\pm1\}^\ntask$}\\
1:  & {\bf For all} $(i,j)\in E$ {\bf do}\\
   & \hspace{0.6cm} Initialize $y^{(0)}_{j\to i}$ with random $Z_{ij}\sim\cN(1,1)$ ; \\
2:  & {\bf For} $k=1,\ldots,k_{\rm max}$ {\bf do}\\
   & \hspace{0.6cm} {\bf For all} $(i,j)\in E$ {\bf do}
     \hspace{0.3cm}$x^{(k)}_{i\to j} \leftarrow \sum_{j'\in\partial i \setminus j} A_{ij'}y^{(k-1)}_{j'\to i}$ ; \\
   & \hspace{0.6cm} {\bf For all} $(i,j)\in E$ {\bf do}
     \hspace{0.3cm}$y^{(k)}_{j\to i} \leftarrow \sum_{i'\in\partial j \setminus i} A_{i'j}x^{(k)}_{i'\to j}$ ;\\
3:  & {\bf For all} $i\in [m]$ {\bf do}
     \hspace{0.3cm}$x_i \leftarrow \sum_{j\in\partial i} A_{ij}y^{(k_{\rm max}-1)}_{j\to i}$ ;\\
4:  & Output estimate vector $\estimate^{(k)}=\{\sign(x_i)\}$ .\\
\hline
\end{tabular}
\end{center}

While our algorithm is inspired by the standard Belief Propagation (BP) algorithm 
for approximating max-marginals \cite{Pearl88,YFW03}, our algorithm is original and 
overcomes a few limitations of the standard BP for this inference problem under the crowdsourcing model. 
First, the iterative algorithm does not require any knowledge of the prior distribution of $\Bp_j$, 
whereas the standard BP requires it as explained in detail in Section~\ref{sec:relationbp}. 
Second, the iterative algorithm is provably {order}-optimal for this crowdsourcing problem. 
We use a standard technique, known as {\em density evolution}, to analyze the performance of our message-passing algorithm. 
Although we can write down the density evolution equations for the standard BP for crowdsourcing, 
it is not trivial to describe or compute the densities, analytically or numerically. 
It is also very simple to write down the density evolution equations (cf. \eqref{eq:de1} and \eqref{eq:de2}) for our algorithm, 
but it is not a priori clear how one can analyze the densities in this case either. 
We develop a novel technique to analyze the densities for our iterative algorithm and prove optimality.  
This technique could be of independent interest to analyzing a broader class of message-passing algorithms.

%
%
\subsection{Performance guarantee and experimental results}
\label{sec:theory}

We provide an upper bound on the probability of error achieved by the iterative inference algorithm and 
task allocation according to the configuration model. 
The bound decays as $e^{-C\taskdegree\quality}$ with a universal constant $C$.
Further, an algorithm-independent lower bound that we establish suggests
that such a dependence of the error on $\taskdegree\quality$ is unavoidable. 

\subsubsection{Bound on the average error probability}
To lighten the notation, let $\hl\equiv \taskdegree-1$ and $\hr\equiv \workerdegree-1$, 
and recall that $\quality=\E[(2\Bp_j-1)^2]$. 
Using these notations, we define $\sigma^2_k$ to be the 
effective variance in the sub-Gaussian tail of our estimates after $k$ iterations of our inference algorithm:
\begin{eqnarray*}
	\sigma_k^2 &\equiv& \frac{2\quality}{\meanquality^2(\quality^2\hl\hr)^{k-1}} \,+\, \Big(3+\frac{1}{\quality\hr}\Big) \frac{1-(1/\quality^2\hl\hr)^{k-1}}{1-(1/\quality^2\hl\hr)}\;.
\end{eqnarray*}
With this, we can prove the following upper bound on the probability of error 
when we run $k$ iterations of our inference algorithm with 
$(\taskdegree,\workerdegree)$-regular assignments on $\ntask$ tasks 
using a crowd with collective quality $\quality$. 
We refer to Section \ref{sec:iterativeproof} for the proof. 
\begin{thm}
	\label{thm:main}
	For fixed $\taskdegree>1$ and $\workerdegree>1$, assume that $\ntask$ tasks are assigned to 
	$\nworker=\ntask\taskdegree/\workerdegree$ workers according to
	a random $(\taskdegree,\workerdegree)$-regular graph drawn from the configuration model. 
	If the distribution of the worker reliability satisfies 
	$\meanquality\equiv\E[2\Bp_j-1]>0$ and $\quality^2 > 1/(\hl\hr)$,  
	then for any $\task\in\{\pm1\}^m$, the estimate after $k$ iterations of 
	the iterative algorithm achieves 
	\begin{eqnarray}
	  \label{eq:main}
	  \frac{1}{\ntask}\sum_{i=1}^\ntask\prob\big(\,\task_i\neq\estimate_i^{(k)}\,\big) &\leq& 
	  e^{-{\taskdegree\quality}/({2\sigma_k^2})} \,+\, \frac{3\taskdegree\workerdegree}{\ntask}(\hl\hr)^{2k-2} \;.
	\end{eqnarray}
\end{thm}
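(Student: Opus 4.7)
The plan is to decompose the error for a fixed task $i$ according to whether the radius-$2k$ neighborhood of $i$ in the bipartite graph is a tree. The additive $(3\taskdegree\workerdegree/\ntask)(\hl\hr)^{2k-2}$ term absorbs the probability of a short cycle, which is handled by a standard first-moment argument on the configuration model: a breadth-first exploration from $i$ reveals $O(\taskdegree\workerdegree(\hl\hr)^{k-1})$ half-edges over $2k$ layers, and at each pairing the chance of revisiting an already-used half-edge is $O(1/\ntask)$, so a union bound gives the claim.

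Conditional on the neighborhood being a $(\taskdegree,\workerdegree)$-regular tree, messages entering the root along different edges are independent functionals of disjoint subtrees, so their common law satisfies a density-evolution recursion. By the symmetry $\task_i\mapsto -\task_i$, $A_{ij}\mapsto -A_{ij}$, I would WLOG condition on all labels being $+1$. Writing $x^{(k)}$ for a generic task-to-worker message $2k$ levels above the leaves and conditioning on each worker's reliability $p_{j'}$ (so that $A_{ij'}$ is independent of the subtree rooted at $j'$ and has mean $2p_{j'}-1$), the mean recursion $\E[x^{(k)}] = \quality\hl\hr\,\E[x^{(k-1)}]$ follows from $\E[(2p-1)^2]=\quality$, yielding $\E[x^{(k)}]=\hl\meanquality(\quality\hl\hr)^{k-1}$ from the $\cN(1,1)$ initialization. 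The root estimator $x_i=\sum_{j\in\partial i}A_{ij}y^{(k-1)}_{j\to i}$ then has mean of order $\taskdegree\meanquality(\quality\hl\hr)^{k-1}$.

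The technical heart is an inductive proof that $x^{(k)}$ is sub-Gaussian with the parameter $\sigma_k^2$ stated in the theorem. I would control $\E[e^{\lambda x^{(k)}}]$ in two conditional stages: given $p_{j'}$, the $\hr$ children $A_{i'j'}x^{(k-1)}_{i'\to j'}$ are conditionally independent, so the inductive sub-Gaussian bound factorizes across them; then integrating out $p_{j'}$ extracts the factor $\quality$ via $\E[(2p-1)^2]$. The two summands defining $\sigma_k^2$ correspond to two distinct error sources in the recursion: the geometrically decaying $2\quality/(\meanquality^2(\quality^2\hl\hr)^{k-1})$ tracks the variance injected by the $\cN(1,1)$ initialization, while the geometric-series term collects fluctuations introduced at each message-passing layer. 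Given this sub-Gaussianity, the bound $\prob(x_i\le 0)\le\exp(-\taskdegree\quality/(2\sigma_k^2))$ is the standard sub-Gaussian tail, applied to the degree-$\taskdegree$ sum at the root.

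The main obstacle is the sub-Gaussian recursion itself. The messages are sums of products of random variables that are not jointly independent: the entries $A_{ij}$ sharing a worker $j$ are correlated through the hidden bias $p_j$, so Hoeffding-style bounds on sums of independent bounded variables do not apply directly. The two-stage conditioning on $\{p_j\}$ is what decouples the summands while still allowing the factor $\quality$ to emerge from the outer expectation, and the delicate bookkeeping lies in tracking the two additive contributions to $\sigma_k^2$ across iterations and in verifying that the recursion contracts under the hypothesis $\quality^2>1/(\hl\hr)$ so that the exponent stays bounded away from zero as $k$ grows.
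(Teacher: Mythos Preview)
Your proposal is correct and follows essentially the same route as the paper: the tree/non-tree decomposition, the configuration-model first-moment bound for the non-tree term, density evolution on the tree, an inductive sub-Gaussian control of the message MGF via conditioning on worker reliabilities (the paper packages your ``two-stage conditioning'' as Lemma~\ref{lem:moment}), and a Chernoff bound at the root. One technical point you should anticipate when filling in details: the sub-Gaussian bound holds only for $|\lambda|\le 1/(2m_{k-1}\hr)$ (because the key inequality $e^a\le 1+a+0.63a^2$ needs $|a|$ bounded), so you must separately verify that the Chernoff-optimal $\lambda=-m_k/\tsigma_k^2$ lies in this range---the paper does this check explicitly.
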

The second term, which is the probability that the resulting graph is not locally tree-like, 
vanishes for large $\ntask$. 
Hence, the dominant term in the error bound is the first term. 
Further, when $\quality^2\hl\hr>1$ as per our assumption 
and when we run our algorithm for large enough number of iterations, 
$\sigma_k^2$ converges linearly to a finite limit $\sigma_\infty^2 \equiv \lim_{k\to \infty} \sigma_k^2$ such that 
\begin{eqnarray*}
	\sigma_\infty^2 &=& \Big(3+\frac{1}{\quality\hr}\Big)\frac{\quality^2\hl\hr}{\quality^2\hl\hr-1} \;.
\end{eqnarray*}
With linear convergence of $\sigma_k^2$, we only need a small number of iterations to achieve $\sigma_k$ close to this limit. 
It follows that for large enough $\ntask$ and $k$, we can prove an upper bound that 
does not dependent on the problem size or the number of iterations, 
which is stated in the following corollary.
\begin{coro}
	\label{cor:mainiterative}
	Under the hypotheses of Theorem \ref{thm:main}, there exists 
	$\ntask_0=3\taskdegree\workerdegree e^{\taskdegree\quality/4\sigma_\infty^2}(\hl\hr)^{2(k-1)}$ 
	and $k_0=1+\big(\log(\quality/\meanquality^2)/\log(\hl\hr\quality^2)\big)$ such that 
	\begin{eqnarray}
	  \label{eq:iterativelimit}
	  \frac{1}{\ntask}\sum_{i=1}^\ntask\prob\big(\,\task_i\neq\estimate_i^{(k)}\,\big) 
	    &\leq& 2e^{ -{\taskdegree\quality}/({4\sigma_\infty^2})}\;,
	\end{eqnarray}
	for all $\ntask\geq\ntask_0$ and $k\geq k_0$.
\end{coro}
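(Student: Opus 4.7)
The plan is to start from the explicit bound provided by Theorem~\ref{thm:main} and separately bound each of its two terms by $e^{-\taskdegree\quality/(4\sigma_\infty^2)}$, one via the choice of $k_0$ and the other via the choice of $\ntask_0$. Summing these two bounds will give the factor $2$ on the right-hand side of \eqref{eq:iterativelimit}, and no further probabilistic analysis is required since all the heavy lifting was done in Theorem~\ref{thm:main}.

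First I would control the variance $\sigma_k^2$. Observe that $\sigma_k^2$ splits naturally into a ``transient'' term $2\quality/(\meanquality^2(\quality^2\hl\hr)^{k-1})$ that decays geometrically in $k$ under the assumption $\quality^2\hl\hr>1$, and a ``steady-state'' term of the form $(3+1/(\quality\hr))\cdot(1-(\quality^2\hl\hr)^{-(k-1)})/(1-(\quality^2\hl\hr)^{-1})$. The steady-state term is bounded above by $\sigma_\infty^2$ for every $k\ge 1$, since the numerator factor is strictly less than $1$. For the transient term, substituting $k\ge k_0=1+\log(\quality/\meanquality^2)/\log(\hl\hr\quality^2)$ gives $(\quality^2\hl\hr)^{k-1}\ge \quality/\meanquality^2$, so that the transient term is at most $2$. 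Since $\quality^2\hl\hr>1$ implies $\sigma_\infty^2>(3+1/(\quality\hr))>3>2$, the transient term is itself bounded by $\sigma_\infty^2$. Combining the two pieces yields $\sigma_k^2\le 2\sigma_\infty^2$ for every $k\ge k_0$, which implies the first term in the Theorem~\ref{thm:main} bound satisfies $e^{-\taskdegree\quality/(2\sigma_k^2)}\le e^{-\taskdegree\quality/(4\sigma_\infty^2)}$.

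Next I would handle the second, ``graph locality'' term. This term equals $3\taskdegree\workerdegree(\hl\hr)^{2k-2}/\ntask$ and decreases in $\ntask$. Requiring it to be at most $e^{-\taskdegree\quality/(4\sigma_\infty^2)}$ rearranges exactly to $\ntask\ge 3\taskdegree\workerdegree(\hl\hr)^{2(k-1)}e^{\taskdegree\quality/(4\sigma_\infty^2)}=\ntask_0$, which is the threshold in the statement. Adding the two bounds produces the claimed $2e^{-\taskdegree\quality/(4\sigma_\infty^2)}$.

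The only conceptually delicate step is verifying that the transient term in $\sigma_k^2$ is dominated by $\sigma_\infty^2$; everywhere else the argument is arithmetic substitution into the formula of Theorem~\ref{thm:main}. That verification hinges on the chain of strict inequalities $\sigma_\infty^2>3+1/(\quality\hr)>3$, which is available directly from the standing hypothesis $\quality^2\hl\hr>1$ together with $\hr>0$ and $\quality\in(0,1]$, so no new estimate is needed and the corollary follows without additional probabilistic machinery.
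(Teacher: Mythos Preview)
Your proposal is correct and follows essentially the same approach as the paper: split $\sigma_k^2$ into a transient piece that is at most $2$ once $k\ge k_0$ and a steady-state piece that is at most $\sigma_\infty^2$, then choose $\ntask_0$ to make the tree-locality term at most $e^{-\taskdegree\quality/(4\sigma_\infty^2)}$. The only difference is that you spell out the inequality $\sigma_\infty^2>3>2$ to justify that the transient piece is itself bounded by $\sigma_\infty^2$, a detail the paper leaves implicit.
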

\begin{proof}
For $\hl\hr\quality^2>1$ as per our assumption, 
$k=1+\log(\quality/\meanquality^2)/\log(\hl\hr\quality^2)$ iterations suffice 
to ensure that $\sigma_k^2 \leq (2\quality/\meanquality^2)(\hl\hr\quality^2)^{-k+1}+ \quality\hl(3\quality\hr+1)/(\quality^2\hl\hr-1)\leq 2\sigma_\infty^2$.
Also, $\ntask=3\taskdegree\workerdegree e^{\taskdegree\quality/4\sigma_\infty^2}(\hl\hr)^{2(k-1)}$ 
suffices to ensure that $(\hl\hr)^{2k-2}(3\taskdegree\workerdegree)/\ntask\leq
\exp\{-{\taskdegree\quality}/({4\sigma_\infty^2})\}$.
\end{proof}

The required number of iterations $k_0$ is small 
(only logarithmic in $\taskdegree$, $\workerdegree$, $\quality$, and $\meanquality$) 
and does not depend on the problem size $\ntask$. 
On the other hand, the required number of tasks $\ntask_0$ in our main theorem is quite large. 
However, numerical simulations suggest that the actual performance of our approach 
is not very sensitive to the number of tasks and the bound still holds for tasks of small size as well. 
For example, in Figure~\ref{fig:comparison} (left), we ran numerical experiment with $\ntask=1000$, $\quality=0.3$, 
and $k=20$, and the resulting error exhibits exponential decay as predicted by our theorem even for large 
$\taskdegree=\workerdegree=30$. In this case, theoretical requirement on the number of tasks  
$m_0$ is much larger than what we used in the experiment. 

Consider a set of worker distributions $\{\cF\,|\,\E_{\cF}[(2\Bp-1)^2]=\quality\}$ 
that have the same collective quality $\quality$. 
These distributions that have the same value of $\quality$ can 
give different values for $\meanquality$ ranging from $\quality$ to $\quality^{1/2}$.
Our main result on the error rate suggests that 
the error does not depend on the value of $\meanquality$. 
Hence, the effective second moment $\quality$ is 
the right measure of the collective quality of the crowd, 
and the effective first moment $\meanquality$ only affects 
how fast the algorithm converges, 
since we need to run our inference algorithm 
$k=\Omega(1+\log(\quality/\meanquality^2)/\log(\hl\hr\quality^2))$ iterations to  
guarantee the error bound. 

The iterative algorithm is efficient with run-time comparable to 
that of majority voting which requires $O(\ntask\taskdegree)$ operations.
Each iteration of the iterative algorithm requires $O(\ntask\taskdegree)$ operations, 
and we need $O(1+\log(\quality/\meanquality^2)/\log(\quality^2\hl\hr))$ iterations to ensure an 
error bound in \eqref{eq:iterativelimit}. 
By definition, we have $\quality\leq\meanquality\leq\sqrt{\quality}$. 
The run-time is the worst when $\meanquality=\quality$, which happens under 
the spammer-hammer model, and it is the smallest when  $\meanquality=\sqrt{\quality}$ 
which happens if $\Bp_j=(1+\sqrt{\quality})/2$ deterministically. 
In any case, we only need extra logarithmic factor that does not increase with 
compared to majority voting, 
and this
Notice that as we increase the number of iterations, 
the messages converge to an eigenvector of a particular sparse 
matrix of dimensions $\ntask\taskdegree\times\ntask\taskdegree$.  
This suggests that we can alternatively compute the messages 
using other algorithms for computing the top singular vector of large sparse matrices 
that are known to converge faster (e.g. Lanczos algorithm \cite{Lanczos50}). 

Next, we make a few remarks on the performance guarantee. 

First, the assumption that $\meanquality>0$ is necessary. 
If there is no assumption on $\meanquality$, then  
we cannot distinguish if the responses came from 
tasks with $\{\task_i\}_{i\in[m]}$ and workers with $\{p_j\}_{j\in[n]}$ 
or tasks with $\{-\task_i\}_{i\in[m]}$ and workers with $\{1-p_j\}_{j\in[n]}$. 
Statistically, both of them give the same output. 
The hypothesis on $\meanquality$ allows us to distinguish which of the two is the correct solution. 
In the case when we know that $\meanquality<0$, 
we can use the same algorithm  
changing the sign of the final output 
and get the same performance guarantee. 

Second, our algorithm does not require any information on the distribution of $\Bp_j$. 
However, in order to generate a graph that achieves an optimal performance, 
we need the knowledge of $\quality$ for selecting the degree 
$\taskdegree = \Theta(1/\quality\log(1/\error))$.  
Here is a simple way to overcome this limitation at the loss of only additional constant
factor, i.e. scaling of cost per task still remains $\Theta(1/\quality \log (1/\error))$.  
To that end, consider an incremental design in which at step $a$ the 
system is designed assuming $\quality = 2^{-a}$ for $a \geq 1$. At step $a$,
we design two replicas of the task allocation for $\quality=2^{-a}$. 
Now compare the estimates obtained by these two 
replicas for all $\ntask$ tasks. 
If they agree amongst $\ntask(1-2\error)$ tasks, then
we stop and declare that as the final answer. 
Otherwise, we increase $a$ to $a+1$ and repeat. 
Note that by our optimality result, it follows that if $2^{-a}$ is less
than the actual $\quality$ then the iteration must stop with high probability. 
Therefore, the total cost paid is $\Theta(1/\quality \log (1/\error))$ with
high probability. Thus, even lack of knowledge of $\quality$ does not affect
the order optimality of our algorithm.

Further, unlike previous approaches based on Expectation Maximization (EM), 
the iterative algorithm is not sensitive to initialization   
and converges to a unique solution from a random initialization with high probability. 
This follows from the fact that 
the algorithm is essentially computing a leading eigenvector 
of a particular linear operator. 

Finally, we observe a phase transition at $\hl\hr\quality^2=1$. 
Above this phase transition, when $\hl\hr\quality^2>1$, 
we will show that our algorithm is order-optimal and 
the probability of error is significantly smaller than majority voting. 
However, perhaps surprisingly, when we are below the threshold, 
when $\hl\hr\quality^2<1$, we empirically observe that 
our algorithm exhibit a fundamentally different behavior (cf. Figure~\ref{fig:comparison}). 
The error we get after $k$ iterations of our algorithm 
increases with $k$. 
In this regime, we are better off stopping the algorithm after 1 iteration, 
in which case the estimate we get is essentially the 
same as the simple majority voting,  
and we cannot do better than majority voting. 
This phase transition is universal and we observe similar behavior 
with other inference algorithms including EM approaches. 
We provide more discussions on the choice of $\taskdegree$ 
and the limitations of having small $\workerdegree$ in the following section.

%
%
\subsubsection{Minimax optimality of our approach} 
For a task master, the natural core optimization problem of her concern is 
how to achieve a certain reliability in the answers with minimum cost. 
Throughout this paper, we assume that the cost is proportional to the total number of queries. 
In this section, we show that if a taskmaster wants to achieve a target error rate of $\error$, 
she can do so using our approach with budget per task scaling as $O((1/\quality)\log(1/\error))$ 
for a broad range of worker degree $\workerdegree$. 
Compared to the necessary condition which we provide in Section~\ref{sec:minimax_nonadaptive}, 
this is within a constant factor from 
what is necessary using the best {\em non-adaptive} task assignment and the best inference algorithm.
Further, we show in Section~\ref{sec:minimax_adaptive} that this scaling in the budget is still necessary 
if we allow using the best {\em adaptive} task assignment together with the best inference algorithm.
This proves that our approach is minimax optimal up to a constant factor in the budget. 

Assuming for now that there is no restrictions on the worker degree $\workerdegree$ 
and we can assign as many tasks to each worker as we want, 
we can get the following simplified upper bound on the error 
that holds for all $\workerdegree\geq1+1/\quality$. 
To simplify the resulting bound, let us assume for now that $\hl\hr\quality\geq2$. 
Then, we get that $\sigma_\infty^2 \leq 2(3+1/\hr\quality)$. 
Then from \eqref{eq:iterativelimit}, we get the following bound:
\begin{eqnarray*}
	\frac{1}{\ntask}\sum_{i\in[\ntask]}\prob(\task_i\neq\estimate_i^{(k)}) &\leq& 2e^{-\taskdegree\quality/32}\;,
\end{eqnarray*}
for large enough $\ntask\geq\ntask_0$.
In terms of the budget or the number of queries necessary to achieve a target accuracy, 
we get the following sufficient condition as a corollary.
\begin{coro}
	\label{cor:budgetiterative}
	Using the non-adaptive task assignment scheme with $\workerdegree\geq1+1/\quality$ 
	and the iterative inference algorithm 
	introduced in Section~\ref{sec:algorithm}, it is sufficient to query 
	$(32/\quality)\log(2/\error)$ times per task to guarantee that 
	the probability of error is at most $\error$ for any $\error\leq1/2$ and for all $m\geq m_0$.   
\end{coro}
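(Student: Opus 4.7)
The plan is to specialize the error bound from Corollary \ref{cor:mainiterative} to the regime where the worker degree $\workerdegree$ is sufficiently large and then choose the task degree $\taskdegree$ to drive the bound below the target $\error$. I would take $\taskdegree = (32/\quality)\log(2/\error)$ per task, and verify that under this choice the right-hand side of \eqref{eq:iterativelimit} is at most $\error$.

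The first step is to obtain a uniform bound on $\sigma_\infty^2$. The hypothesis $\workerdegree\geq 1+1/\quality$ gives $\hr=\workerdegree-1\geq 1/\quality$, hence $\quality\hr\geq 1$, so the factor $3+1/(\quality\hr)$ appearing in $\sigma_\infty^2$ is at most $4$. Next I would check that the remaining ratio $\quality^2\hl\hr/(\quality^2\hl\hr-1)$ is at most $2$, which only requires $\quality^2\hl\hr\geq 2$. Since $\taskdegree\geq(32/\quality)\log 4$ for $\error\leq 1/2$ and $\hr\geq 1/\quality$, we have $\quality^2\hl\hr\gg 2$, so that $\sigma_\infty^2\leq 2\cdot 4 = 8$. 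This also simultaneously discharges the precondition $\quality^2>1/(\hl\hr)$ of Theorem \ref{thm:main} underlying Corollary \ref{cor:mainiterative}.

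Inserting $\sigma_\infty^2\leq 8$ into \eqref{eq:iterativelimit} yields $(1/\ntask)\sum_i \prob(\task_i\neq\estimate_i^{(k)})\leq 2e^{-\taskdegree\quality/32}$ for all $\ntask\geq\ntask_0$ and sufficiently many iterations $k\geq k_0$. The final step is a one-line substitution: with $\taskdegree\quality=32\log(2/\error)$, the exponent collapses to $-\log(2/\error)$, so the bound equals $2\cdot(\error/2)=\error$, matching the desired guarantee.

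The argument is essentially algebraic bookkeeping, so I do not foresee a serious obstacle. The only sanity check worth flagging is that the constants in the definition of $\taskdegree$ are large enough to guarantee $\quality^2\hl\hr\geq 2$ uniformly over $\error\leq 1/2$ and $\quality\in(0,1]$; the factor $32$ leaves plenty of room. All dependence on $\meanquality$ has been absorbed into the threshold $k_0$ (and hence into $\ntask_0$) already supplied by Corollary \ref{cor:mainiterative}, so the final per-task budget depends only on $\quality$ and $\error$, as required.
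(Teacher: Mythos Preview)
Your proposal is correct and follows essentially the same route as the paper: bound $\sigma_\infty^2\le 8$ from $\quality\hr\ge 1$ and $\quality^2\hl\hr\ge 2$, plug into \eqref{eq:iterativelimit} to obtain $2e^{-\taskdegree\quality/32}$, then set $\taskdegree=(32/\quality)\log(2/\error)$. Your explicit verification that the choice of $\taskdegree$ forces $\quality^2\hl\hr\ge 2$ (hence also the hypothesis $\quality^2>1/(\hl\hr)$ of Theorem~\ref{thm:main}) is a detail the paper leaves implicit.
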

We provide a matching minimax necessary condition up to a constant factor for non-adaptive algorithms in Section~\ref{sec:minimax_nonadaptive}. 
When the nature can choose the worst-case worker distributions, 
no non-adaptive algorithm can achieve error less than $\error$ with budget per task smaller than 
$(C'/\quality)\log(1/2\error)$ with some universal positive constant $C'$. 
This establishes that under the non-adaptive scenario,  
our approach is minimax optimal up to a constant factor for large enough $\ntask$. 
With our approach you only need to ask (and pay for) 
a constant factor more than what is necessary 
using the best non-adaptive task assignment scheme 
together with the best inference algorithm under the worst-case worker distribution. 
 
Perhaps surprisingly, we will show in Section~\ref{sec:minimax_adaptive} 
that the necessary condition does not change even if we allow adaptive task assignments. 
No algorithm, adaptive or non-adaptive, can achieve error less than $\error$ 
without asking $(C''/\quality)\log(1/2\error)$ queries per task with some universal positive constant $C''$.  
Hence, our non-adaptive approach achieves minimax optimal performance that can be achieved by the best adaptive scheme.
 
In practice, we might not be allowed to have large $\workerdegree$ depending on the application. 
For different regimes of the restrictions on the allowed worker degree $\workerdegree$, 
we need different choices of $\taskdegree$. 
When we have a target accuracy $\error$, the following corollary establishes that we can achieve probability of error $\error$ with $\taskdegree\geq C(1+1/\hr\quality)(1/\quality)\log(1/\error)$ for any value of $\workerdegree$. 
\begin{coro}
	\label{cor:budgetiterative2}
	Using the non-adaptive task assignment scheme with any $\workerdegree$ 
	and the iterative inference algorithm 
	introduced in Section~\ref{sec:algorithm}, it is sufficient to query 
	$(24+8/\hr\quality)(1/\quality)\log(2/\error)$ times per task 
	to guarantee that 
	the probability of error is at most $\error$ for any $\error\leq1/2$ and for all $m\geq m_0$. 
\end{coro}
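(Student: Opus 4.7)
The claim is essentially an algebraic consequence of Corollary~\ref{cor:mainiterative}. The plan is to upper bound the limiting variance $\sigma_\infty^2$ by a quantity that separates cleanly into a constant part and a $1/(\quality\hr)$ part, then invert the resulting exponential tail bound to read off a sufficient per-task budget in the desired form.

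Concretely, Corollary~\ref{cor:mainiterative} yields
\begin{equation*}
\frac{1}{\ntask}\sum_{i=1}^{\ntask}\prob\bigl(\task_i \neq \estimate_i^{(k)}\bigr) \;\leq\; 2 e^{-\taskdegree\quality/(4\sigma_\infty^2)},
\qquad
\sigma_\infty^2 \;=\; \Big(3 + \frac{1}{\quality\hr}\Big) \cdot \frac{\quality^2\hl\hr}{\quality^2\hl\hr - 1},
\end{equation*}
provided $\quality^2\hl\hr > 1$ and $\ntask,k$ are large enough. Since $x\mapsto x/(x-1)$ is decreasing on $(1,\infty)$, the second factor is at most $2$ whenever $\quality^2\hl\hr \geq 2$, so under this slightly stronger condition we get $4\sigma_\infty^2 \leq 24 + 8/(\quality\hr)$. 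Substituting into the exponential bound and forcing the right-hand side to be at most $\error$ gives exactly
\begin{equation*}
\taskdegree \;\geq\; \Big(24 + \frac{8}{\hr\quality}\Big) \frac{1}{\quality} \log\frac{2}{\error},
\end{equation*}
which is the claimed sufficient budget.

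What remains is to verify that this choice of $\taskdegree$ actually meets the strengthened precondition $\quality^2\hl\hr \geq 2$ (which automatically subsumes Theorem~\ref{thm:main}'s hypothesis $\quality^2\hl\hr > 1$). For any $\error\leq 1/2$ one has $\log(2/\error) \geq \log 4$, so the chosen $\taskdegree$ is at least $(8\log 4)/(\quality^2\hr)$; a short case distinction on whether $\quality^2\hr$ is small (use the $8/(\quality^2\hr)$ term directly) or large (so $2/(\quality^2\hr)$ is already tiny while $\taskdegree \geq 24(\log 4)/\quality$ is order one) shows $\hl = \taskdegree - 1 \geq 2/(\quality^2\hr)$ in either case. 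The only place requiring any care at all is this precondition check, because the corollary is stated uniformly in $\workerdegree$ and so $\hr$ may be as small as $1$; there is no heavier analysis beyond invoking Corollary~\ref{cor:mainiterative}, and the main obstacle is simply bookkeeping the constants so that the chosen $\taskdegree$ activates the cleaner variance bound.
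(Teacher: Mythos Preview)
Your proposal is correct and follows essentially the same route as the paper: invoke Corollary~\ref{cor:mainiterative}, use $\quality^2\hl\hr\geq 2$ to bound $\sigma_\infty^2\leq 2(3+1/(\quality\hr))$, invert the exponential tail, and then verify that the resulting $\taskdegree$ automatically satisfies the precondition $\quality^2\hl\hr\geq 2$. The only cosmetic difference is in that last verification step: the paper checks the single inequality $1+2/(\hr\quality^2)\leq (24+8/(\hr\quality))(1/\quality)\log(2/\error)$ directly (term by term, using $\quality\leq 1$ and $\log(2/\error)\geq\log 2$), whereas you split into cases on the size of $\quality^2\hr$; both work, but the paper's one-line comparison is slightly tidier than your case distinction.
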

\begin{proof}
We will show that for $\taskdegree\geq\max\{1+2/(\hr\quality^2)\,,\,8(3+1/\hr\quality)(1/\quality)\log(1/\error)\}$, the probability of error is at most $\error$. 
Since, $1+2/(\hr\quality^2) \leq 8(3+1/\hr\quality)(1/\quality)\log(1/\error)$ for $\error\leq 1/2$, this proves the corollary. 
Since $\hl\hr\quality^2\geq 2$ from the first condition, 
we get that $\sigma_\infty^2 \leq 2(3+1/\hr\quality)$. 
Then, the probability of error is upper bounded by 
$2\exp\{-\taskdegree\quality/(24+8/\hr\quality)\}$.
This implies that for $\taskdegree\geq (24+8/\hr\quality)(1/\quality)\log(2/\error)$ 
the probability of error is at most $\error$. 
\end{proof}

For $\workerdegree\geq C'/\quality$, this implies that 
our approach requires $O((1/\quality)\log(1/\error))$ queries and it is minimax optimal. 
However, for $\workerdegree=O(1)$, 
our approach requires $O((1/\quality^2)\log(1/\error))$ queries. 
This is due to the fact that when $\workerdegree$ is small, 
we cannot efficiently learn the quality of the workers  
and need significantly more questions to achieve the accuracy we desire. 
Hence, in practice, we want to be able to assign more tasks to each worker 
when we have low-quality workers. 

%
%
\subsubsection{Experimental results}

Figure.~\ref{fig:comparison} shows the comparisons between probabilities of error 
achieved by different inference algorithms, but on the same task assignment using 
regular bipartite random graphs. 
We ran $20$ iterations of EM and our iterative algorithm, 
and also the spectral approach of using leading left singular vector of $A$ for estimation.  
The spectral approach, which we call Singular Vector in the graph, 
is explained in detail in Section~\ref{sec:relationlowrank}.
The error rates are compared with those of majority voting and the oracle estimator. 
The oracle estimator performance sets a lower bound on what any inference algorithm can achieve, 
since it knows all the values of $p_j$'s.
For the numerical simulation on the left-hand side, we set $\ntask=1000$, 
$\taskdegree=\workerdegree$ and 
used the spammer hammer model for the distribution of the workers with $\quality=0.3$. 
According to our theorem, we expect a phase transition at $\taskdegree=1+1/0.3=4.3333$. 
{ From the figure, we observe that the iterative inference algorithm starts to perform better than 
majority voting at $\taskdegree=5$.}
For the figure on the right-hand side, we set $\taskdegree=25$. 
For fair comparisons with the EM approach, 
we used an implementation of the EM approach in Java by Sheng et al. \cite{SPI08}, which is publicly available.
 
\begin{figure}
\begin{center}
	\includegraphics[width=.45\textwidth]{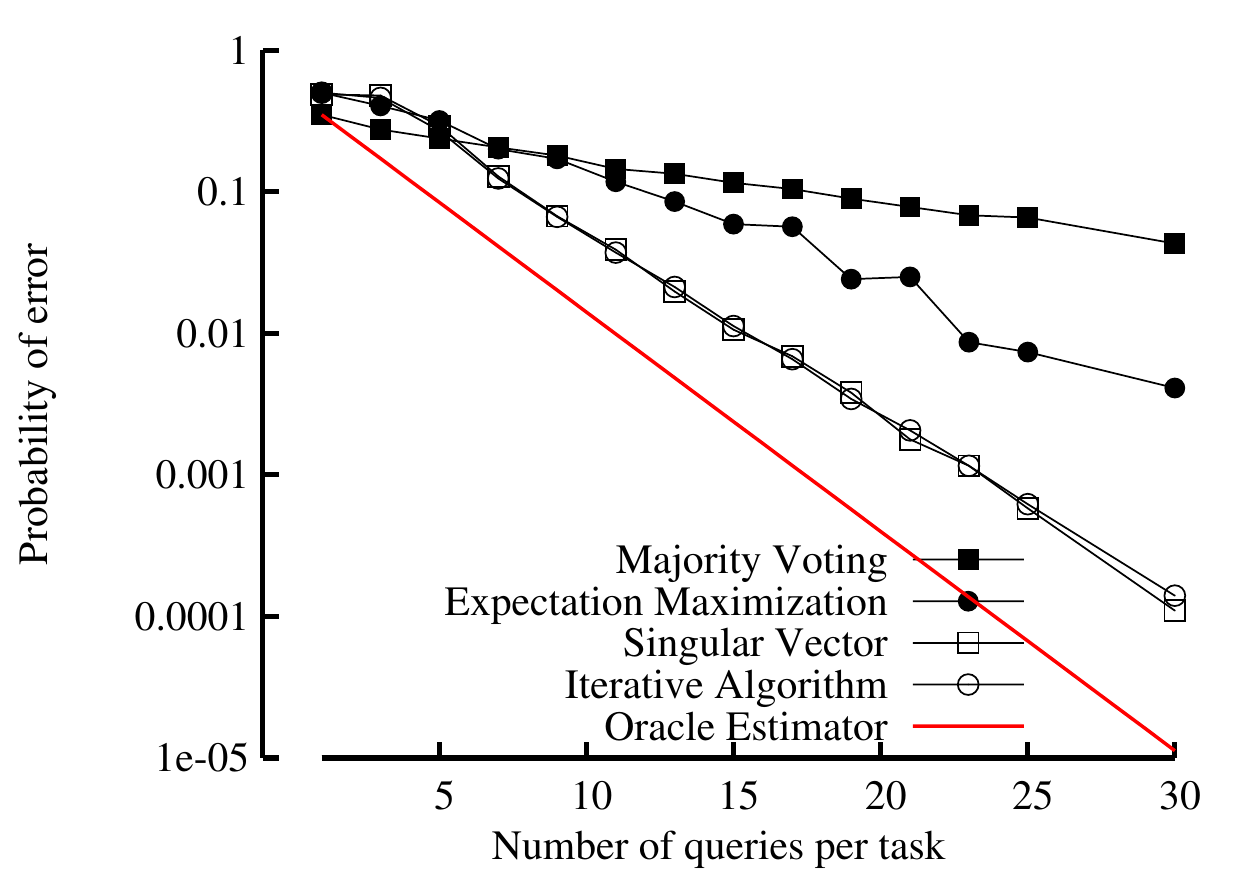}
	\includegraphics[width=.45\textwidth]{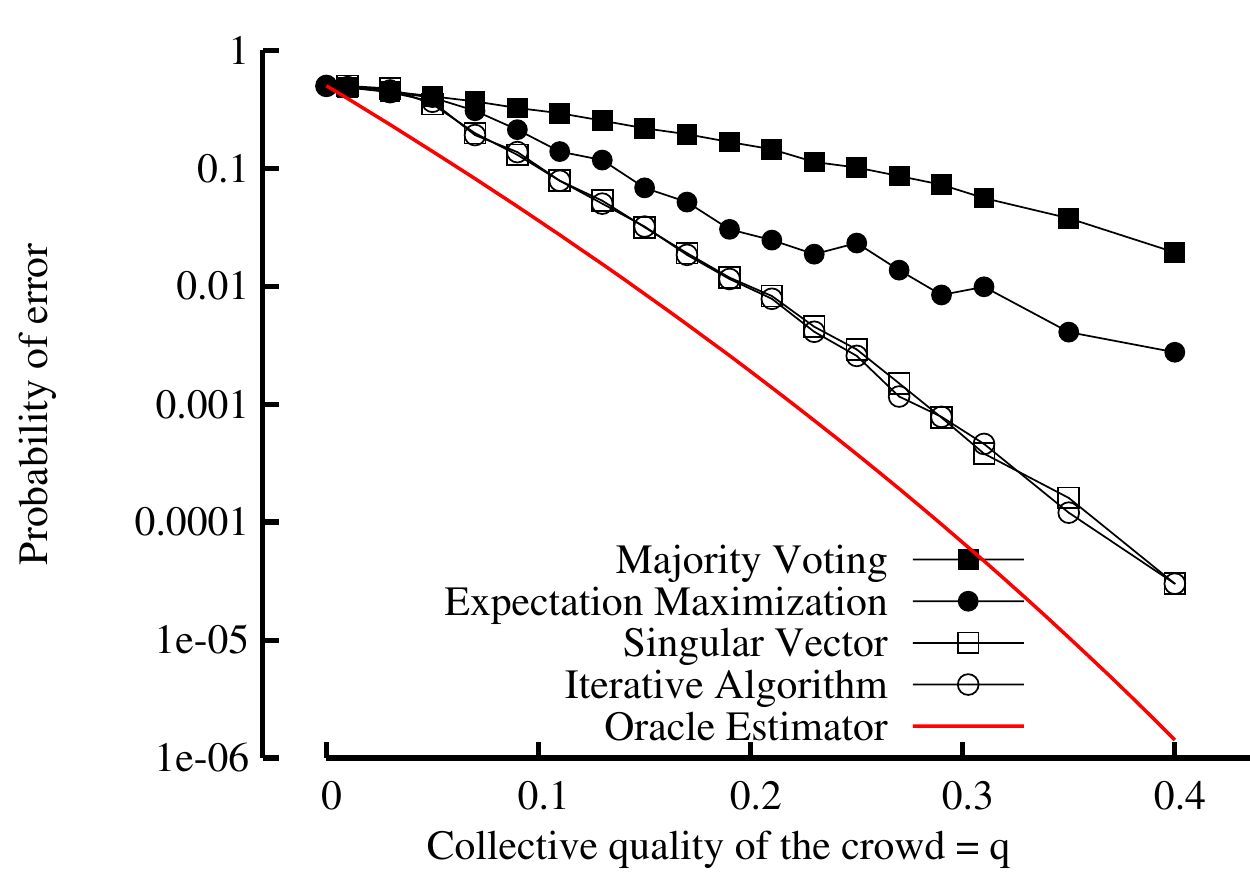}
\end{center}
\caption{The iterative algorithm improves over majority voting and EM algorithm. 
Using the top singular vector for inference has similar performance as our iterative approach.} 
\label{fig:comparison}
\end{figure}

We also ran two experiments with real crowd using Amazon Mechanical Turk. 
In our experiments, we created tasks for comparing colors; 
we showed three colors on each task, one on the top and two on the bottom. 
We asked the crowd to indicate ``if the color on the top is more similar to the color on the left or on the right''.

The first experiment confirms that the ground truth for these color comparisons tasks 
are what is expected from pairwise distances in the Lab color space.  
The distances in the Lab color space between the a pair of colors are known to be a good measure of the perceived distance between the pair \cite{COLOR}. 
To check the validity of this Lab distance 
we collected $210$ responses on each of the $10$ color comparison tasks. 
As shown in Figure.~\ref{fig:comparison1}, 
for all $10$ tasks, the majority of the $210$ responses were consistent with 
the Lab distance based ground truth. 

Next, to test our approach, we created $50$ of such similarity tasks and 
recruited $28$ workers to answer all the questions. 
Once we have this data, we can subsample the data to simulate 
what would have happened if we collected smaller number of responses per task. 
The resulting average probability of error is illustrated in Figure.~\ref{fig:comparison2}.
For this crowd from Amazon Mechanical Turk, we can estimate the collective quality from the data, which is about $\quality\simeq0.175$. 
Theoretically, this indicates that phase transition should happen when $(\taskdegree-1)((50/28)\taskdegree-1)\quality^2=1$, 
since we set $\workerdegree=(50/28)\taskdegree$. 
With this, we expect phase transition to happen around $\taskdegree\simeq5$. 
In Figure.~\ref{fig:comparison2}, we see that our iterative algorithm starts to perform better 
than majority voting around $\taskdegree=8$. 
\begin{figure}
\begin{center}
	\includegraphics[width=15.5cm]{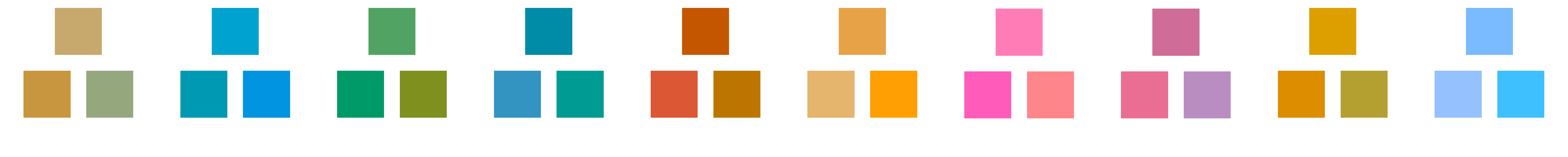}
	\put(-435,-1){\small$151$}
	\put(-413,-1){\small$59$}
	\put(-392,-1){\small$123$}
	\put(-370,-1){\small$87$}
	\put(-348,-1){\small$141$}
	\put(-326,-1){\small$69$}
	\put(-304,-1){\small$126$}
	\put(-282,-1){\small$84$}
	\put(-260,-1){\small$109$}
	\put(-239,-1){\small$101$}
	\put(-215,-1){\small$121$}
	\put(-193,-1){\small$89$}
	\put(-170,-1){\small$141$}
	\put(-149,-1){\small$69$}
	\put(-126,-1){\small$141$}
	\put(-105,-1){\small$69$}
	\put(-81,-1){\small$149$}
	\put(-61,-1){\small$61$}
	\put(-40,-1){\small$159$}
	\put(-17,-1){\small$51$}
	\end{center}
	\caption{Experimental results on color comparison using real data from Amazon's Mechanical Turk. 
	The color on the left is closer to the one on the top in Lab distance for each triplet. 
	The votes from $210$ workers are shown below each triplet.}
	\label{fig:comparison1}
\end{figure}

\begin{figure}
	\begin{center}
		\includegraphics[width=9.5cm]{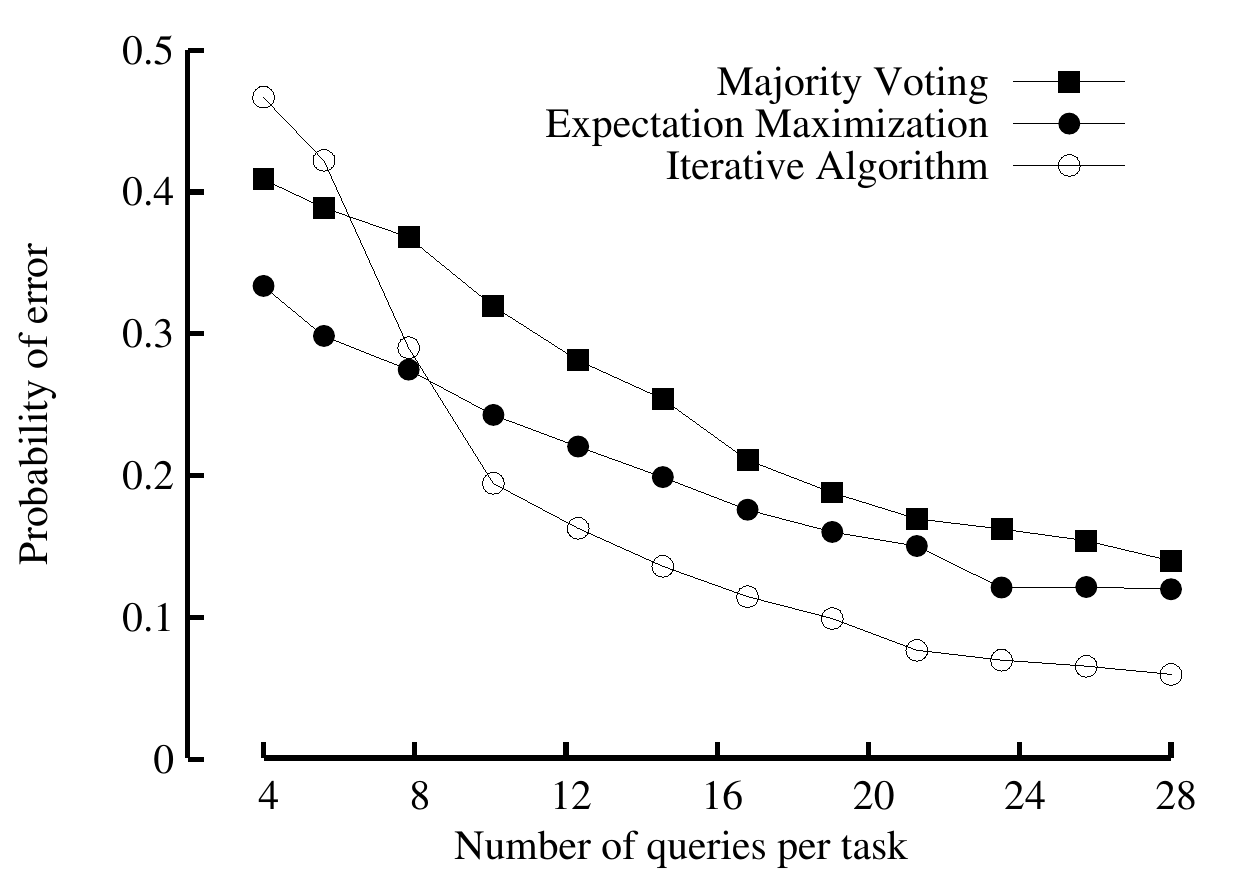}
	\end{center}
	\caption{The average probability of error on color comparisons using real data from Amazon's Mechanical Turk.}
	\label{fig:comparison2}
\end{figure}

%
%
\subsection{Fundamental limit under the non-adaptive scenario}
\label{sec:minimax_nonadaptive}

Under the non-adaptive scenario, we are allowed to use only 
non-adaptive task assignment schemes which assign all the tasks a priori 
and collect all the responses simultaneously. 
In this section, we investigate the fundamental limit on how small an error can be achieved 
using the best possible non-adaptive task assignment scheme together with the 
best possible inference algorithm. 
In particular, we are interested in the minimax optimality: 
What is the minimum error that can be achieved under 
the worst-case worker distribution?
To this end, we analyze the performance of an oracle estimator 
when the workers' latent qualities are drawn from a specific distribution 
and provide a lower bound on the minimax rate on the probability of error. 
Compared to our main result, this establishes that 
our approach is minimax optimal up to a constant factor.  

In terms of the budget, 
the natural core optimization problem of our concern is 
how to achieve a certain reliability in our answers with minimum cost. 
Let us assume that the cost is proportional to the total number of queries. 
We show that 
for a given target error rate $\error$, 
the total budget sufficient to achieve this target error rate using our algorithm  
is within a constant factor from 
what is necessary using the best {non-adaptive} task assignment and the best inference algorithm. 

\vspace{0.2cm}
\noindent{\bf Fundamental limit.} 
Consider a crowd characterized by worker distribution $\cF$ such that $\Bp_j\sim\cF$. 
Let $\cF_\quality$ be a set of all distributions on $[0,1]$, such that the collective 
quality is parametrized by $\quality$: 
\begin{eqnarray*}
	\cF_\quality &=& \big\{ \cF\;|\; \E_\cF[(2\Bp_j-1)^2]=\quality \big\}\;.
\end{eqnarray*}
We want to prove a lower bound on the {\em minimax rate} on the probability of error, 
which only depends on $\quality$ and $\taskdegree$. 
Define the minimax rate as 
\begin{eqnarray*}
	\min_{\tau\in\cT_\taskdegree,\estimate}\;\; \max_{\task\in\{\pm1\}^\ntask,\cF\in\cF_\quality}\;\; \frac{1}{\ntask}\sum_{i\in[\ntask]}\prob\big(\task_i\neq\estimate_i\big)\;,
\end{eqnarray*}
where $\estimate$ ranges over all estimators which are measurable functions over the responses, 
and $\tau$ ranges over the set $\cT_\taskdegree$ of all task assignment schemes 
which are non-adaptive and ask $\ntask\taskdegree$ queries in total.
Here the probability is taken over all realizations of 
$\Bp_j$'s, $\BA_{ij}$'s, and the randomness introduced in the task assignment and the inference. 

Consider any non-adaptive scheme that assigns $\taskdegree_i$ workers to the $i$-th task. 
The only constraint is that the average number of queries is bounded by 
$(1/\ntask)\sum_{i\in[\ntask]}\taskdegree_i\leq\taskdegree$. 
To get a lower bound on the minimum achievable error, 
we consider an oracle estimator that has access to all the $\Bp_j$'s, and hence can make an optimal estimation.  
Further, since we are proving minimax optimality and not instance-optimality, 
the worst-case error rate will always be lower bounded by 
the error rate for any choice of worker distribution.
In particular, we prove a lower bound using the spammer-hammer model. 
Concretely, we assume the $\Bp_j$'s are drawn from the spammer-hammer model with perfect hammers: 
\begin{eqnarray*}
 \Bp_j &=& \left\{ 
      \begin{array}{rl} 1/2 &\text{ with probability }1-\quality \;,\\
      1 &\text{ otherwise} \;. \end{array}
	      \right.
\end{eqnarray*}
Notice that the use of $\quality$ is consistent with $\E[(2\Bp_j-1)^2]=\quality$.
Under the spammer-hammer model, 
the oracle estimator only makes a mistake on task $i$ if it is only assigned to spammers, 
in which case we flip a fair coin to achieve error probability of half. Formally, 
\begin{eqnarray*}
	\prob(\estimate_i\neq \task_i)&=&\frac12(1-\quality)^{\taskdegree_i}\;. 
\end{eqnarray*}
By convexity and using Jensen's inequality, the average probability of error is lower bounded by 
\begin{eqnarray*}
	\frac1\ntask\sum_{i\in[\ntask]}\prob(\estimate_i\neq \task_i)&\geq&\frac12(1-\quality)^{\taskdegree}\;. 
\end{eqnarray*}
Since we are interested in how many more queries are necessary as the quality of the crowd deteriorates, 
we are going to assume $\quality\leq2/3$, 
in which case $(1-\quality)\geq e^{-(\quality+\quality^2)}$. 
As long as total $\ntask\taskdegree$ queries are used, 
this lower bound holds regardless of how the actual tasks are assigned. 
And since this lower bound holds for a particular choice of $\cF$, it holds 
for the worst case $\cF$ as well. 
Hence, for the best task assignment scheme and the best inference algorithm, we have  
\begin{eqnarray*}
	\min_{\tau\in\cT_\taskdegree,\estimate}\;\; \max_{\task\in\{\pm1\}^\ntask,\cF\in\cF_\quality}\;\; \frac{1}{\ntask}\sum_{i\in[\ntask]}\prob\big(\task_i\neq\estimate_i\big) &\geq& \frac12e^{-(\quality+\quality^2)\taskdegree}\;.
\end{eqnarray*}
This lower bound on the minimax rate holds for any positive integer $\ntask$, and 
regardless of the number of workers or the number of queries, $\workerdegree$, assigned to each worker. 
In terms of the average number of queries necessary to achieve 
a target accuracy of $\error$, this implies the following necessary condition. 
\begin{lemma}
	\label{lem:budgetoracle}
	Assuming $\quality\leq2/3$ and the {non-adaptive} scenario, 
	if the average number of queries per task is less than 
	$(1/2\quality)\log(1/2\error)$, then no algorithm can achieve 
	average probability of error less than $\error$ for any $\ntask$ under the worst-case worker distribution. 
\end{lemma}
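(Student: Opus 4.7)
The plan is to derive the statement directly from the minimax lower bound
\[
\min_{\tau\in\cT_\taskdegree,\estimate}\;\max_{\task,\cF\in\cF_\quality}\;\frac{1}{\ntask}\sum_{i=1}^\ntask\prob\big(\task_i\neq\estimate_i\big)\;\geq\;\tfrac{1}{2}\,e^{-(\quality+\quality^2)\taskdegree}
\]
established in the paragraph immediately preceding the lemma. Once this is in hand, what remains is a one-line algebraic inversion, so the proposal reduces to (a) confirming that the displayed bound is a genuine lower bound on the full minimax rate over $\cF_\quality$, and (b) carrying out the inversion.

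For (a), I would first observe that the spammer-hammer distribution with hammer probability $\quality$ satisfies $\E[(2\Bp_j-1)^2]=\quality$ and therefore lies in $\cF_\quality$; hence the maximum over $\cF\in\cF_\quality$ dominates its value at this particular choice. The chain of inequalities in the preceding paragraph then combines: (i) replacing any estimator by the oracle that knows every $\Bp_j$ (which can only decrease the error), (ii) noting that under spammer-hammer the oracle errs on task $i$ exactly when the $\taskdegree_i$ workers assigned to it are all spammers, giving per-task error $\tfrac{1}{2}(1-\quality)^{\taskdegree_i}$, (iii) Jensen's inequality on the convex map $x\mapsto(1-\quality)^x$ under the budget constraint $\frac{1}{\ntask}\sum_i\taskdegree_i\leq\taskdegree$, and (iv) the numerical bound $1-\quality\geq e^{-(\quality+\quality^2)}$, for which the hypothesis $\quality\leq 2/3$ is precisely what is needed (it can be verified from $-\log(1-\quality)=\quality+\quality^2/2+\quality^3/3+\cdots$ by bounding the tail).

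For (b), suppose toward contradiction that some non-adaptive algorithm with average degree $\taskdegree<(1/2\quality)\log(1/2\error)$ achieves average error at most $\error$ under the worst-case $\cF\in\cF_\quality$. The displayed bound then forces $\error\geq\tfrac{1}{2}e^{-(\quality+\quality^2)\taskdegree}$, i.e.\ $\taskdegree\geq\log(1/(2\error))/(\quality+\quality^2)$. Since $\quality\leq 1$ implies $\quality+\quality^2\leq 2\quality$, this yields $\taskdegree\geq(1/(2\quality))\log(1/(2\error))$, contradicting the hypothesis. The only conceptual pitfall is to ensure that the bound derived for one specific $\cF$ legitimately transfers to the minimax rate over all of $\cF_\quality$; this is the standard ``a maximum is at least the value at any single point'' step, so there is no real obstacle. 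The slack in the inequality $\quality+\quality^2\leq 2\quality$ is at most a factor of $1+\quality\leq 5/3$ in the assumed regime, so the resulting constant $1/2$ in $(1/2\quality)$ is sharp up to small numerical factors.
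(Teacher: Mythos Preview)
Your proposal is correct and follows precisely the paper's own argument: the paper derives the displayed minimax lower bound $\tfrac{1}{2}e^{-(\quality+\quality^2)\taskdegree}$ via the oracle-on-spammer-hammer calculation and Jensen's inequality, and the lemma is then just the contrapositive together with the crude bound $\quality+\quality^2\leq 2\quality$. Your steps (a) and (b) match this exactly, with no missing ideas.
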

To prove this worst-cased bound, we analyzed a specific distribution of the spammer-hammer model. 
However, the result (up to a constant factor) seems to be quite general and can also 
be proved using different distributions, e.g. when all workers have the same quality. 
The assumption on $\quality$ can be relaxed as much as we want, 
by increasing the constant in the necessary budget. 
Compared to the sufficient condition in Corollary~\ref{cor:budgetiterative}
this establishes that our approach is minimax optimal up to a constant factor. 
With our approach you only need to ask (and pay for) 
a constant factor more than what is necessary for any algorithm.

\vspace{0.2cm}
\noindent{\bf Majority voting.} 
As a comparison, we can do similar analysis for the simple majority voting 
and show that the performance is significantly worse than our approach. 
The next lemma provides a bound on the minimax rate of majority voting. 
A proof of this lemma is provided in Section~\ref{sec:proof_majority}. 
\begin{lemma}
	\label{lem:majority}
	For any $C<1$, there exists a positive constant $C'$ such that 
	when $\quality\leq C$, the error achieved by majority voting is at least  
	\begin{eqnarray*}
		\label{eq:majorityerror}
		\min_{\tau\in\cT_\taskdegree}\;\; \max_{\task\in\{\pm1\}^\ntask,\cF\in\cF_\quality}\;\; \frac{1}{\ntask} \sum_{i\in[\ntask]} \prob\big(\task_i\neq\estimate_i\big) &\geq& e^{-C'(\taskdegree\quality^2+1)} \;. 
	\end{eqnarray*}
\end{lemma}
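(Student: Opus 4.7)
The strategy is to exhibit one bad worker distribution in $\cF_\quality$ for which majority voting is provably poor, and to carefully control the resulting per-task error, then average over tasks. We use the spammer-hammer distribution: $\Bp_j = 1$ with probability $\quality$ and $\Bp_j = 1/2$ with probability $1-\quality$. This clearly satisfies $\E[(2\Bp_j-1)^2] = \quality$, so it lies in $\cF_\quality$. Marginalizing over $\Bp_j$, each response $\task_i\BA_{ij}$ equals $+1$ with probability $(1+\quality)/2$ and $-1$ with probability $(1-\quality)/2$, independently across workers. Thus for any non-adaptive assignment $\tau\in\cT_\taskdegree$ that assigns $\taskdegree_i$ workers to task $i$ with $\sum_i\taskdegree_i = \ntask\taskdegree$, the majority voting error on task $i$ equals (up to the $\pm 1$ shift from tie-breaking convention) $\prob\!\bigl(\mathrm{Bin}(\taskdegree_i,(1-\quality)/2)\ge \taskdegree_i/2\bigr)$.

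Next, I would apply Slud's inequality to lower bound this binomial tail by a Gaussian tail. Since $p=(1-\quality)/2\le 1/2$ and the threshold $\taskdegree_i/2$ lies in the interval $[np,n(1-p)] = [\taskdegree_i(1-\quality)/2,\taskdegree_i(1+\quality)/2]$, Slud's inequality gives
\begin{equation*}
\prob\!\bigl(\mathrm{Bin}(\taskdegree_i,(1-\quality)/2)\ge \taskdegree_i/2\bigr) \;\ge\; \prob\bigl(Z \ge x_i\bigr), \qquad x_i \;\equiv\; \quality\sqrt{\taskdegree_i/(1-\quality^2)},
\end{equation*}
where $Z\sim\cN(0,1)$. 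The $\pm 1$ shift needed when $\taskdegree_i$ is even costs at most an absolute constant in the exponent, which will be absorbed in $C'$.

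I would then lower bound the Gaussian tail by $\prob(Z\ge x) \ge e^{-C_0(x^2+1)}$ for a universal $C_0$, using case analysis: for $x\le 1$, bound below by the constant $\prob(Z\ge 1)>0$; for $x\ge 1$, use the Mills-ratio lower bound $\prob(Z\ge x)\ge \phi(x)\cdot x/(x^2+1)$, whose logarithm is bounded below by $-(3/2)x^2 - \mathrm{const}$. Substituting $x_i^2 = \taskdegree_i\quality^2/(1-\quality^2)$ and using the hypothesis $\quality\le C<1$, so $1-\quality^2\ge 1-C^2$, yields a per-task lower bound
\begin{equation*}
\prob\bigl(\task_i\neq \estimate_i\bigr) \;\ge\; \exp\!\bigl\{-C_1(\taskdegree_i\quality^2 + 1)\bigr\},
\end{equation*}
with $C_1$ depending only on $C$.

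Finally, since $x\mapsto \exp\{-C_1(x\quality^2+1)\}$ is convex, Jensen's inequality applied to the average over tasks (together with the constraint $(1/\ntask)\sum_i \taskdegree_i \le \taskdegree$) gives
\begin{equation*}
\frac{1}{\ntask}\sum_{i\in[\ntask]}\prob\bigl(\task_i\neq \estimate_i\bigr) \;\ge\; \exp\!\bigl\{-C_1(\taskdegree\quality^2 + 1)\bigr\},
\end{equation*}
establishing the lemma with $C'=C_1$. The main obstacle is the bookkeeping around Slud's inequality (verifying its range of validity and absorbing the tie-breaking shift) together with combining the small-$x$ and large-$x$ Gaussian tail estimates into the single clean form $e^{-C_0(x^2+1)}$; these steps are routine but require care with constants, and the factor $1/(1-C^2)$ is precisely why $C'$ must be allowed to depend on $C$.
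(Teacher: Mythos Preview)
Your proposal is correct and shares the same overall architecture as the paper's proof: choose the spammer--hammer distribution (so that $\meanquality=\quality$), reduce the per-task majority-voting error to a binomial tail $\prob\bigl(\mathrm{Bin}(\taskdegree_i,(1-\quality)/2)\ge \taskdegree_i/2\bigr)$, lower-bound that tail, and then average over tasks via Jensen.

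The difference lies in how the binomial tail is handled. The paper argues combinatorially: it uses the monotonicity of the binomial pmf below the mean to write $\prob(x_i<0)\ge \sqrt{\taskdegree_i}\cdot\prob(x_i=-2\sqrt{\taskdegree_i})$, then estimates the single point mass via Stirling on ${\taskdegree_i\choose \taskdegree_i/2+\sqrt{\taskdegree_i}}$ together with a second-order expansion of $\log\bigl(\alpha^{\taskdegree_i/2-\sqrt{\taskdegree_i}}(1-\alpha)^{\taskdegree_i/2+\sqrt{\taskdegree_i}}\bigr)$ around $\alpha=1/2$. Your route through Slud's inequality and the Mills-ratio lower bound is more streamlined and avoids the somewhat ad hoc choice of the point $-2\sqrt{\taskdegree_i}$; it also makes the role of the hypothesis $\quality\le C<1$ completely transparent, since the factor $1/(1-\quality^2)$ appears explicitly in $x_i^2$. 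In the paper's argument the same hypothesis is needed to control the $O\bigl(\sqrt{\taskdegree(2\alpha-1)^4}\bigr)$ remainder in the Taylor expansion. Both arguments yield the same exponent and the same dependence of $C'$ on $C$.
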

In terms of the number of queries necessary to achieve a target accuracy $\error$ 
using majority voting, this implies that 
we need to ask at least $(c/\quality^2)\log(c'/\error)$ 
queries per task for some universal constants $c$ and $c'$.
Hence, majority voting is significantly more costly than our approach in terms of budget. 
Our algorithm is more efficient in terms of computational complexity as well. 
Simple majority voting requires $O\big((\ntask/\quality^2)\log(1/\error)\big)$ operations 
to achieve target error rate $\error$ in the worst case. 
From Corollary~\ref{cor:mainiterative}, together with $\taskdegree=O((1/\quality)\log(1/\error))$ 
and $\taskdegree\workerdegree\quality^2=\Omega(1)$, we get that  
our approach requires $O((\ntask/\quality)\log(1/\quality)\log(1/\error))$ operations 
in the worst case.

%
%
\subsection{Fundamental limit under the adaptive scenario}
\label{sec:minimax_adaptive}

In terms of the scaling of the budget necessary to achieve a target accuracy, 
we established that using a {\em non-adaptive} task assignment, 
no algorithm can do better than our approach. 
One might prefer a non-adaptive scheme in practice  because having all the batches
of tasks processed in parallel reduces the latency. 
This is crucial in many applications, 
especially in real-time applications such as searching, visual information processing, 
and document processing \cite{BJJ10,BLM10,YKG10,BBMK11}. 
However, by switching to an adaptive task assignment, 
one might hope to be more efficient and
still obtain a desired accuracy from fewer questions. 
On one hand, adaptation can help improve performance. 
But on the other hand, it can significantly complicate system design due to careful 
synchronization requirements. 
In this section, we want to prove an algorithm-independent  upper bound on how much 
one can gain by using an adaptive task allocation. 

When the identities of the workers are known, 
one might be tempted to first identify which workers are more reliable and 
then assign all the tasks to those workers in an explore/exploit manner. 
However, in typical crowdsourcing platforms such as Amazon Mechanical Turk, it is unrealistic to assume that 
we can identify and reuse any particular worker, 
since typical workers are neither persistent nor identifiable and 
batches are distributed through an open-call. 
Hence, exploiting a reliable worker is not possible. 
However, we can adaptively resubmit batches of tasks; 
we can dynamically choose which subset of tasks 
to assign to the next arriving worker. 
In particular, we can allocate tasks to the next batch 
based on all the information we have on all the tasks from the responses collected thus far. 
For example, one might hope to reduce uncertainty more efficiently 
by adaptively collecting more responses on those tasks that she is less certain about. 

\vspace{0.2cm}
\noindent{\bf Fundamental limit.} 
In this section, we show that, perhaps surprisingly, there is no significant gain 
in switching from our non-adaptive approach to an adaptive strategy when the workers are {\em fleeting}. 
We first prove a lower bound on the minimax error rate: the error that is achieved 
by the best inference algorithm $\estimate$ using the best adaptive task allocation scheme $\tau$ 
under a worst-case worker distribution $\cF$ and the worst-case true answers $\task$. 
Let $\ctT_\taskdegree$ be the set of all task assignment schemes that use at most $\ntask\taskdegree$ queries in total. 
Then, we can show the following lower bound on the minimax rate on the probability of error. 
A proof of this theorem is provided in Section \ref{sec:proof_minimax_adaptive}. 
\begin{thm}
	\label{thm:minimax_adaptive} 
	When $\quality\leq C$ for any constant $C<1$, there exists a 
	positive constant $C'$ such that 
	\begin{eqnarray}
		\min_{\tau\in\ctT_\taskdegree,\estimate}\;\; \max_{\task\in\{\pm1\}^\ntask,\cF\in\cF_\quality}\;\; \frac{1}{\ntask}\sum_{i\in[\ntask]}\prob\big(\task_i\neq\estimate_i\big) &\geq&\frac12e^{-C'\,\taskdegree\quality} \;,
		\label{eq:minimax_adaptive}
	\end{eqnarray}
	for all $\ntask$ 
	where the task assignment scheme $\tau$ ranges over all adaptive schemes that use at most $\ntask\taskdegree$ 
	queries and $\estimate$ ranges over all estimators that are measurable functions over the responses.  
\end{thm}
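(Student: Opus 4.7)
The plan is to establish the lower bound by exhibiting a single hard worker distribution and comparing any adaptive scheme against an oracle that knows the latent reliability of every worker. I pick the spammer--hammer distribution from $\cF_\quality$, exactly as in the non-adaptive lower bound: each arriving worker is independently a \emph{hammer} ($\Bp_j=1$) with probability $\quality$ and a \emph{spammer} ($\Bp_j=1/2$) otherwise. Granting the estimator full knowledge of $\Bp_1,\Bp_2,\ldots$ can only weaken the bound, so it suffices to lower-bound this \emph{oracle} estimator's error. For the oracle, any single hammer response reveals $\task_i$ exactly, while spammer responses carry no information about $\task_i$; hence the oracle errs on task $i$ precisely when no hammer is ever assigned to $i$. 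Placing a uniform prior on $\task_i$ (which lower-bounds the worst-case $\task_i$ by the standard Bayes-risk trick), the conditional error given this event is exactly $1/2$.

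The main step is to bound, for any adaptive scheme $\tau\in\ctT_\taskdegree$, the probability that task $i$ receives no hammer. Let $\ell_i=|\{j:i\in\batch_j\}|$ be the (random) number of queries eventually assigned to task $i$, and let $H_j\in\{0,1\}$ be the hammer indicator of worker $j$. The fleeting-worker assumption says that $H_j$ is independent of the history $\mathcal G_{j-1}=\sigma(\batch_1,\ldots,\batch_{j-1},\BA_{\cdot,1},\ldots,\BA_{\cdot,j-1},H_1,\ldots,H_{j-1})$, while $\batch_j$ is $\mathcal G_{j-1}$-measurable. Telescoping along this filtration and using $\prob(H_j=0\mid\mathcal G_{j-1})=1-\quality$ together with the identity $1-\quality\ind(i\in\batch_j)=(1-\quality)^{\ind(i\in\batch_j)}$ yields
\begin{equation*}
\prob\big(\text{no hammer covers task }i\big)\;=\;\E\Big[\prod_{j}(1-\quality)^{\ind(i\in\batch_j)}\Big]\;=\;\E\big[(1-\quality)^{\ell_i}\big]\;.
\end{equation*}
This is the only place where fleetingness enters: adaptivity lets $\ell_i$ depend on past answers, but past answers reveal nothing about the types of \emph{future} workers.

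Averaging over $i$ and applying Jensen's inequality twice---first to the convex map $x\mapsto(1-\quality)^x$ against the empirical distribution $\{\ell_i/\ntask\}_i$, then against the random variable $\bar\ell=\frac1\ntask\sum_i\ell_i$---gives
\begin{equation*}
\frac1\ntask\sum_{i\in[\ntask]}\E\big[(1-\quality)^{\ell_i}\big]\;\geq\;\E\big[(1-\quality)^{\bar\ell}\big]\;\geq\;(1-\quality)^{\E[\bar\ell]}\;\geq\;(1-\quality)^{\taskdegree}\;,
\end{equation*}
because the budget constraint $\sum_i\ell_i\leq\ntask\taskdegree$ holds pathwise, so $\E[\bar\ell]\leq\taskdegree$. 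Combined with the oracle observation, the average error is at least $\frac12(1-\quality)^\taskdegree$. Since $\quality\leq C<1$, the elementary bound $-\log(1-\quality)\leq\quality/(1-\quality)\leq\quality/(1-C)$ converts this into $\frac12 e^{-C'\taskdegree\quality}$ with $C'=1/(1-C)$, valid for every $\ntask$ without any large-$\ntask$ caveat.

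The main obstacle is justifying the telescoping step cleanly: because $\batch_j$ depends on earlier responses, which depend on $H_{j'}$ for $j'<j$, one worries about hidden correlations between $\batch_j$ and $H_j$. The resolution is to condition on the full past $\mathcal G_{j-1}$ (including all earlier hammer indicators) so that $\batch_j$ becomes deterministic while $H_j$ remains an independent Bernoulli$(\quality)$; iterating this one-step tower property produces the product $\prod_j(1-\quality)^{\ind(i\in\batch_j)}$ cleanly. This is exactly the point at which ``workers are fleeting and cannot be exploited'' enters the argument, and it is what prevents adaptivity from yielding a better scaling than the non-adaptive lower bound.
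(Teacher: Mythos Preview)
Your telescoping identity is wrong, and the choice of the perfect spammer--hammer distribution is fatal for the adaptive case.

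\medskip
\textbf{The telescoping fails.} Conditioning on $\mathcal G_{j-1}$ correctly gives $\E[Z_j\mid\mathcal G_{j-1}]=(1-\quality)^{\ind(i\in\batch_j)}$ for $Z_j=1-H_j\ind(i\in\batch_j)$, but you cannot iterate this to obtain $\E\big[\prod_j Z_j\big]=\E\big[\prod_j(1-\quality)^{\ind(i\in\batch_j)}\big]$. The factor $(1-\quality)^{\ind(i\in\batch_j)}$ you peel off is itself $\mathcal G_{j-1}$-measurable and correlated with the earlier $Z_{j'}$'s through $\batch_j$'s dependence on past answers (hence on past $H_{j'}$'s). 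A two-worker counterexample: $\batch_1=\{i\}$ always, $\batch_2=\{i\}$ iff $H_1=0$. Then $\prob(\text{no hammer on }i)=(1-\quality)^2$, whereas $\E[(1-\quality)^{\ell_i}]=\quality(1-\quality)+(1-\quality)^3$, and these differ for every $\quality\in(0,1)$.

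\medskip
\textbf{The distribution cannot work.} More importantly, no argument based on the perfect-hammer distribution can give the theorem. The paper's Lemma~2.8 constructs an adaptive (non-oracle) scheme that, under exactly this spammer--hammer model, achieves average error $\ntask\taskdegree^2 2^{-\sqrt\ntask}+e^{-c\sqrt\ntask}\to 0$ as $\ntask\to\infty$ whenever $\taskdegree>2/\quality$. Since the oracle can only do better, $\frac1\ntask\sum_i\prob(\text{no hammer on }i)\to 0$ as well, while $(1-\quality)^\taskdegree$ stays bounded away from zero. So the inequality you are aiming for is simply false for this $\cF$. Perfect hammers are precisely what make adaptation powerful: once you can \emph{certify} a hammer from the responses, one hammer settles a task, and an adaptive scheme can reallocate the saved budget.

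\medskip
\textbf{What the paper does instead.} The paper picks a spammer--hammer model with \emph{imperfect} hammers, $\Bp_j=(1+a)/2$ with probability $\quality/a^2$ and $\Bp_j=1/2$ otherwise, so that $\E[(2\Bp_j-1)^2]=\quality$ still holds. Now even a known-good worker answers incorrectly with probability $(1-a)/2$, so the oracle's conditional error on task $i$ is at least $\tfrac12\big((1-a)/2\big)^{\taskdegree_i}$ where $\taskdegree_i$ is the number of non-spammers assigned to $i$. Taking logs, applying Jensen to $\E[\taskdegree_i]=(\quality/a^2)\E[|W_i|]$, summing over $i$ under the budget $\sum_i\E[|W_i|]\leq\ntask\taskdegree$, and finally optimizing over $a\in(0,1)$ yields the constant $C'$. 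The imperfection of the hammers is exactly what blocks the adaptive shortcut that breaks your argument.
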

We cannot avoid the factor of half in the lower bound, since
we can always achieve error probability of half without asking any queries (with $\taskdegree=0$). 
In terms of the budget required to achieve a target accuracy, 
the above lower bound proves that no algorithm, adaptive or non-adaptive, 
can achieve an error rate less than $\error$ with number of queries per task 
less than $(C'/\quality)\log(2/\error)$ in the worst case of worker distribution. 
\begin{coro}
      \label{cor:minimax_adaptive}
	Assuming $\quality\leq C$ for any constant $C<1$ and 
	the {iterative} scenario, there exists a positive constant $C'$ such that 
	if the average number of queries is less than 
	$(C'/\quality)\log(1/2\error)$, then no algorithm can achieve 
	average probability of error less than $\error$ for any $\ntask$ 
	under the worst-case worker distribution. 
\end{coro}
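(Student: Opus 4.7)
The plan is to mirror the structure of the non-adaptive lower bound (Lemma~\ref{lem:budgetoracle}) by choosing the spammer-hammer distribution $\cF \in \cF_\quality$ defined by $\Bp_j = 1$ with probability $\quality$ and $\Bp_j = 1/2$ with probability $1-\quality$, and then reducing to an oracle estimator with access to $\{\Bp_j\}$: since extra information can only decrease error, this yields a valid lower bound on the minimax error over any (adaptive scheme, estimator) pair. The new challenge relative to the non-adaptive case is that the number of queries $L_i := |\{j : i \in \batch_j\}|$ assigned to task $i$ is now a random variable depending on all prior observations, so the Jensen-based averaging used in Lemma~\ref{lem:budgetoracle} does not apply directly.

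For the first stage I would observe that the oracle's optimal rule under spammer-hammer outputs any hammer's response whenever at least one hammer has queried task $i$, and guesses uniformly otherwise, giving $\prob(\task_i \neq \estimate_i) = \tfrac{1}{2}\prob(H_i = 0)$ for $H_i := |\{j : i \in \batch_j,\ \Bp_j = 1\}|$. The adaptivity is then controlled by a measurability argument: since $\batch_j$ is chosen using only the filtration $\mathcal{F}_{j-1}$ generated by the first $j-1$ workers' responses, and $\Bp_j$ is independent of $\mathcal{F}_{j-1}$ by the i.i.d. worker assumption, $|\batch_j|$ and $\Bp_j$ are independent. Hence
\[
\E\Bigl[\sum_j |\batch_j|\,\ind\{\Bp_j = 1\}\Bigr] \;=\; \quality\cdot\E\Bigl[\sum_j |\batch_j|\Bigr] \;\leq\; \quality\,\ntask\,\taskdegree,
\]
and since each hammer query contributes to covering at most one unique task, $\E[\#\{i : H_i \geq 1\}] \leq \quality\ntask\taskdegree$. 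Averaging across tasks gives the linear lower bound $\frac{1}{\ntask}\sum_i\prob(\task_i \neq \estimate_i) \geq \tfrac{1}{2}(1 - \quality\taskdegree)$.

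The remaining step is to promote this linear bound to the exponential form $\tfrac{1}{2}e^{-C'\quality\taskdegree}$ claimed by the theorem. In the regime $\quality\taskdegree \leq c_0 < 1$, the elementary inequality $1-x \geq e^{-x/(1-x)}$ converts the linear bound into the desired exponential with $C' = 1/(1-c_0)$. For $\quality\taskdegree \geq c_0$ the linear bound becomes vacuous, and I would switch to the alternative distribution $\Bp_j \equiv (1+\sqrt{\quality})/2$ in $\cF_\quality$, under which all workers are statistically identical and changing $\task_i$ perturbs only the distribution of responses to task $i$. The KL chain rule for adaptive sampling then gives $\mathrm{KL}(P^+_Y \| P^-_Y) \leq c(C)\,\quality\,\E[L_i]$ for $\quality \leq C$, and applying the Bretagnolle--Huber inequality per task followed by Jensen's inequality on $e^{-x}$ with $\sum_i\E[L_i] \leq \ntask\taskdegree$ produces a bound of the form $\tfrac{1}{4}e^{-c'\quality\taskdegree}$. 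Combining the two regimes and absorbing constants yields the claimed $\tfrac{1}{2}e^{-C'\quality\taskdegree}$ for a suitable $C' = C'(C)$. The main obstacle is executing the measurability and KL chain-rule steps rigorously—showing that the KL decomposes cleanly over $j$ despite adaptive task selection, with $\batch_j$ measurable w.r.t.\ $\mathcal{F}_{j-1}$—and cleanly matching constants at the transition between the two regimes.
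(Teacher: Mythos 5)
Your proposal is correct in outline but proves the underlying bound (Theorem~\ref{thm:minimax_adaptive}) by a genuinely different route. The paper uses a single worker distribution throughout: a spammer--hammer model with \emph{imperfect} hammers, $\Bp_j=(1+a)/2$ with probability $\quality/a^2$ and $\Bp_j=1/2$ otherwise (eventually $a=0.8$). The point of making the hammers imperfect is that even conditioned on task $i$ having been seen by $\taskdegree_i$ reliable workers, they all answer incorrectly with probability $((1-a)/2)^{\taskdegree_i}$, so the oracle's conditional error is at least $\tfrac12\big((1-a)/2\big)^{\taskdegree_i}$ pointwise; two applications of Jensen's inequality (over the random $\taskdegree_i$, then over tasks) then yield $\tfrac12 e^{-C'\taskdegree\quality}$ with no case analysis and no information-theoretic machinery. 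You instead use perfect hammers, which --- as the paper's own Lemma~\ref{lem:counterexample} makes clear --- cannot give more than the linear bound $\tfrac12(1-\quality\taskdegree)$ against adaptive schemes, forcing you to switch to a homogeneous crowd $\Bp_j\equiv(1+\sqrt{\quality})/2$ and a KL chain-rule/Bretagnolle--Huber argument for $\quality\taskdegree\geq c_0$. That second argument is sound: the divergence decomposition for adaptive sampling gives $\mathrm{KL}\leq c(C)\,\quality\,\E[L_i]$ because each response to task $i$ carries $(2p-1)\log(p/(1-p))=O(\quality)$ nats regardless of how the query was chosen, and Jensen over tasks handles $\sum_i\E[L_i]\leq\ntask\taskdegree$; it is the standard adaptive-sampling lower-bound technique, makes the ``adaptivity cannot help'' mechanism explicit, and extends beyond spammer--hammer priors. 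What the paper's choice buys is the elimination of both your case split and the KL machinery: the imperfect-hammer device replaces the event ``no reliable worker saw task $i$'' with the exponentially decaying event ``all reliable workers who saw task $i$ erred,'' which is exactly the one idea that would let your first, elementary Jensen argument run in all regimes at once.
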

Compared to Corollary~\ref{cor:budgetiterative}, 
we have a matching sufficient and necessary conditions up to a constant factor. 
This proves that there is no significant gain in using an adaptive scheme, 
and our approach achieves minimax-optimality up to a constant factor with a non-adaptive scheme. 
This limitation of adaptation strongly relies on the fact that workers are {\em fleeting}
in existing platforms and can not be reused. 
Therefore, architecturally our results suggest that 
building a reliable reputation system for workers would be 
essential to harnessing the potential of adaptive designs. 

\vspace{0.2cm}
\noindent{\bf A counter example for instance-optimality. } 
The above corollary establishes minimax-optimality: 
for the worst-case worker distribution, 
no algorithm can improve over our approach other 
than improving the constant factor in the necessary budget. 
However, this does not imply instance-optimality. 
In fact, there exists a family of worker distributions 
where all non-adaptive algorithms fail to achieve order-optimal performance 
whereas a trivial adaptive algorithm succeeds. 
Hence, for particular instances of worker distributions, 
there exists a gap between what can be achieved using 
non-adaptive algorithms and adaptive ones. 

We will prove this in the case of the spammer-hammer model 
where each new worker is a hammer ($p_j=1$) with probability $\quality$ 
or a spammer ($p_j=1/2$) otherwise.   
We showed in Section~\ref{sec:minimax_nonadaptive} that 
no non-adaptive algorithm can achieve an error less than 
$(1/2)e^{-C'\taskdegree\quality}$ for any value of $\ntask$. 
In particular, this does not vanish even if we increase $\ntask$. 
We will introduce a simple adaptive algorithm and show that this algorithm  
achieves an error probability that goes to zero as $\ntask$ grows. 

The algorithm first groups all the tasks into $\sqrt{\ntask}$ disjoint sets of size $\sqrt{\ntask}$ each. 
Starting with the first group, 
the algorithm assigns all $\sqrt{\ntask}$ tasks to new arriving workers until 
it sees two workers who agreed on all $\sqrt{\ntask}$ tasks. 
It declares those responses as its estimate for this group and 
moves on to the next group. 
This process is repeated until it reaches the allowed number of queries. 
This estimator makes a mistake on a group if $(a)$ there were two spammers who agreed on all $\sqrt{\ntask}$ tasks 
or $(b)$ we run out of allowed number of queries before we finish the last group. Formally, we can prove the following upper bound on the probability of error. 
\begin{lemma}
	\label{lem:counterexample}
	Under the spammer-hammer model, 
	when the allowed number of queries per task $\taskdegree$ is larger than $2/\quality$, 
	there is an adaptive task allocation scheme and an inference algorithm that achieves 
	average probability of error at most $\ntask\taskdegree^2 2^{-\sqrt{\ntask}}+e^{-(2/\taskdegree)(\taskdegree\quality-2)^2\sqrt{\ntask}}$. 
\end{lemma}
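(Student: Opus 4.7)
The plan is to decompose the error event into two components and bound each separately. Let $E_a$ denote the event that some group is terminated by a ``bad'' pair, i.e., two workers that happen to agree on all $\sqrt{\ntask}$ tasks of the group but at least one of whom is a spammer. Let $E_b$ denote the event that the algorithm exhausts its $\ntask\taskdegree$-query budget before completing every group. The key observation is that if neither $E_a$ nor $E_b$ occurs, then every group is terminated by a pair of hammers whose common (and correct) responses form the estimate for that group, so all $\ntask$ answers are exactly correct. Hence, for every task $i$, $\prob(\estimate_i\neq \task_i) \leq \prob(E_a) + \prob(E_b)$, and averaging over $i$ preserves this bound. Both probabilities then need to be controlled.

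\textbf{Bounding $\prob(E_b)$.} I would model the arrival sequence as i.i.d.\ Bernoulli$(\quality)$ indicators of whether the next worker is a hammer. In the ``clean'' scenario where no bad-pair terminations happen, each of the $\sqrt{\ntask}$ groups is closed off as soon as two hammers have arrived within it, so completing all groups requires exactly $2\sqrt{\ntask}$ hammers; bad-pair terminations can only decrease, never increase, the number of workers consumed. Since every worker processes $\sqrt{\ntask}$ tasks, the total budget of $\ntask\taskdegree$ queries corresponds to at most $\sqrt{\ntask}\taskdegree$ workers. Consequently $E_b$ is contained in the event $\{\text{Bin}(\sqrt{\ntask}\taskdegree,\,\quality)<2\sqrt{\ntask}\}$. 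The mean of this binomial is $\sqrt{\ntask}\taskdegree\quality$, which exceeds the threshold $2\sqrt{\ntask}$ by $\sqrt{\ntask}(\taskdegree\quality-2)>0$ under the hypothesis $\taskdegree>2/\quality$. Hoeffding's inequality then yields $\prob(E_b) \leq \exp\!\bigl(-2\sqrt{\ntask}(\taskdegree\quality-2)^2/\taskdegree\bigr)$, matching the second term in the claim.

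\textbf{Bounding $\prob(E_a)$.} On the complement of $E_b$ the algorithm uses at most $\sqrt{\ntask}\taskdegree$ workers, so the total number of unordered worker pairs that could ever share a group is at most $\binom{\sqrt{\ntask}\taskdegree}{2}\leq \ntask\taskdegree^2/2$. For any specific pair $(j,j')$ with at least one spammer, and conditioned on both of them working on a common group, the $\sqrt{\ntask}$ agreement indicators are independent $\text{Bernoulli}(1/2)$ random variables: a spammer's output on each task is a fair coin flip independent of everything else, so a spammer matches any fixed label with probability $1/2$ per task. Therefore such a pair agrees on all $\sqrt{\ntask}$ tasks with probability exactly $2^{-\sqrt{\ntask}}$. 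A union bound over all candidate pairs gives $\prob(E_a) \leq \ntask\taskdegree^2\,2^{-\sqrt{\ntask}}$. Combining this with the previous step yields the stated inequality.

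\textbf{Main obstacle.} The only delicate point is the union bound for $E_a$: the event that two specific workers end up in the \emph{same} group is intertwined with their agreement event, because the algorithm's assignment of workers to groups is adaptive and depends on earlier agreements. I would handle this by enlarging the probability space---pre-generating each worker's type and a full row of hypothetical $\pm 1$ answers on all $\ntask$ tasks before the algorithm runs. Then the assignment of workers to groups is a deterministic function of these random variables, while the event that workers $j$ and $j'$ (with at least one spammer) agree on the $\sqrt{\ntask}$ tasks of whichever group they share depends only on those $\sqrt{\ntask}$ coordinates of the two pre-generated rows. The fair-coin-flip structure of spammers makes the $2^{-\sqrt{\ntask}}$ bound hold conditionally on types and assignment, so marginalizing out the assignment preserves the bound and the union bound over all at most $\binom{\sqrt{\ntask}\taskdegree}{2}$ candidate pairs goes through cleanly.
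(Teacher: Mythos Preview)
Your proposal is correct and follows essentially the same two-event decomposition as the paper's proof: union-bound over worker pairs for the ``bad agreement'' event and a Hoeffding bound on the number of hammers among the at most $\ell\sqrt{m}$ workers for the budget-exhaustion event. The paper differs only cosmetically---it restricts the bad pairs to spammer-spammer (your ``at least one spammer'' is broader but harmless, since a spammer-hammer agreement yields correct answers anyway) and it bounds $E_b$ \emph{conditionally} on no spammer-spammer agreement rather than via your unconditional monotonicity observation; your handling of the adaptivity subtlety through pre-generation is in fact more explicit than what the paper writes down.
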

\begin{proof} 
Recall that we are only allowed $\taskdegree\ntask$ queries. 
Since we are allocating $\sqrt\ntask$ queries per worker, 
we can only ask at most $\taskdegree\sqrt{\ntask}$ workers. 
First, the probability that there is at least one pair of spammers 
(among all possible pairs from $\taskdegree\sqrt\ntask$ workers)
who agreed an all $\sqrt\ntask$ responses is  
at most $\ntask\taskdegree^2 2^{-\sqrt\ntask}$.
Next, given that no pairs of spammers agreed on all their responses, 
the probability that we run out of all $\ntask\taskdegree$ allowed queries is 
the probability that the number of hammers in $\taskdegree\sqrt\ntask$ workers 
is strictly less than $2\sqrt\ntask$ (which is the number of hammers 
we need in order to terminate the algorithm, 
conditioned on that no spammers agree with one another). 
By standard concentration results, 
this happens with probability at most 
$e^{-(2/\taskdegree)(\taskdegree\quality-2)^2\sqrt{\ntask}}$. 
\end{proof}

This proves the existence of an adaptive algorithm 
which achieves vanishing error probability as $\ntask$ grows for a board range of task degree $\taskdegree$. 
Comparing the above upper bound with the known lower bound for non-adaptive schemes, 
this proves that non-adaptive algorithms cannot be instance optimal: 
there is a family of distributions where 
adaptation can significantly improve performance.
This is generally true when there is a strictly positive probability 
that a worker is a hammer ($p_j=1$). 

One might be tempted to apply the above algorithm 
in more general settings other than the spammer-hammer model. 
However, this algorithm fails when there are no perfect workers in the crowd. 
If we apply this algorithm in such a general setting, 
then it produces useless answers: 
the probability of error approaches half as $\ntask$ grows for any finite $\taskdegree$.


\subsection{Connections to low-rank matrix approximation}
\label{sec:relationlowrank}

In this section, we first explain why the top singular vector of the data matrix $\BA$ 
reveals the true answers of the tasks, 
where $\BA$ is the $\ntask\times\nworker$ matrix of the responses and 
we fill in zeros wherever we have no responses collected. 
This naturally defines a spectral algorithm for inference which we present next. 
It was proven in \cite{KOS11allerton} 
that the error achieved by this spectral algorithm is upper bounded by 
$C/(\taskdegree\quality)$ with some constant $C$. 
But numerical experiments (cf. Figure~\ref{fig:comparison}) suggest that the error decays much faster, 
and that the gap is due to the weakness of the analysis used in \cite{KOS11allerton}. 
Inspired by this spectral approach, 
we introduced a novel inference algorithm that performs as well as the spectral algorithm (cf. Figure~\ref{fig:comparison}) and proved a much tighter upper bound on the resulting error which 
scales as $e^{-C'\taskdegree\quality}$ with some constant $C'$. 
Our inference algorithm is based on {\em power iteration}, 
which is a well-known algorithm for computing the top singular vector of a matrix, 
and Figure~\ref{fig:comparison} suggests that both algorithms are equally effective  
and the resulting errors are almost identical.  


The data matrix $\BA$ can be viewed as a rank-$1$ matrix that is perturbed by random noise. 
Since, $\E[\BA_{ij}|\task_i,\Bp_j]=(\workerdegree/\ntask)\task_i(2\Bp_j-1)$, 
the conditional expectation of this matrix is  
\begin{eqnarray*}
	\E\big[\BA\,|\,\task,\Bp\big] &=& \Big(\frac{\workerdegree}{\ntask}\Big)\task(2\Bp-\ones)^T\;,
\end{eqnarray*}
where $\ones$ is the all ones vector, 
the vector of correct solutions is $\task=\{\task_i\}_{i\in[\ntask]}$ and 
the vector of worker reliability is $\Bp=\{\Bp_j\}_{j\in[\nworker]}$. 
Notice that the rank of this conditional expectation matrix is one and 
this matrix reveals the correct solutions exactly. 
We can decompose $\BA$ into a low-rank expectation plus a random perturbation: 
\begin{eqnarray*}
	\BA &=& \Big(\frac{\workerdegree}{\ntask}\Big)\task(2\Bp-\ones)^T + \BZ\;,
\end{eqnarray*}
where $\BZ\equiv\BA-\E\big[\BA\,|\,\task,\Bp\big]$ is the random perturbation with zero mean.
When the spectral radius of the noise matrix $\BZ$ is much smaller than the spectral radius of the signal, 
we can correctly extract most of $\task$ using the leading left singular vector of $\BA$. 

Under the crowdsourcing model considered in this paper, 
an inference algorithm using the top left singular vector of $\BA$ 
was introduced and analyzed by Karger et al. \cite{KOS11allerton}. 
Let $u$ be the top left singular vector of $\BA$. 
They proposed estimating $\estimate_i = \sign(u_i)$ 
and proved an upper bound on the probability of error that scales as $O(1/\taskdegree\quality)$. 
The main technique behind this result is 
in analyzing the spectral gap of $\BA$. 
It is not difficult to see that the spectral radius of 
the conditional expectation matrix is $(\workerdegree/\ntask)\|\task(2\Bp-\ones)^T\|_2 = \sqrt{\taskdegree\workerdegree\quality}$, 
where the operator norm of a matrix is denoted by $\|X\|_2\equiv \max_a \|X a\|/\|a\|$. 
Karger et al. proved that the spectral radius of the perturbation $\|\BZ\|_2$ is 
in the order of $(\taskdegree\workerdegree)^{1/4}$. 
Hence, when $\taskdegree\workerdegree\quality^2\gg1$, 
we expect a separation between the conditional expectation and the noise. 

One way to compute the leading singular vector is to use power iteration:  
for two vectors $u\in\reals^\ntask$ and $v\in\reals^n$, 
starting with a randomly initialized $v$, power iteration iteratively updates 
$u$ and $v$ by repeating $u=Av$ and $v=A^Tu$. 
It is known that normalized $u$ (and $v$) converges linearly to the leading left (and right) singular vector. 
Then we can use the sign of $u_i$ to estimate $\task_i$. 
Writing the update rule for each entry, we get 
\begin{eqnarray*}
	u_i=\sum_{j\in\partial i} A_{ij}v_j ,\;\;\;\;\;\;\;\;\;\; v_j=\sum_{i\in\partial j} A_{ij}u_i \;\;.  
\end{eqnarray*}
Notice that this power iteration update rule is  
almost identical to those of message passing updates in \eqref{eq:messageupdate1} 
and \eqref{eq:messageupdate2}. 
The $\taskdegree$ task messages $\{x_{i\to j}\}_{j\in\partial i}$ from task $i$ are close in value to 
the entry $u_i$ of the power iteration. 
The $\workerdegree$ worker messages $\{y_{j\to i}\}_{i\in\partial j}$ from worker $j$ are close in value to 
the entry $v_j$ of the power iteration. 
Numerical simulations in Figure~\ref{fig:comparison} suggest that 
the quality of the estimates from the two algorithms are almost identical. 
However, the known performance guarantee for the spectral approach is weak. 
We developed novel analysis techniques to analyze our message passing algorithm, 
and provide an upper bound on the error that scales as $e^{-C\taskdegree\quality}$. 
It might be possible to apply our algorithm, together with the analysis techniques, 
to other problems where the top singular vector of a data matrix is used for inference.

\subsection{Connections to belief propagation}
\label{sec:relationbp}
The crowdsourcing model described in this paper can naturally be described using a graphical model. 
Let $G([\ntask]\times[\nworker],E,A)$ denote the weighted bipartite graph, where 
$[\ntask]$ is  the set of $\ntask$ task nodes, $[\nworker]$ is the set of $\nworker$ worker nodes, 
$E$ is the set of edges connecting a task to a worker who is assigned that task, 
and $A$ is the set of weights on those edges according to the responses. 
Given such a graph, we want to find a set of task answers that 
maximize the following posterior distribution 
$\bpdist(\estimate,p):\{\pm1\}^\ntask\times[0,1]^\nworker\to\reals^+$.  
\begin{eqnarray*}
	\max_{\estimate,p} \;\;\;\prod_{a\in[\nworker]}\crowddist(p_a)\; \prod_{(i,a)\in E}\Big\{p_a\ind(\estimate_i=A_{ia})+(1-p_a)\ind(\estimate_i\neq A_{ia})\Big\}\;,
\end{eqnarray*}
where with a slight abuse of notation we use $\crowddist(\cdot)$ to denote the prior probability density function of $p_a$'s and we use $i$ and $j$ to denote task nodes and $a$ and $b$ to denote worker nodes. 
For such a problem of finding the most probable realization in a graphical model,  
the celebrated belief propagation (BP) gives a good approximate solution. 
To be precise, BP is an approximation for maximizing the marginal distribution of each variable, 
and a similar algorithm known as min-sum algorithm approximates the most probable realization. 
However, the two algorithms are closely related, and in this section we only present standard BP. 
There is a long line of literature providing 
the theoretical and empirical evidences supporting the use BP \cite{Pearl88,YFW03}. 

Under the crowdsourcing graphical model, standard BP operates on two sets of messages: 
the task messages $\{\tx_{i\to a}\}_{(i,a)\in E}$ and 
the worker messages $\{\ty_{a\to i}\}_{(i,a)\in E}$. 
In our iterative algorithm the messages were scalar variables with real values, 
whereas the messages in BP are probability density functions. 
Each task message corresponds to an edge and each worker message also corresponds to an edge. 
The task node $i$ corresponds to random variable $\estimate_i$, 
and the task message from task $i$ to worker $a$, denoted by $\tx_{i\to a}$, 
represents our belief on the random variable $\estimate_i$. 
Then $\tx_{i\to a}$ is a probability distribution over $\{\pm1\}$. 
Similarly, a worker node $a$ corresponds to a random variable $p_a$. 
The worker message $\ty_{a\to i}$ is a probability distribution of $p_a$ over $[0,1]$. 
Following the standard BP framework, 
we iteratively update the messages according to the following rule. 
We start with randomly initialized $\tx_{i\to a}$'s and at $k$-th iteration,  
\begin{eqnarray*}
	\ty^{(k)}_{a\to i}(p_a) &\propto& \crowddist(p_a)\prod_{j\in\partial a \setminus i} \Big\{(p_a+\bp_a+(p_a-\bp_a)A_{ja})\tx_{j\to a}^{(k)}(+1) + 
		( p_a+\bp_a-(p_a-\bp_a)A_{ja})\tx_{j\to a}^{(k)}(-1) \Big\}\;,\\
	\tx^{(k+1)}_{i\to a}(\estimate_i) &\propto& \prod_{b\in\partial i \setminus a} \int \Big( \ty^{(k)}_{b\to i}(p_b)\big(p_b\ind_{(A_{ib}=\estimate_i)}+\bp_b\ind_{(A_{ib}\neq\estimate_i)}\big)\Big)\, dp_b\;,
\end{eqnarray*}
for all $(i,a)\in E$ and for $\bp=1-p$. 
The above update rule only determines the messages up to a scaling, 
where $\propto$ indicates that the left-hand side is proportional to the right-hand side. 
The algorithm produces the same estimates in the end regardless of the scaling. 
After a predefined number of iterations, we make a decision by computing the decision variable 
\begin{eqnarray*}
  \tx_{i}(\estimate_i) &\propto& \prod_{b\in\partial i} \int \Big(\ty^{(k)}_{b\to i}(p_b)\big(p_b\ind_{(A_{ib}=\estimate_i)}+\bp_b\ind_{(A_{ib}\neq\estimate_i)}\big)\Big)\, dp_b\;,
\end{eqnarray*}
and estimating $\estimate_i=\sign\big(\tx_{i}(+)-\tx_{i}(-)\big)$.

In a special case of a Haldane prior, where a worker either always tells the truth or always gives the wrong answer,  
\begin{eqnarray*}
 p_j &=& \left\{ 
      \begin{array}{rl} 0 &\text{ with probability }1/2 \;,\\
      1 &\text{ otherwise } \;, \end{array}
	      \right.
\end{eqnarray*}
the above BP updates boils down to our iterative inference algorithm. 
Let $x_{i\to a}=\log\big(\tx_{i\to a}(+)/\tx_{i\to a}(-)\big)$ denote the log-likelihood of $\tx_{i\to a}(\cdot)$. 
Under Haldane prior, $p_a$ is also a binary random variable. 
We can use $y_{a\to i}=\log\big(\ty_{a\to i}(1)/\ty_{a\to i}(0)\big)$ to denote the log-likelihood of $\ty_{a\to i}(\cdot)$.
After some simplifications, the above BP update boils down to 
\begin{eqnarray*}
	y_{a\to i}^{(k)} &=& \sum_{j\in\partial a \setminus i} A_{ja}x_{j\to a}^{(k-1)} \;,\\
	x_{i\to a}^{(k)} &=& \sum_{b\in\partial i \setminus a} A_{ib}y_{b\to i}^{(k)} \;.
\end{eqnarray*}
This is exactly the same update rule as our iterative inference algorithm 
(cf. Eqs.~\eqref{eq:messageupdate1} ad \eqref{eq:messageupdate2}). 
Thus, our algorithms is belief propagation for a very specific prior. 
Despite this, it is surprising that it performs near optimally 
(with random regular graph for task allocation) for all priors. 
This robustness property is due to the models assumed in this crowdsourcing problem 
and is not to be expected in general. 
%
%
\subsection{Discussion}
\label{sec:discuss}

In this section, we discuss 
several implications of our main results and 
possible future research directions 
in generalizing the model studied in this paper.

\vspace{0.2cm}
\noindent{\bf Below phase transition. } 
We first discuss the performance guarantees in the below threshold regime when $\hl\hr\quality^2<1$. 
As we will show, the bound in \eqref{eq:iterativelimit} always holds even when $\hl\hr\quality^2\leq1$. 
However, numerical experiments suggest that we should stop our algorithm at first iteration when 
we are below the phase transition as discussed in Section~\ref{sec:theory}. We provide 
an upper bound on the resulting error when only one iteration of our iterative inference 
algorithm is used (which is equivalent as majority voting algorithm). 

Notice that the bound in \eqref{eq:iterativelimit} 
is only meaningful when it is less than a half. 
When $\hl\hr\quality^2\leq1$ or $\taskdegree\quality<24 \log 2$, 
the right-hand side of inequality~\eqref{eq:iterativelimit} is always larger than half. 
Hence the upper bound always holds, even without the assumption that $\hl\hr\quality^2>1$, 
and we only have that assumption in the statement of our main theorem to emphasize the phase transition 
in how our algorithm behaves.
 
However, we can also try to get a tighter bound than a trivial half implied from \eqref{eq:iterativelimit} 
in the below threshold regime. 
Specifically, we empirically observe that the error rate increases as the number of iterations $k$ increases. 
Therefore, it makes sense to use $k = 1$. 
In which case, the algorithm essentially boils down to the majority rule. 
We can prove the following error bound which generally holds for any regime of 
$\taskdegree$, $\workerdegree$ and the worker distribution $\cF$.
A proof of this statement is provided in Section~\ref{sec:majority_upperbound}.
\begin{lemma}
	\label{lem:majority_upperbound}
	For any value of $\taskdegree$, $\workerdegree$, and $\ntask$, and any distribution of workers $\cF$, 
	the estimates we get after first step of our algorithm achieve 
	\begin{eqnarray}
		  \label{eq:iterative2}
		  \frac{1}{\ntask}\sum_{i=1}^\ntask\prob\big(\task_i\neq\estimate_i\big) &\leq& e^{-\taskdegree\meanquality^2/4} \;,
	\end{eqnarray}
	where $\meanquality=\E_\cF[2\Bp_j-1]$.
\end{lemma}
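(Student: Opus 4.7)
The plan is to analyze the first-step decision variable explicitly. By the algorithm's specification, running it with $k_{\max}=1$ produces the estimate $\estimate_i = \sign(x_i)$ with $x_i = \sum_{j \in \partial i} A_{ij}\,Z_{ij}$, where $\{Z_{ij}\}_{(i,j)\in E}$ are i.i.d.\ $\cN(1,1)$ from the initialization. Fix a task $i$ with true label $\task_i$ and define the signed responses $\epsilon_j := \task_i A_{ij}$ for $j \in \partial i$. Conditional on $\Bp_j$, we have $\epsilon_j = +1$ with probability $\Bp_j$ and $-1$ otherwise, independently of $Z_{ij}$; hence unconditionally $\epsilon_j \in \{\pm 1\}$ are i.i.d.\ with $\E[\epsilon_j] = \meanquality$ and remain independent of $Z_{ij}$.

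The key structural observation is the additive Gaussian decomposition
\begin{equation*}
\task_i A_{ij} Z_{ij} \;=\; \epsilon_j\, Z_{ij} \;=\; \epsilon_j \;+\; N_j, \qquad N_j := \epsilon_j(Z_{ij}-1).
\end{equation*}
Since $Z_{ij}-1 \sim \cN(0,1)$ is independent of $\epsilon_j \in \{\pm 1\}$ and the standard normal is symmetric, $N_j \sim \cN(0,1)$ and is independent of $\epsilon_j$; moreover the $N_j$'s are independent across $j$. Letting $S := \task_i x_i = \sum_{j \in \partial i}(\epsilon_j + N_j)$, the error event is contained in $\{S \leq 0\}$, so it suffices to prove a one-sided deviation bound for a sum of $\taskdegree$ independent summands with mean $\meanquality$.

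I would then invoke a standard Chernoff bound: for $\lambda > 0$,
\begin{equation*}
\prob(S \leq 0) \;\leq\; \prod_{j \in \partial i} \E\!\left[e^{-\lambda \epsilon_j}\right]\,\E\!\left[e^{-\lambda N_j}\right].
\end{equation*}
The Gaussian factor gives $e^{\lambda^2/2}$, while Hoeffding's lemma applied to $\epsilon_j \in [-1,1]$ with mean $\meanquality$ yields $\E[e^{-\lambda \epsilon_j}] \leq e^{-\lambda\meanquality + \lambda^2/2}$. Multiplying, $\E[e^{-\lambda S}] \leq e^{-\lambda \taskdegree \meanquality + \lambda^2 \taskdegree}$, and optimizing at $\lambda = \meanquality/2$ yields $\prob(\task_i \neq \estimate_i) \leq e^{-\taskdegree\meanquality^2/4}$. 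Since this bound is uniform in $i$ and independent of $\cF$ beyond $\meanquality$, averaging over $i$ preserves it.

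There is no real obstacle here: the only non-trivial step is spotting the decomposition $\epsilon_j Z_{ij} = \epsilon_j + N_j$ with $N_j$ a standard normal independent of $\epsilon_j$; once that is in place the computation is a textbook Chernoff/Hoeffding argument and no assumption on $\quality$, $\taskdegree\workerdegree\quality^2$, or graph structure beyond the task degree is required, matching the universality claimed in the lemma.
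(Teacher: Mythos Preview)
Your proposal is correct and is essentially the same argument as the paper's. The paper computes the moment generating function of $\task_i x_i$ directly by conditioning on the sign $\BA_{ij}$, obtaining $\bigl(\E[\Bp]\,e^{\lambda}+(1-\E[\Bp])\,e^{-\lambda}\bigr)e^{\lambda^2/2}$ per summand, then applies the same Hoeffding inequality $ae^{z}+(1-a)e^{-z}\le e^{(2a-1)z+z^2/2}$ and optimizes at $\lambda=\meanquality/2$; your decomposition $\epsilon_j Z_{ij}=\epsilon_j+N_j$ with $N_j\sim\cN(0,1)$ independent of $\epsilon_j$ is just a probabilistic restatement of that same factorization, and the resulting bound $e^{-\taskdegree\meanquality\lambda+\taskdegree\lambda^2}$ and its optimizer coincide exactly.
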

Since $\meanquality$ is always between $\quality$ and $\quality^{1/2}$, 
the scaling of the above error exponent is always worse than 
what we have after running our algorithm for a long time (cf. Theorem~\ref{thm:main}). 
This suggests that iterating our inference algorithm helps when $\hl\hr\quality^2>1$ and especially when 
the gap between $\meanquality$ and $\quality$ is large. 
Under these conditions, our approach does significantly better than majority voting (cf. Figure~\ref{fig:comparison}). 
The gain of using our approach is maximized when there exists both good workers and bad workers. 
This is consistent with our intuition that 
when there is a variety of workers, our algorithm can identify the good ones  
and get better estimates. 

\vspace{0.2cm}
\noindent{\bf Golden standard units. } 
Next, consider the variation where we ask questions to 
workers whose answers are already known (also known as `gold standard units'). 
We can use these to assess the quality of the workers.  
There are two ways we can use this information. 
First, we can embed `seed gold units' along with the standard tasks, 
and use these `seed gold units' in turn to perform more informed inference. 
However, we can show that there is no gain in using such `seed gold units'. 
The optimal lower bound of $1/\quality \log (1/\error)$ 
essentially utilizes the existence of oracle that can identify the reliability
of every worker {\em exactly}, i.e. the oracle has a lot more information than what can be gained
by such embedded golden questions. 
Therefore, clearly `seed gold units' {\em do not} help the oracle estimator, 
and hence the order optimality of our approach still holds even if we include all the 
strategies that can utilize these `seed gold units'. 
However, in practice, it is common to use the `seed gold units', 
and this can improve the constant factor in the required budget, but not the scaling.  

Alternatively, we can use `pilot gold units' as  
qualifying or pilot questions that the workers must complete to qualify to participate. 
Typically a taskmaster do not have to pay for these qualifying questions 
and this provides an effective way to increase the quality of the participating workers. 
Our approach can benefit from such `pilot gold units', 
which has the effect of increasing the effective collective quality of the crowd $\quality$. 
Further, if we can `measure' how the distribution of workers change when using 
pilot questions, 
then our main result fully describes how much we can gain by such pilot questions. 
In any case, pilot questions only change the distribution of participating workers, 
and the order-optimality of our approach still holds even if we compare 
all the schemes that use the same pilot questions. 

\vspace{0.2cm}
\noindent{\bf How to optimize over a multiple choices of crowds. } 
We next consider the scenario where 
we have a choice over which crowdsourcing platform to use 
from a set of platforms with different crowds. 
Each crowd might have different worker distributions with different prices. 
Specifically, suppose there are $K$ crowds of workers:
the $k$-th crowd has collective quality $\quality_k$ and 
requires payment of $c_k$ to perform a task. 
Now our optimality result suggests that the per-task cost scales as 
$c_k/q_k \log (1/\error)$ if we only used workers of class $k$. 
More generally,
if we use a mix of these workers, say $\alpha_k$ fraction of workers 
from class $k$, with $\sum_k \alpha_k = 1$, then the effective parameter
$\quality = \sum_k \alpha_k \quality_k$. And subject to this, the optimal per task 
cost scales as $(\sum_k \alpha_k c_k)/(\sum_k \alpha_k \quality_k) \log (1/\error)$. 
This immediately suggests that the optimal choice of fraction $\alpha_k$ must
be such that $\alpha_k > 0$ only if $c_k/\quality_k = \min_{i} c_i/\quality_i$. 
That is, the optimal choice is to select workers only from the classes that have maximal quality per cost 
ratio of $\quality_k/c_k$ over $k\in[K]$. 
One implication of this observation is that it suggests a pricing scheme for crowdsourcing platforms. 
If you are managing a crowdsourcing platform with 
the collective quality $\quality$ and the cost $c$ 
and there is another crowdsourcing platform with $\quality'$ and $c'$, 
you want to choose the cost such that the quality per cost ratio is at least as good as the other crowd:
$\quality/c\,\geq\,\quality'/c'$.

\vspace{0.2cm}
\noindent{\bf General crowdsourcing models. } 
Finally, we consider possible generalizations of our model. 
The model assumed in this paper does not capture 
several factors: tasks with different level of difficulties or 
workers who always answer positive or negative. 
In general, the responses of a worker $j$ to a binary question $i$ may depend on several factors: 
$(i)$ the correct answer to the task;  
$(ii)$ the difficulty of the task; 
$(iii)$ the expertise or the reliability of the worker; 
$(iv)$ the bias of the worker towards positive or negative answers. 
Let $\task_i\in\{+1,-1\}$ represent the correct answer and 
$r_i\in[0,\infty)$ represents the level of difficulty. 
also, let $\alpha_j\in[-\infty,\infty]$ represent the reliability and 
$\beta_j\in(-\infty,\infty)$ represent the bias of worker $j$. 
In formula, a worker $j$'s response to a binary task $i$ can be modeled as 
\begin{eqnarray*}
	\BA_{ij} = \sign(\BZ_{i,j}) \;, 
\end{eqnarray*}
where $\BZ_{i,j}$ is a Gaussian random variable distributed as 
$\BZ_{i,j}\sim\Gauss(\alpha_j \task_i+\beta_j, r_i)$ 
and $\sign(\BZ)=1$ almost surely for $\BZ\sim\Gauss(\infty,1)$. 
A task with $r_i = 0$ is an easy task and large $r_i$ is a difficult task. 
A worker with large positive $\alpha_j$ is more likely to give the right answer 
and large negative $\alpha_j$ is more likely to give the wrong answer. 
When $\alpha_j=0$, the worker gives independent answers regardless of what
the correct answer is. 
A worker with large $\beta_j$ is biased towards positive responses 
and if $\beta_j=0$ then the worker is unbiased. 
A similar model with multi-dimensional latent variables was studied in \cite{Welinder10}.

Most of the models studied in the crowdsourcing literature can be reduced to a special case of this model. 
For example, the early patient-classification model introduced by Dawid and Skene \cite{DS79} 
is equivalent to the above Gaussian model with $r_i=1$. 
Each worker is represented by two latent quality parameters 
$p_j^+$ and $p_j^-$, such that 
\begin{eqnarray*}
  \BA_{ij} &=& \left\{ 
      \begin{array}{rl} \task_i &\text{ with probability }p_j^{\task_i} \;,\\
      -\task_i &\text{ otherwise}\;. \end{array}
	      \right.
\end{eqnarray*}
This model captures the bias of workers.
More recently, Whitehill et al. \cite{whitehill09} introduced another model where 
$\prob(A_{ij}=\task_i|a_i,b_j) = 1/(1+e^{-a_ib_j})$, with worker reliability $a_i$ and task difficulty $b_j$. 
This is again a special case of the above Gaussian model if we set $\beta_j=0$.
The model we study in this paper has  
an underlying assumption that all the tasks share an equal level of difficulty 
and the workers are unbiased.  
It is equivalent to the above Gaussian model with $\beta_j=0$ and $r_i=1$. 
In this case, there is a one-to-one relation between the worker reliability $p_j$ and $\alpha_j$: 
$p_j=Q(\alpha_j)$, where $Q(\cdot)$ is the tail probability of the standard Gaussian distribution. 

%
%
\section{Proof of main results}
\label{sec:proof}

In this section, we provide proofs of the main results.

%
%
\subsection{Proof of the main result in Theorem \ref{thm:main}}
\label{sec:iterativeproof}

By symmetry, we can assume that all $\task_i$'s are $+1$. 
Let $\estimate^{(k)}_i$ denote the resulting estimate of task $i$ 
after $k$ iterations of our iterative inference algorithm defined in Section \ref{sec:algorithm}. 
If we draw a random task $\bI$ uniformly in $[m]$, then we want to compute 
the average error probability, which is the probability that 
we make an error on this randomly chosen task: 
\begin{eqnarray}
	\label{eq:errorbound}
	\frac{1}{m}\sum_{i\in[m]}\prob\big(\,\task_i\neq\estimate^{(k)}_i\,\big) &=& \prob\big(\,\task_\bI\neq\estimate^{(k)}_\bI\,\big)\;.
\end{eqnarray}

We will prove an upper bound on the probability of error in two steps. 
First, we prove that the local neighborhood of a randomly chosen task node $\bI$ is a tree with high probability. 
Then, assuming that the graph is locally tree-like, 
we provide an upper bound on the error using a technique 
known as {\em density evolution}. 

We construct a random bipartite graph $\bG([m]\cup[n],E)$ according to the configuration model.  
We start with $[\ntask]\times[\taskdegree]$ half-edges for task nodes and 
$[\nworker]\times[\workerdegree]$ half-edges for the worker nodes, 
and pair all the $\ntask\taskdegree$ task half-edges to 
the same number of worker half-edges 
according to a random permutation of $[\ntask\taskdegree]$. 

Let $\bG_{i,k}$ denote a subgraph of $\bG([m]\cup[n],E)$ 
that includes all the nodes whose distance from the `root' $i$ is at most $k$. 
At first iteration of our inference algorithm, 
to estimate the task $i$, 
we only use the responses provided 
by the workers who were assigned to task $i$. 
Hence we are performing inference on the local neighborhood $\bG_{i,1}$. 
Similarly, when we run $k$ iterations of our (message-passing) inference algorithm to estimate a task $i$, 
we only run inference on local subgraph $\bG_{i,2k-1}$. 
Since we update both task and worker messages, 
we need to grow the subgraph by distance two at each iteration. 
When this local subgraph is a tree, 
then we can apply density evolution to analyze the probability of error. 
When this local subgraph is not a tree, we can make a pessimistic assumption that  
an error has been made to get an upper bound on the actual error probability.  
\begin{eqnarray}
	\label{eq:bounddecompose}
	 \prob\big(\,\task_\bI\neq\estimate^{(k)}_\bI\,\big) &\leq& \prob\big(\,\bG_{\bI,2k-1}\text{ is not a tree}\,\big) + \prob\big(\,\bG_{\bI,2k-1} \text{ is a tree}\text{ and } \task_\bI\neq\estimate^{(k)}_\bI \,\big)\;.
\end{eqnarray}
Next lemma bounds the first term and shows that the probability that a local subgraph is not a tree 
vanishes as $\ntask$ grows.  A proof of this lemma is provided in Section~\ref{sec:prooflocaltree}. 
\begin{lemma}
	\label{lem:localtree}
	For a random $(\taskdegree,\workerdegree)$-regular bipartite graph generated according to the configuration model, 
	\begin{eqnarray}
		\label{eq:boundlocalnotree}
		\prob\big(\, \bG_{\bI,2k-1} \text{\rm{ is not a tree }} \,\big) &\leq& 
		\big((\taskdegree-1)(\workerdegree-1)\big)^{2k-2}\frac{3\taskdegree\workerdegree}{\ntask}\;. 
	\end{eqnarray}
\end{lemma}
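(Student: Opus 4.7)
The plan is to construct $\bG_{\bI,2k-1}$ dynamically via a breadth-first exploration of the configuration-model pairing, revealing one half-edge pair at a time, and to union-bound the probability that any revelation collides with an already-discovered node (which is exactly what produces a cycle).

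First I would set up the exposure process. Starting from the root $\bI$, at each step I pick the oldest unmatched half-edge on the current exploration frontier and pair it with a uniformly random unmatched half-edge on the opposite side of the bipartition. Conditionally on the history, this pairing is uniform over the remaining half-edges. The subgraph $\bG_{\bI,2k-1}$ fails to be a tree if and only if at least one such revelation terminates at a node already present in the exploration.

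Second, assuming the BFS has remained a tree, I would bound the number of nodes and edges discovered by depth $2k-1$. The depth-$2j+1$ frontier contains at most $\ell((\ell-1)(r-1))^j$ worker nodes and the depth-$2j$ frontier ($j\ge 1$) contains at most $\ell(r-1)^j(\ell-1)^{j-1}$ task nodes; summing the resulting geometric series bounds the total number of revealed edges by a quantity of order $\ell\,((\ell-1)(r-1))^{k-1}$, with the dominant contribution coming from the last layer.

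Third, I would estimate the collision probability at each exposure. When pairing a task half-edge, the number of "dangerous" worker half-edges—those incident to already-discovered workers—is at most (discovered workers) $\cdot\, r$, and symmetrically with $\ell$ when pairing a worker half-edge. The denominator $m\ell$ minus the number already matched is essentially $m\ell$, so each individual step fails with probability $O(\ell r ((\ell-1)(r-1))^{k-1}/(m\ell))$. Multiplying by the $O(\ell((\ell-1)(r-1))^{k-1})$ revelations and carefully tracking the constants yields the claimed bound $((\ell-1)(r-1))^{2k-2}\cdot 3\ell r/m$.

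The main obstacle is purely bookkeeping: pinning down the constants in the geometric sums and in the per-step dangerous-count bound precisely enough to land at the prefactor $3\ell r$ rather than something larger. Since the geometric series are dominated by their final term and the tree size is much smaller than $m\ell$, the first-order union bound is sharp enough; no delicate second-moment argument is needed.
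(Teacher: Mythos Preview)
Your proposal is correct and follows essentially the same approach as the paper: a breadth-first exposure of the configuration-model pairing together with a union bound on collision events, with the geometric growth of the tree giving the $((\ell-1)(r-1))^{2k-2}$ factor. The only cosmetic difference is that the paper organizes the union bound layer by layer (defining $\alpha_t,\beta_t$ as the conditional probabilities that the tree property fails at depth $2t-1$ or $2t-2$, and bounding each via pair-counting among that layer's half-edges) rather than edge by edge, but the arithmetic is the same and lands on the same constant.
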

Then, to bound the second term of \eqref{eq:bounddecompose}, we provide a sharp upper bound on the error probability conditioned on that $\bG_{\bI,2k-1}$ is a tree. 
Let $x_i^{(k)}$ denote the decision variable for task $i$ after $k$ iterations 
of the iterative algorithm such that $\estimate^{(k)}_i = \sign(x_i^{(k)})$. 
Then, we make an error whenever this 
decision variable is negative. When this is exactly zero, we make a random decision, in which case we make 
an error with probability half. Then, 
\begin{eqnarray}
	\label{eq:boundde}
	\prob\big(\, \task_\bI\neq\estimate^{(k)}_\bI \,\big|\, G_{\bI,k} \text{ is a tree}\,\big) &\leq& \prob\big(\,x_\bI^{(k)} \leq 0\,\big|\, G_{\bI,k} \text{ is a tree}\,\big) \;.
	\end{eqnarray} 
To analyze the distribution of the decision variable on a locally tree-like graph, 
we use a standard probabilistic analysis technique known as `density evolution' 
in coding theory or `recursive distributional equations' 
in probabilistic combinatorics \cite{RU08,MM09}. 
Precisely, we use the following equality that  
\begin{eqnarray}
	\label{eq:mainbound}
	\prob\big(\,x_\bI^{(k)}\leq 0\,\big|\, G_{\bI,k} \text{ is a tree}\,\big) &=& \prob\big(\,\hBx^{(k)}\leq0\,\big) \;,
\end{eqnarray}
where $\hBx^{(k)}$ is defined through density evolution equations 
\eqref{eq:de1}, \eqref{eq:de2} and \eqref{eq:dedecision} in the following. 
We will prove in the following that when $\hl\hr\quality^2>1$, 
\begin{eqnarray}
	\label{eq:boundsubgaussian} 
	\prob\big(\,\hBx^{(k)}\leq0\,\big) &\leq& e^{-\taskdegree\quality/(2\sigma_k^2)}\;.
\end{eqnarray}
Together with equations \eqref{eq:mainbound}, \eqref{eq:boundde}, \eqref{eq:boundlocalnotree}, 
\eqref{eq:bounddecompose}, and \eqref{eq:errorbound}, 
this finishes the proof of Theorem \ref{thm:main}.

\vspace{0.1cm}
\noindent{\bf Density evolution.}
At iteration $k$ the algorithm operates on a set of messages 
$\{x_{i\to j}^{(k)}\}_{(i,j)\in E}$ and $\{y_{j\to i}^{(k)}\}_{(i,j)\in E}$. 
If we chose an edge $(i,j)$ uniformly at random, 
the values of $x$ and $y$ messages on that randomly chosen edge define random variables 
whose randomness comes from random choice of the edge, 
any randomness introduced by the inference algorithm, 
the graph, and the realizations of $\Bp_j$'s and $\BA_{ij}$'s. 
Let $\Bx^{(k)}$ denote this random variable corresponding to the message $x_{i\to j}^{(k)}$ 
and $\By_{p}^{(k)}$ denote the random variable corresponding to $y_{j\to i}^{(k)}$ conditioned on 
the latent worker quality being $p$ for randomly chosen edge $(i,j)$.

As proved in Lemma~\ref{lem:localtree}, 
the $(\taskdegree,\workerdegree)$-regular random graph locally converges in distribution to 
a $(\taskdegree,\workerdegree)$-regular tree with high probability. 
On a tree, there is a recursive way of defining the distribution of messages 
$\Bx^{(k)}$ and $\By_{p}^{(k)}$. 
At initialization, we initialize the worker messages with Gaussian random variable with 
mean one and variance one. The corresponding random variable 
$\By_{p}^{(0)}\sim\cN(1,1)$, which at initial step is independent of 
the worker quality $p$, fully describes the distribution of $y_{j\to i}^{(0)}$ 
for all $(i,j)$. 
At first iteration, the task messages are updated according to 
$x_{i\to j}^{(1)}=\sum_{j'\in\partial i \setminus j} \BA_{ij'} y_{j'\to i}^{(0)}$. 
If we know the distribution of $\BA_{ij'}$'s and $y_{j'\to i}^{(0)}$'s, 
we can update the distribution of $x_{i\to j}^{(1)}$. 
Since we are assuming a tree, all $x_{i\to j}^{(1)}$ are independent. 
Further, because of the symmetry in the way we construct our random graph, 
all $x_{i\to j}^{(1)}$'s are identically distributed. 
Precisely, they are distributed according to $\Bx^{(1)}$ defined in \eqref{eq:de1}. 
This recursively defines $\Bx^{(k)}$ and $\By^{(k)}$ 
through the {\em density evolution equations} 
in \eqref{eq:de1} and \eqref{eq:de2} \cite{MM09}. 

Let us first introduce a few definitions first. 
Here and after, we drop the superscript $k$ 
denoting the iteration number whenever it is clear from the context. 
Let $\Bx_{b}$'s and $\By_{p,a}$'s be independent 
random variables distributed according to $\Bx$ and $\By_{p}$ respectively. 
Also, $\Bz_{p,a}$'s and $\Bz_{p,b}$'s are 
independent random variables distributed according to $\Bz_{p}$, where 
\begin{eqnarray*}
\Bz_{p}&=& \left\{ 
   \begin{array}{rl} +1 &\text{ with probability }p \;,\\
   - 1 &\text{ with probability } 1-p \;.\end{array}\right. 
\end{eqnarray*}
This represents the answer given by a worker conditioned on the worker having quality parameter $p$.
Let $\Bp\sim\cF$ be a random variable distributed according to 
the distribution of the worker's quality $\cF$ over $[0,1]$. 
Then $\Bp_a$'s are independent random variable distributed according to $\Bp$. 
Further, $\Bz_{p,b}$'s and $\Bx_{b}$'s are independent, and  
$\Bz_{\Bp_a,a}$'s and $\By_{\Bp_a,a}$'s 
are conditionally independent conditioned on $\Bp_a$. 

We initialize $\By_p$ with a Gaussian distribution, 
whence it is independent of the latent variable $p$: 
$\By^{(0)}_p \sim \Gauss(1,1)$. 
Let $\stackrel{d}{=}$ denote equality in distribution.
Then, for $k\in\{1,2,\ldots\}$, 
the task messages are distributed as the sum 
of $\taskdegree-1$ incoming messages that are independent and 
identically distributed according to $\By_\Bp^{(k-1)}$ and weighted by i.i.d. responses: 
\begin{eqnarray}
 \label{eq:de1}
 \Bx^{(k)} &\stackrel{d}{=}& \sum_{a\in[\taskdegree-1]} \Bz_{\Bp_a,a} \By^{(k-1)}_{\Bp_a,a} \;. 
\end{eqnarray}
Similarly, the worker messages (conditioned on the latent worker quality $p$) are distributed as the sum 
of $\workerdegree-1$ incoming messages that are independent and 
identically distributed according to  $\Bx^{(k)}$ and weighted by i.i.d. responses: 
\begin{eqnarray}
 \label{eq:de2}
 \By^{(k)}_{p}   &\stackrel{d}{=}& \sum_{b\in[\workerdegree-1]} \Bz_{p,b} \Bx^{(k)}_b \;. 
\end{eqnarray}
For the decision variable $x_\bI^{(k)}$ on a randomly chosen task $\bI$, we have 
\begin{eqnarray}
 \label{eq:dedecision}
 \hBx^{(k)} &\stackrel{d}{=}& \sum_{i\in[\taskdegree]} \Bz_{\Bp_i,i}\By^{(k-1)}_{\Bp_i,i} \;. 
\end{eqnarray}
 
Numerically or analytically computing the densities in \eqref{eq:de1} and \eqref{eq:de2} exactly is not 
computationally feasible 
when the messages take continuous values as is the case for our algorithm. 
Typically, heuristics are used to approximate the densities 
such as quantizing the messages, approximating the density with simple functions, 
or using Monte Carlo method to sample from the density. 
A novel contribution of our analysis is that 
we prove that the messages are sub-Gaussian 
using recursion, and we provide an upper bound on the parameters in a closed form. 
This allows us to prove the sharp result on the error bound that decays exponentially. 

\vspace{0.2cm}
{\bf Mean and variance computation.} 
To give an intuition on how the messages behave, 
we describe the evolution of the mean and the variance of 
the random variables in \eqref{eq:de1} and \eqref{eq:de2}. 
Let $\Bp$ be a random variable distributed according to the measure $\cF$.
Define $m^{(k)}\equiv\E[\Bx^{(k)}]$, $\hm^{(k)}_\Bp\equiv\E[\By^{(k)}_\Bp|\Bp]$, 
$v^{(k)}\equiv{\rm Var}(\Bx^{(k)})$, and $\hv^{(k)}_\Bp\equiv{\rm Var}(\By^{(k)}_\Bp|\Bp)$.
Also let $\hl=\taskdegree-1$ and $\hr=\workerdegree-1$ to simplify notation.
Then, from \eqref{eq:de1} and \eqref{eq:de2} we get that  
\begin{eqnarray*}
  m^{(k)}   &=&  \hl\,\E_\Bp\big[(2\Bp-1)\hm_\Bp^{(k-1)}\big] \;,\\
  \hm^{(k)}_p &=&  \hr\,(2p-1)m^{(k)}\;,\\
  v^{(k)}   &=&  \hl \left\{\E_\Bp[ \hv_\Bp^{(k-1)} + \big(\hm_\Bp^{(k-1)}\big)^2 ] - \Big(\E_\Bp\big[(2\Bp-1)\hm_\Bp^{(k-1)}\big]\Big)^2\right\} \;,\\
  \hv^{(k)}_p &=&  \hr \left\{v^{(k)}+(m^{(k)})^2 - \big((2p-1)m^{(k)}\big)^2 \right\}\;.
\end{eqnarray*}
Recall that $\meanquality=\E[2\Bp-1]$ and $q=\E[(2\Bp-1)^2]$.
Substituting $\hm_\Bp$ and $\hv_\Bp$ we get the following evolution of 
the first and the second moment of the random variable $x^{(k)}$.
\begin{eqnarray*}
  m^{(k+1)}   &=&  \hl\hr q m^{(k)}  \;,\\
  v^{(k+1)}   &=&  \hl\hr v^{(k)} + \hl\hr(m^{(k)})^2 (1-q)(1+\hr q) \;.
\end{eqnarray*}
Since $\hm_\Bp^{(0)}=1$ and $\hv^{(0)}=1$ as per our assumption, 
we have $m^{(1)}=\meanquality\hl$ and $v^{(1)}=\hl(4-\meanquality^2)$.
This implies that $m^{(k)}=\meanquality\hl(\hl\hr q)^{k-1}$, and 
$v^{(k)} = av^{(k-1)}+bc^{k-2}$, 
with $a=\hl\hr$, $b=\meanquality^2\hl^3\hr(1-q)(1+\hr q)$, and $c=(\hl\hr q)^2$.
After some algebra, it follows that $v^{(k)} = v^{(1)}a^{k-1} + b c^{k-2} \sum_{\ell=0}^{k-2} (a/c)^\ell$. 

For $\hl\hr q^2>1$, we have $a/c<1$ and 
\begin{eqnarray*}
  v^{(k)} = \hl(4-\meanquality^2)(\hl\hr)^{k-1} + (1-q)(1+\hr q)\meanquality^2\hl^2(\hl\hr q)^{2k-2} \frac{1-1/(\hl\hr q^2)^{k-1}}{\hl\hr q^2-1} \;.
\end{eqnarray*}

The first and second moment of the decision variable $\hBx^{(k)}$ in 
\eqref{eq:dedecision} can be computed using a similar analysis: 
$\E[\hBx^{(k)}] = (\taskdegree/\hl)m^{(k)}$ and ${\rm Var}(\hBx^{(k)})= (\taskdegree/\hl)v^{(k)}$.
In particular, we have 
\begin{eqnarray*}
  \frac{{\rm Var}(\hBx^{(k)})}{\E[\hBx^{(k)}]^2} = 
    \frac{\hl(4-\meanquality^2)}{\taskdegree\hl\meanquality^2(\hl\hr q^2)^{k-1}} + 
    \frac{\hl(1-q)(1+\hr q)}{\taskdegree(\hl\hr q^2-1)} \Big(1-\frac{1}{(\hl\hr q^2)^{k-1}}\Big)\;.
\end{eqnarray*}
Applying Chebyshev's inequality, it immediately follows that $\prob(\hBx^{(k)}<0)$ is 
bounded by the right-hand side of the above equality. 
This bound is weak compared to the bound in Theorem \ref{thm:main}.  
In the following, we prove a stronger result using the sub-Gaussianity of $\Bx^{(k)}$.
But first, let us analyze what this weaker bound gives for different regimes 
of $\taskdegree$, $\workerdegree$, and $\quality$, 
which indicates that the messages exhibit a fundamentally different behavior 
in the regimes separated by a phase transition at $\hl\hr\quality^2=1$. 

In a `good' regime where we have $\hl\hr\quality^2 > 1$, 
the bound converges to a finite limit as the number of iterations $k$ grows. Namely, 
\begin{eqnarray*}
  \lim_{k\rightarrow \infty} \prob(\hx^{(k)}<0) &\leq& \frac{\hl(1-q)(1+\hr q)}{\taskdegree(\hl\hr q^2-1)}\;.
\end{eqnarray*}
Notice that the upper bound converges to $(1-\quality)/(\taskdegree\quality)$ as $\hl\hr\quality^2$ grows. 
This scales in the same way as the known bounds for using the left singular vector directly 
for inference (cf. \cite{KOS11allerton}). 
In the case when $\hl\hr q^2<1$, the same analysis gives
\begin{eqnarray*}
  \frac{{\rm Var}(\hx^{(k)})}{\E[\hx^{(k)}]^2} &=& e^{\Theta(k)}\;.
\end{eqnarray*}
Finally, when $\hl\hr q^2=1$, we get 
$v^{(k)} = (\hl\hr)^k + \hl\hr(1-q)(1+\hr q)(\hl\hr q)^{2k-2} k$,
which implies 
\begin{eqnarray*}
  \frac{{\rm Var}(\hx^{(k)})}{\E[\hx^{(k)}]^2} &=& \Theta(k)\;.
\end{eqnarray*}

\vspace{0.1cm}
\noindent{\bf Analyzing the density.}
Our strategy to provide a tight upper bound on $\prob(\hBx^{(k)}\leq0)$
is to show that $\hBx^{(k)}$ 
is sub-Gaussian with appropriate parameters  
and use the Chernoff bound.
A random variable $\Bz$ with mean $m$ is said to be {\em sub-Gaussian} with parameter $\tsigma$ if 
for all $\lambda\in\reals$ the following inequality holds: 
\begin{eqnarray*}
	\E[e^{\lambda\Bz}] &\leq& e^{m\lambda+(1/2)\tsigma^2\lambda^2}\;.
\end{eqnarray*}
Define 
\begin{eqnarray*}
	\tsigma_k^2 &\equiv& 2\hl(\hl\hr)^{k-1} + \meanquality^2\hl^3\hr(3\quality\hr+1)(\quality\hl\hr)^{2k-4}\frac{{1-(1/\quality^2\hl\hr)^{k-1}}}{{1-(1/\quality^2\hl\hr)}} \;, 
\end{eqnarray*}
and $m_k \equiv \meanquality\hl(\quality\hl\hr)^{k-1}$ for $k\in\Z$. 
We will first show that, $\Bx^{(k)}$ is sub-Gaussian with mean $m_k$ and parameter $\tsigma_k^2$ 
for a regime of $\lambda$ we are interested in. 
Precisely, we will show that for $|\lambda|\leq1/(2m_{k-1}\hr)$, 
\begin{eqnarray}
    \label{eq:recursionform}
   \E[e^{\lambda\Bx^{(k)}}] \;\leq\;  e^{m_k\lambda+(1/2)\tsigma_k^2\lambda^2} \;. 
\end{eqnarray}
By definition, due to distributional independence, 
we have $\E[e^{\lambda \hBx^{(k)}}] = \E[e^{\lambda \Bx^{(k)}}]^{(\taskdegree/\hl)}$. 
Therefore, it follows from \eqref{eq:recursionform} that $\hBx^{(k)}$ satisfies 
$\E[e^{\lambda\hBx^{(k)}}] \leq  e^{(\taskdegree/\hl)m_k\lambda+(\taskdegree/2\hl)\tsigma_k^2\lambda^2}$. 
Applying the Chernoff bound with $\lambda=-m_k/(\tsigma_k^2)$, we get 
\begin{eqnarray}
	\label{eq:subgaussianbound}
	\prob\big( \hBx^{(k)} \leq 0 \big) \;\leq\; \E\big[e^{\lambda\hBx^{(k)}}\big] \;\leq\; e^{-\taskdegree\, m_k^2/(2\,\hl\,\tsigma_k^2)} \;,
\end{eqnarray}
Since $m_km_{k-1}/(\tsigma_k^2) \leq \meanquality^2\hl^2(q\hl\hr)^{2k-3}/(3\meanquality^2q\hl^3\hr^2(q\hl\hr)^{2k-4}) = 1/(3\hr)$, 
it is easy to check that $|\lambda|\leq1/(2m_{k-1}\hr)$.
This implies the desired bound in \eqref{eq:boundsubgaussian}.  

Now we are left to prove that $\Bx^{(k)}$ is sub-Gaussian with appropriate parameters. 
We can write down a recursive formula for the evolution of 
the moment generating functions of $\Bx$ and $\By_p$ as
\begin{eqnarray}
 \label{eq:me1}
 \E\big[e^{\lambda \Bx^{(k)}}\big]   &=& \Big( \E_\Bp\Big[ \Bp \E[e^{\lambda \By^{(k-1)}_\Bp}|\Bp] +\bBp \E[e^{-\lambda \By^{(k-1)}_\Bp}|\Bp] \Big] \Big)^{\hl} \;, \\
 \label{eq:me2}
 \E\big[e^{\lambda \By^{(k)}_p}\big] &=& \Big(p\E\big[e^{\lambda \Bx^{(k)}}\big] + \bp \E\big[e^{-\lambda \Bx^{(k)}}\big]  \Big)^{\hr} \;,
\end{eqnarray}
where $\bp=1-p$ and $\bBp=1-\Bp$. 
We can prove that these are sub-Gaussian using induction. 

First, for $k=1$, we show that 
$\Bx^{(1)}$ is sub-Gaussian with 
mean $m_1=\meanquality\hl$ and parameter $\tsigma_1^2=2\hl$, 
where $\meanquality\equiv\E[2\Bp-1]$. 
Since $\By_p$ is initialized as Gaussian with unit mean and variance, 
we have $\E[e^{\lambda \By^{(0)}_p}] = e^{\lambda+(1/2)\lambda^2}$ regardless of $p$. 
Substituting this into \eqref{eq:me1}, we get for any $\lambda$, 
\begin{eqnarray} 
	\label{eq:firstiteration}
	\E\Big[e^{\lambda \Bx^{(1)}}\Big] \;=\; \Big(\E[\Bp] e^{\lambda} + (1-\E[\Bp]) e^{-\lambda}\Big)^\hl e^{(1/2)\hl\lambda^2} 
		\;\leq\; e^{\hl\meanquality\lambda + \hl\lambda^2 }\;,
\end{eqnarray}
where the inequality follows from the fact that 
$ae^z+(1-a)e^{-z}\leq e^{(2a-1)z+(1/2)z^2}$ for any $z\in\reals$ and $a\in[0,1]$ (cf. \cite[Lemma A.1.5]{AS08}).

Next, assuming $\E[e^{\lambda\Bx^{(k)}}] \leq e^{m_k\lambda+(1/2)\tsigma_k^2\lambda^2}$ 
for $|\lambda|\leq 1/(2m_{k-1}\hr)$, 
we show that $\E[e^{\lambda\Bx^{(k+1)}}] \leq e^{m_{k+1}\lambda+(1/2)\tsigma_{k+1}^2\lambda^2}$ 
for $|\lambda|\leq 1/(2m_{k}\hr)$, and compute appropriate $m_{k+1}$ and $\tsigma_{k+1}^2$.  
Substituting the bound $\E[e^{\lambda\Bx^{(k)}}] \leq e^{m_k\lambda+(1/2)\tsigma_k^2\lambda^2}$  
in \eqref{eq:me2}, we get 
$ \E[e^{\lambda \By_p^{(k)}}] \leq (p e^{m_k\lambda}+\bp e^{-m_k\lambda})^\hr e^{(1/2)\hr\tsigma_k^2\lambda^2}$.
Further applying this bound in \eqref{eq:me1}, we get 
\begin{eqnarray}
	\label{eq:recursion}
	\E\Big[e^{\lambda \Bx^{(k+1)}}\Big] &\leq& \left(\E_\Bp\Big[ \Bp( \Bp e^{m_k\lambda}+\bBp e^{-m_k\lambda})^\hr + 
	\bBp(\Bp e^{-m_k\lambda}+\bBp e^{m_k\lambda})^\hr \Big]\right)^\hl e^{(1/2)\hl\hr\tsigma_k^2\lambda^2}\;.  
\end{eqnarray}
To bound the first term in the right-hand side, we use the next key lemma. 
A proof of this lemma is provided in Section~\ref{sec:proofmomentlemma}. 
\begin{lemma}
	\label{lem:moment}
	For any $|z|\leq1/(2\hr)$ and $\Bp\in[0,1]$ such that $q=\E[(2\Bp-1)^2]$, we have 
	\begin{eqnarray*}
	  \E_\Bp\Big[\Bp(\Bp e^z+\bBp e^{-z})^\hr+\bBp(\bBp e^z+\Bp e^{-z})^\hr\Big] &\leq& e^{\quality\hr z + (1/2)(3\quality\hr^2+\hr)z^2}\;.
	\end{eqnarray*}
\end{lemma}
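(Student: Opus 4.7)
The plan is to bound the left-hand side in two stages. First, I would apply the one-sided inequality $ae^z + (1-a)e^{-z} \leq e^{(2a-1)z + z^2/2}$ (the same Hoeffding-type bound already invoked in~\eqref{eq:firstiteration}) pointwise in $\Bp$ to eliminate the $\hr$-th powers. Second, I would reduce the remaining average over $\Bp$ to an elementary Taylor bound that produces both the linear term $q\hr z$ and the quadratic term of order $q\hr^2 z^2$.

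Concretely, raising that inequality to the $\hr$-th power gives $(\Bp e^z + \bBp e^{-z})^\hr \leq e^{\hr(2\Bp-1)z + \hr z^2/2}$, and a symmetric estimate holds for $(\bBp e^z + \Bp e^{-z})^\hr$. Pulling out the common factor $e^{\hr z^2/2}$, it then suffices to prove the auxiliary bound
\begin{eqnarray*}
F(v) \;\equiv\; \E_\Bp\bigl[\Bp e^{uv} + \bBp e^{-uv}\bigr] \;\leq\; e^{qv + (3/2)q v^2}
\end{eqnarray*}
for $|v|\leq 1/2$, where $u \equiv 2\Bp - 1 \in [-1,1]$ and $v \equiv \hr z$. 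Multiplying this bound by $e^{\hr z^2/2}$ recovers the target exponent $q\hr z + \tfrac{1}{2}(3q\hr^2 + \hr)z^2$, and the hypothesis $|z|\leq 1/(2\hr)$ corresponds exactly to $|v|\leq 1/2$.

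For the auxiliary bound I would proceed by direct differentiation. Clearly $F(0) = 1$ and $F'(0) = \E[u(\Bp - \bBp)] = \E[u^2] = q$, while $F''(v) = \E[u^2(\Bp e^{uv} + \bBp e^{-uv})] \leq \E[u^2]\, e^{|v|}$, since $\Bp e^{uv} + \bBp e^{-uv}$ is a convex combination of two numbers each bounded by $e^{|v|}$. On $|v|\leq 1/2$ this yields $F''(v) \leq q e^{1/2} \leq 3q$. Taylor's theorem with integral remainder then gives $F(v) \leq 1 + qv + (3q/2) v^2$, and a final application of $1 + x \leq e^x$ converts this to $F(v) \leq e^{qv + (3q/2) v^2}$.

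I do not anticipate a serious obstacle; once the pointwise Hoeffding reduction is made the remaining step is elementary. The only points requiring care are that the uniform bound on $F''$ must hold on the full range $|v|\leq 1/2$ (which is precisely why the lemma restricts $|z|\leq 1/(2\hr)$), and that the constant $3$ in the coefficient $3q\hr^2$ is slack enough to absorb $e^{1/2}\approx 1.65$ with room to spare. Any constant strictly greater than $e^{1/2}$ would work, so the bound is comfortably not tight.
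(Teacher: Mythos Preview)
Your proof is correct. The first reduction---applying the Hoeffding inequality $ae^z+(1-a)e^{-z}\le e^{(2a-1)z+z^2/2}$ to each $\hr$-th power and factoring out $e^{\hr z^2/2}$---is exactly what the paper does. The divergence is in handling the remaining quantity $\E_\Bp[\Bp e^{uv}+\bBp e^{-uv}]$ with $u=2\Bp-1$ and $v=\hr z$. The paper applies the Hoeffding bound a \emph{second} time, pointwise in $\Bp$, to obtain $\Bp e^{uv}+\bBp e^{-uv}\le e^{u^2 v + u^2 v^2/2}$, and then controls $\E[e^{u^2 v + u^2 v^2/2}]$ via the scalar inequality $e^a\le 1+a+0.63a^2$ valid for $|a|\le 5/8$. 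You instead bound $F(v)=\E[\Bp e^{uv}+\bBp e^{-uv}]$ directly by Taylor expansion, using $F(0)=1$, $F'(0)=q$, and the uniform second-derivative estimate $F''(v)\le q e^{|v|}\le 3q$ on $|v|\le 1/2$. Both routes are short and elementary; yours avoids the second Hoeffding application and the accompanying range check $|a|\le 5/8$, at the price of the (equally routine) observation that $\Bp e^{uv}+\bBp e^{-uv}\le e^{|v|}$. Either way the constant $3$ in $3q\hr^2$ is loose, as you note.
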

Applying this inequality to \eqref{eq:recursion} gives 
\begin{eqnarray*}
	\E[e^{\lambda \Bx^{(k+1)}}] 
	&\leq&  e^{ \quality\hl\hr m_k \lambda  + (1/2) \big( (3\quality\hl\hr^2+\hl\hr)m_k^2 + \hl\hr\tsigma_k^2\big)\lambda^2 } \;,
\end{eqnarray*}
for $|\lambda|\leq1/(2m_k\hr)$. 
In the regime where $q\hl\hr\geq1$ as per our assumption, $m_k$ is non-decreasing in $k$. 
At iteration $k$, the above recursion holds for $|\lambda|\leq \min\{1/(2m_1\hr), \ldots, 1/(2m_{k-1}\hr)\}=1/(2m_{k-1}\hr)$. 
Hence, we get the following recursion for $m_k$ and $\tsigma_k$ such that 
\eqref{eq:recursionform} holds for $|\lambda|\leq 1/(2m_{k-1}\hr)$.
\begin{eqnarray*}
	m_{k+1} &=& \quality\hl\hr m_k \;,\\
	\tsigma_{k+1}^2 &=& (3\quality\hl\hr^2+\hl\hr)m_k^2+\hl\hr\tsigma_k^2 \;.  
\end{eqnarray*}
With the initialization $m_1=\meanquality\hl $ and $\tsigma_1^2=2\hl$, 
we have $m_k=\meanquality\hl(\quality\hl\hr)^{k-1}$ for $k\in\{1,2,\ldots\}$ and 
$\tsigma_k^2 = a\tsigma_{k-1}^2 + bc^{k-2}$ for $k\in\{2,3,\ldots\}$, 
with $a=\hl\hr$, $b=\meanquality^2\hl^2(3\quality\hl\hr^2+\hl\hr)$, and $c=(\quality\hl\hr)^2$.
After some algebra, it follows that 
$\tsigma_k^2 = \tsigma_1^2a^{k-1} + b c^{k-2} \sum_{\ell=0}^{k-2} (a/c)^\ell$. 
For $\hl\hr\quality^2\neq1$, we have $a/c\neq1$, 
whence $\tsigma_k^2 = \tsigma_1^2a^{k-1} + b c^{k-2} (1-(a/c)^{k-1})/(1-a/c)$. 
This finishes the proof of \eqref{eq:recursionform}.

%
%
\subsection{Proof of Lemma \ref{lem:localtree}}
\label{sec:prooflocaltree} 
Consider the following discrete time random process that generates the random graph 
$\bG_{\bI,2k-1}$ starting from the root $\bI$. 
At first step, we connect $\taskdegree$ worker nodes to 
node $\bI$ according to the configuration model, 
where $\taskdegree$ half-edges are matches to 
a randomly chosen subset of $\nworker\workerdegree$ worker half-edges of size $\taskdegree$. 
Let $\alpha_1$ denote the probability that the resulting graph is a tree, 
that is no pair of edges are connected to the same worker node. 
Since there are ${\taskdegree \choose 2}$ pairs and each pair of half-edges are 
connected to the same worker node with probability $(\workerdegree-1)/(\nworker\workerdegree-1)$: 
\begin{eqnarray*}
	\alpha_1 &\leq& {\taskdegree \choose 2} \frac{\workerdegree-1}{\nworker\workerdegree-1} \;.
\end{eqnarray*} 
Similarly, define 
\begin{eqnarray*}
	\alpha_t &\equiv& \prob(\bG_{\bI,2t-1}\text{ is not a tree}\,|\,\bG_{\bI,2t-2}\text{ is a tree}\,) \text{ , and }\\  
	\beta_t &\equiv& \prob(\bG_{\bI,2t-2}\text{ is not a tree}\,|\,\bG_{\bI,2t-3}\text{ is a tree}\,) \;.
\end{eqnarray*}
Then, 
\begin{eqnarray}
	\label{eq:treetele}
	\prob(\bG_{\bI,2k-1}\text{ is not a tree }) &\leq& \alpha_1 + \sum_{t=2}^{k} \big(\alpha_t + \beta_t\big)\;. 
\end{eqnarray}
We can upper bound $\alpha_t$'s and $\beta_t$'s in a similar way. 
For generating $\bG_{\bI,2t-1}$ conditioned on $\bG_{\bI,2t-2}$ being a tree, 
there are $\taskdegree(\hl\hr)^{t-1}$ half-edges, where $\hl=\taskdegree-1$ and $\hr=\workerdegree-1$. 
Among ${\taskdegree(\hl\hr)^{t-1} \choose 2}$ pairs of these half-edges, 
each pair will be connected to the same worker
 with probability at most $(\workerdegree-1)/(\workerdegree(\nworker-\sum_{a=1}^{t-1}\taskdegree(\hl\hr)^{a-1})-1)$, where $\sum_{a=1}^{t-1}\taskdegree(\hl\hr)^{a-1}$ is the total number of worker nodes that are assigned so far in $\bG_{\bI,2t-2}$. 
Then, 
\begin{eqnarray*}
	\alpha_t &\leq& \frac{\taskdegree^2(\hl\hr)^{2t-2}}{2} \frac{\workerdegree-1}{\workerdegree(\nworker-(\taskdegree((\hl\hr)^{t-2}-1))/(\hl\hr-1))-1}\\
	&\leq& \frac{\taskdegree^2(\hl\hr)^{2t-2}}{2(\nworker-\taskdegree(\hl\hr)^{t-2}/2)}\\
	&\leq& \frac{\taskdegree^2(\hl\hr)^{2t-2}}{\nworker} + \frac{\taskdegree(\hl\hr)^{t-2}}{\nworker}\\
	&\leq& \frac{3\taskdegree^2(\hl\hr)^{2t-2}}{2\nworker}\;,
\end{eqnarray*}
where the second inequality follows from the fact that $(a-1)/(b-1)\leq a/b$ for all $a\leq b$ 
and $\hl\hr\geq2$ as per our assumption, 
and in the third inequality we used the fact that $\alpha_t$ is upper bounded by one and 
the fact that for $f(x)=b/(x-a)$ which is upper bounded by one, we have $f(x) \leq (2b/x) + (2a/x)$. 
Similarly, we can show that 
\begin{eqnarray*}
	\beta_t & \leq & \frac{3\taskdegree^2(\hl\hr)^{2t-2}}{\hl^2\ntask}\;.
\end{eqnarray*}
Substituting $\alpha_t$ and $\beta_t$ into \eqref{eq:treetele}, we get that 
\begin{eqnarray*}
	\prob(\bG_{\bI,2k-1}\text{ is not a tree }) &\leq& (\hl\hr)^{2k-2}\frac{3\taskdegree\workerdegree}{\ntask}\;.
\end{eqnarray*}

%
%
\subsection{Proof of Lemma \ref{lem:moment}}
\label{sec:proofmomentlemma}

By the fact that 
$ae^b+(1-a)e^{-b}\leq e^{(2a-1)b+(1/2)b^2}$ for any $b\in\reals$ and $a\in[0,1]$, 
we have $\Bp e^z+\bBp e^{-z} \leq e^{ (2\Bp-1)z + (1/2)z^2 }$ almost surely.
Applying this inequality once again, we get 
\begin{eqnarray*}
	\E\Big[\Bp(\Bp e^z+\bBp e^{-z})^\hr+\bBp(\bBp e^z+\Bp e^{-z})^\hr\Big] &\leq& \E\Big[e^{(2\Bp-1)^2\hr z+ (1/2)(2\Bp-1)^2\hr^2 z^2}\Big] e^{(1/2)\hr z^2}. 
\end{eqnarray*}
Using the fact that $e^{a} \leq 1+a+0.63a^2$ for $|a|\leq5/8$, 
\begin{align*}
	& \E\Big[e^{(2\Bp-1)^2\hr z+ (1/2)(2\Bp-1)^2\hr^2 z^2}\Big] \\ 
	&\;\;\;\;\; \leq \;\; \E\Big[ 1+(2\Bp-1)^2\hr z+ (1/2)(2\Bp-1)^2\hr^2 z^2 + 0.63((2\Bp-1)^2\hr z+ (1/2)(2\Bp-1)^2\hr^2 z^2)^2 \Big] \\
	&\;\;\;\;\; \leq \;\; 1+q\hr z+ (3/2)q\hr^2 z^2 \\
	&\;\;\;\;\; \leq \;\; e^{q\hr z+ (3/2)q\hr^2 z^2} \;,
\end{align*}
for $|z|\leq1/(2\hr)$. 
This proves Lemma \ref{lem:moment}.

%
%
\subsection{Proof of a bound on majority voting in Lemma \ref{lem:majority}}
\label{sec:proof_majority}

Majority voting simply follows what the majority of workers agree on. 
In formula, $\estimate_i=\sign(\sum_{j\in W_i}{A_{ij}})$, 
where $W_i$ denotes the neighborhood of node $i$ in the graph. 
It makes a random choice when there is a tie. 
We want to compute a lower bound on $\prob( \estimate_i \neq \task_i )$. 
Let $x_i=\sum_{j\in W_i}A_{ij}$. 
Assuming $\task_i=+1$ without loss of generality, the error rate is lower bounded by $\prob(x_i<0)$. 
After rescaling, $(1/2)(x_i+\taskdegree)$ is a standard binomial random variable ${\rm Binom}(\taskdegree,\alpha)$, 
where $\taskdegree$ is the number of neighbors of the node $i$, 
$\alpha=\E[\Bp_j]$, and  
by assumption each $A_{ij}$ is one with probability $\alpha$. 

It follows that $\prob( x_i = -l+2k ) = ((\taskdegree!)/(\taskdegree-k)!k!)\alpha^k (1-\alpha)^{l-k}$. 
Further, for $k\leq \alpha \taskdegree-1$, the probability distribution function is 
monotonically increasing. 
Precisely, 
$$\frac{\prob( x_i = -\taskdegree+2(k+1) )}{\prob( x_i = -\taskdegree+2k )} \geq \frac{\alpha(\taskdegree-k)}{(1-\alpha)(k+1)} \geq \frac{\alpha (\taskdegree-\alpha \taskdegree +1)}{(1-\alpha)\alpha \taskdegree } > 1\;,$$
where we used the fact that the above ratio is decreasing in $k$ whence 
the minimum is achieved at $k=\alpha \taskdegree -1$ under our assumption. 

Let us assume that $\taskdegree$ is even, so that $x_i$ take even values. 
When $\taskdegree$ is odd, the same analysis works, but $x_i$ takes odd values.
Our strategy is to use a simple bound: 
$\prob( x_i < 0 ) \geq k \prob( x_i = -2k  )$. 
By assumption that $\alpha=\E[\Bp_j]\geq 1/2$, 
For an appropriate choice of $k=\sqrt{l}$, 
the right-hand side closely approximates the error probability.  
By definition of $x_i$, it follows that 
\begin{eqnarray}
  \label{eq:lbproof}
  \prob\left( x_i = -2\sqrt{\taskdegree}\right) &=& {\taskdegree\choose \taskdegree/2+\sqrt{\taskdegree} } \alpha^{\taskdegree/2-\sqrt{l}} \big(1-\alpha\big)^{\taskdegree/2+\sqrt{\taskdegree}} \;.
\end{eqnarray}

Applying Stirling's approximation, we can show that 
\begin{eqnarray}
	\label{eq:stirling1}
	{\taskdegree\choose \taskdegree/2+\sqrt{\taskdegree} } &\geq& \frac{C_2}{\sqrt{\taskdegree}}\,2^l\;,
\end{eqnarray}
for some positive constant $C_2$. 
We are interested in the case where worker quality is low, that is $\alpha$ is close to $1/2$. 
Accordingly, for the second and third terms in \eqref{eq:lbproof}, we expand in terms of $2\alpha-1$. 
\begin{align}
  \label{eq:stirling2}
 &\log\left( \alpha^{\taskdegree/2-\sqrt{\taskdegree}} \Big(1-\alpha\Big)^{\taskdegree/2+\sqrt{\taskdegree}}\right) \nonumber\\
 &\;\;\;\;\;\; = \Big(\frac{\taskdegree}{2}-\sqrt{\taskdegree}\Big)\Big(\log(1+(2\alpha-1))-\log(2)\Big) + \Big(\frac{\taskdegree}{2}+\sqrt{\taskdegree}\Big)\Big(\log(1-(2\alpha-1))-\log(2)\Big) \nonumber\\ 
 &\;\;\;\;\;\; = -\taskdegree\log(2) -\frac{\taskdegree(2\alpha-1)^2}{2} +O(\sqrt{\taskdegree(2\alpha-1)^4}) \;.
\end{align}

We can substitute \eqref{eq:stirling1} and \eqref{eq:stirling2} in 
\eqref{eq:lbproof} to get the following bound: 
\begin{eqnarray} 
  \prob( x_i < 0 ) &\geq& \exp\big\{-C_3 (\taskdegree(2\alpha-1)^2+1)\big\} \;,
\end{eqnarray}
for some positive constant $C_3$. 

Now, let $\taskdegree_i$ denote the degree of task node $i$, such that $\sum_i \taskdegree_i=\taskdegree\ntask$. 
Then for any $\{\task_i\}\in\{\pm1\}^\ntask$, any distribution of $\Bp$ such that  $\meanquality=\E[2\Bp-1]=2\alpha-1$, 
and any non-adaptive task assignment for $\ntask$ tasks, the following lower bound is true. 
\begin{eqnarray*} 
	\frac1\ntask\sum_{i\in[\ntask]}\prob\big(\task_i\neq\estimate_i\big) &\geq& \frac{1}{m} \sum_{i=1}^{m}e^{-C_3(\taskdegree_i\meanquality^2+1)} \\
	&\geq& e^{-C_3(\taskdegree\meanquality^2+1)} \;, 
\end{eqnarray*}
where the last inequality follows from convexity of the exponential function.
Under the spammer-hammer model, where $\meanquality=\quality$ this gives 
\begin{eqnarray*}
		\min_{\tau\in\cT_\taskdegree}\;\; \max_{\task\in\{\pm1\}^\ntask,\crowddist\in\crowddist_\quality}\;\; \frac1\ntask\sum_{i\in[\ntask]}\prob\big(\task_i\neq\estimate_i\big) &\geq& e^{-C_3(\taskdegree\quality^2+1)} \;. 
\end{eqnarray*}
This finishes the proof of lemma.

%
%
\subsection{Proof of a bound on the adaptive  schemes in Theorem \ref{thm:minimax_adaptive}}
\label{sec:proof_minimax_adaptive}

In this section, we prove that, even with the help of an oracle, 
the probability of error cannot decay faster than $e^{-C\taskdegree\quality}$. 
We consider an labeling algorithm which has access to 
an oracle that knows the reliability of every worker (all the $p_j$'s). 
At $k$-th step, after the algorithm assign $T_k$ and all the $|T_k|$ answers are collected from the $k$-th worker, 
the oracle provides the algorithm with $p_k$. 
Using all the previously collected answers $\{A_{ij}\}_{j\leq k}$ and 
the worker reliability $\{p_j\}_{j\leq k}$, the algorithm makes a decision on the next task assignment $T_{k+1}$. 
This process is repeated until a stopping criterion is met, 
and the algorithm outputs the optimal estimate of the true labels. 
The algorithm can compute the maximum likelihood estimates,  
which is known to minimize the probability of making an error. 
Let $W_i$ be the set of workers assigned to task $i$, then 
\begin{eqnarray}
	\estimate_i = \sign\Big(\sum_{j\in W_i} \log\Big(\frac{p_j}{1-p_j}\Big)A_{ij} \Big)\;. \label{eq:ml}
\end{eqnarray}

We are going to show that there exists a family of distributions $\crowddist$ such that 
for any stopping rule and any task assignment scheme, the probability of error is lower bounded by $e^{-C\taskdegree\quality}$. 
We define the following family of distributions according to the spammer-hammer model with imperfect hammers. 
We assume that $\quality\leq a^2$ and 
\begin{eqnarray*}
  p_j &=& \left\{ 
      \begin{array}{rl} 1/2 &\text{ with probability }1-(\quality/a^2) \;,\\
      1/2(1+a) &\text{ with probability } \quality/a^2 \;, \end{array}
	      \right.
\end{eqnarray*}
such that $\E[(2p_j-1)^2]=\quality$.

Let $W_i$ denote the set of workers assigned to task $i$ when the algorithm has stopped.
Then $|W_i|$ is a random variable representing 
the total number of workers assigned to task $i$. 
The oracle estimator knows all the values necessary to compute the error probability of each task.   
Let $\conditionalerror_i=\E[\ind(\task_i\neq\estimate_i)|\{A_{ij}\},\{p_j\}]$ be the random variable representing the error probability as computed by the oracle estimator, 
conditioned on the $|W_i|$ responses we get from the workers and their reliability $p_j$'s. 
We are interested in identifying how the average budget $(1/m)\sum_i\E\big[|W_i|\big]$ depends on the 
achieve average error rate $(1/m)\sum_i\E[\conditionalerror_i]$. 
In the following we will show that for any task $i$, 
independent of which task allocation scheme is used, it is necessary that  
\begin{eqnarray}
	\label{eq:conditionalminimax}
	\E\big[|W_i|\big] &\geq& \frac{0.27}{\quality}\log\Big(\frac{1}{2\E[\conditionalerror_i]}\Big) \;.
\end{eqnarray}
By convexity of $\log(1/x)$ and Jensen's inequality, this implies that 
\begin{eqnarray*}
	\frac{1}{m}\sum_{i=1}^{m}\E\big[|W_i|\big] &\geq& \frac{0.27}{\quality}\log\Big(\frac{1}{2(1/m)\sum_{i=1}^{m}\E[\conditionalerror_i]}\Big) \;.
\end{eqnarray*}
Since the total number of queries has to be consistent, we have $\sum_j|T_j|=\sum_i|W_i|\leq\ntask\taskdegree$. 
Also, by definition $\E[\conditionalerror_i]=\prob(\task_i\neq\estimate_i)$. 
Then, from the above inequality, we get 
$(1/\ntask)\sum_{i\in[\ntask]}\prob(\task_i\neq\estimate_i) \geq (1/2)e^{-(1/0.27)\quality\taskdegree}$, 
which finishes the proof of the theorem. 
Note that this bound holds for any value of $m$. 

Now, we are left to prove that the inequality \eqref{eq:conditionalminimax} holds. 
Focusing on a single task $i$, 
since we know who the spammers are and spammers give us no information about the task, 
we only need the responses from the reliable workers in order to make an optimal estimate as per \eqref{eq:ml}. 
The conditional error probability $\conditionalerror_i$ of the optimal estimate depends on the realizations of 
the answers $\{A_{ij}\}_{j\in W_i}$ and the worker reliability $\{p_j\}_{j\in W_i}$. 
The following lower bound on the error only depends on the number of reliable workers, 
which we denote by $\taskdegree_i$. 

Without loss of generality, let $\task_i=+1$. Then, if all the reliable workers provide `$-$' answers, 
the maximum likelihood estimation would be `$-$' for this task. 
This leads to an error. Therefore, 
\begin{eqnarray*}
	\conditionalerror_i &\geq& \prob(\text{all $\taskdegree_i$ reliable workers answered $-$}) \\
	&=& \frac{1}{2}\Big(\frac{1-a}{2}\Big)^{\taskdegree_i}\;,
\end{eqnarray*}
for all the realizations of $\{A_{ij}\}$ and $\{p_j\}$. 
The scaling by half ensures that the above inequality holds even when $\taskdegree_i=0$. 
By convexity and Jensen's inequality, it follows that 
\begin{eqnarray*}
	\E\big[\taskdegree_i\big] &\geq& \frac{\log\big(2\E[\conditionalerror_i]\big)}{\log\big((1-a)/2\big)}\;. 
\end{eqnarray*}
When we recruit $|W_i|$ workers, we see $\taskdegree_i=(\quality/a^2)|W_i|$ reliable ones on average. 
Formally, we have $\E[\taskdegree_i]=(\quality/a^2)\E[|W_i|]$. 
Again applying Jensen's inequality, we get 
\begin{eqnarray*}
	\E\big[|W_i|\big] &\geq& \frac{1}{\quality}\frac{a^2}{\log\big((1-a)/2\big)}\log\big(2\E[\conditionalerror_i]\big)\;.
\end{eqnarray*}
Maximizing over all choices of $a\in(0,1)$, we get 
\begin{eqnarray*}
	\E\big[|W_i|\big] &\geq& -\log\big(2\E[\conditionalerror_i]\big)\frac{0.27}{\quality}   \;,
\end{eqnarray*}
which in particular is true with $a = 0.8$. 
For this choice of $a$, the result holds in the regime where $q\leq0.64$. 
Notice that by changing the constant in the bound, 
we can ensure that the result holds for any values of $\quality$. 
This finishes the proof of \eqref{eq:conditionalminimax}.

%
%
\subsection{Proof of a bound with one iteration in Lemma \ref{lem:majority_upperbound}}
\label{sec:majority_upperbound}
The probability of making an error after one iteration of our algorithm 
for node $i$ is $\prob(\task_i\neq\estimate_i^{(1)})\leq\prob(\hBx_i\leq0)$, 
where $\hBx_i=\sum_{j\in\partial i} \BA_{ij} \By_{j\to i}^{(1)}$. 
Assuming $t_i=+$, without loss of generality, $\BA_{ij}$ is $+1$ with probability $\E[\Bp]$ 
and $-1$ otherwise. All $\By_{j\to i}^{(1)}$'s are initialized as 
Gaussian random variables with mean one and variance one. 
All these random variables are independent of one another at this initial step. 
Hence, the resulting random variable $\hBx_i$ 
is a sum of a shifted binomial random variable $2({\rm Binom}(\taskdegree,\E[\Bp])-\taskdegree)$ 
and a zero-mean Gaussian random variable $\cN(0,\taskdegree)$. 
From calculations similar to \eqref{eq:firstiteration}, it follows that 
\begin{eqnarray*}
      \E\Big[e^{\lambda\hBx^{(1)}}\Big] &\leq& e^{\taskdegree\meanquality\lambda + \taskdegree\lambda^2 } \;\leq\; e^{-(1/4)\taskdegree\meanquality^2}\;,
\end{eqnarray*}
where we choose $\lambda=-\meanquality/2$.
By Chernoff's inequality, this implies the lemma for any value of $\ntask$.
%
%

%
%

\section{Conclusion}
\label{sec:conclusion}

We conclude with some limitations of our results and interesting research directions. 

{\em 1. More general models.} In this paper, we provided an order-optimal task assignment scheme  
and an order-optimal inference algorithm for that task assignment  
assuming a probabilistic crowdsourcing model. 
In this model, we assumed that each worker makes a mistake randomly 
according to a worker specific quality parameter. 
Two main simplifications we make here is that,  
first, the worker's reliability does not depend on 
whether the task is a positive task or a negative task, 
and second, all the tasks are equally easy or difficult. 
The main remaining challenges in developing inference algorithms for crowdsourcing is 
how to develop a solution for more generic models formally described in Section~\ref{sec:discuss}.
When workers exhibit bias and can have heterogeneous quality parameters 
depending on the correct answer to the task, 
spectral methods using low-rank matrix approximations nicely generalize 
to give an algorithmic solution. 
Also, it would be interesting to find algorithmic solutions with performance guarantees 
for the generic model where tasks difficulties are taken into account. 

{\em 2. Improving the constant.} We prove our approach is minimax optimal up to a constant factor. 
However, there might be another algorithm with better constant factor than our inference algorithm. 
Some modification of the expectation maximization or the belief propagation 
might achieve a better constant 
compared to our inference algorithm. 
It is an interesting research direction to find such an algorithm and give 
an upper bound on the error probability that is smaller than what we have in our main theorem. 

{\em 3. Instance-optimality.} The optimality of our approach is proved under the worst-case worker distribution. 
However, it is not known whether our approach is instance-optimal or not under the non-adaptive scenario. 
It would be important to prove lower bounds for all worker distributions or 
to find a counter example where another algorithm achieves 
a strictly better performance for a particular worker distribution 
in terms of the scaling of the required budget. 

{\em 4. Phase transition.} 
We empirically observe that there is a phase transition around $\hl\hr\quality^2=1$. 
Below this, no algorithm can do better than majority voting. 
This phase transition seems to be an algorithm-independent and fundamental property of the problem 
(and the random graph). 
It might be possible to formally prove the fundamental difference in the way information propagates 
under the crowdsourcing model. 
Such phase transition has been studied for a simpler model of broadcasting on trees 
in information theory and statistical mechanics  \cite{EKPS00}. 

%
%
\bibliographystyle{amsalpha}

\bibliography{mturk}

\end{document}